\documentclass[journal]{IEEEtran}

\newtheorem{theorem}{Theorem}[section]
\newtheorem{cor}[theorem]{Corollary}
\newtheorem{lem}[theorem]{Lemma}
\newtheorem{prop}[theorem]{Proposition}
\newtheorem{as}[theorem]{Assumption}
\newtheorem{problem}{Problem}
\newtheorem{definition}[theorem]{Definition}
\newtheorem{rem}[theorem]{Remark}

\newcommand{\set}[1]{\left\{#1\right\}}

\IEEEoverridecommandlockouts
\overrideIEEEmargins

\usepackage{graphicx} 
\usepackage{epsfig} 
\usepackage{amsmath}
\usepackage{amssymb}
\usepackage{epstopdf}
\usepackage{cite}
\usepackage[noend,ruled,linesnumbered]{algorithm2e}
\SetKwComment{Comment}{$\triangleright$\ } {}
\usepackage{multirow}
\usepackage{rotating}
\usepackage{subfigure}
\usepackage{color}
\usepackage{mysymbol}
\usepackage{url}
\usepackage{ltlfonts}	

\begin{document}

\title{Sampling-Based Optimal Control Synthesis for Multi-Robot Systems under Global Temporal Tasks}

\author{Yiannis~Kantaros,~\IEEEmembership{Student Member,~IEEE,} and Michael~M.~Zavlanos,~\IEEEmembership{Member,~IEEE}
\thanks{Yiannis Kantaros and Michael M. Zavlanos are with the Department of Mechanical Engineering and Materials Science, Duke University, Durham, NC 27708, USA. $\left\{\text{yiannis.kantaros,michael.zavlanos}\right\}$@duke.edu. 
This work is supported in part by NSF under grant IIS $\#1302283$ and by ONR under grant $\#N000141812374$.}
}
\maketitle
\begin{abstract}
This paper proposes a new optimal control synthesis algorithm for
multi-robot systems under global temporal logic tasks. 
Existing planning approaches under global temporal goals rely on graph
search techniques applied to a product automaton constructed among the
robots. In this paper, we propose a new sampling-based algorithm that
builds incrementally trees that approximate the state-space and
transitions of the synchronous product automaton. By approximating the
product automaton by a tree rather than representing it explicitly, we
require much fewer memory resources to store it and motion plans can be
found by tracing sequences of parent nodes without the need for
sophisticated graph search methods. This significantly increases the
scalability of our algorithm compared to existing optimal control synthesis
methods. We also show that the proposed algorithm is probabilistically
complete and asymptotically optimal. Finally, we present numerical
experiments showing that our approach can synthesize
optimal plans from product automata with billions of states, which is
not possible using standard optimal control synthesis algorithms or
model checkers.
\end{abstract}
\begin{IEEEkeywords} 
Temporal logic planning, optimal control synthesis, sampling-based motion planning, multi-robot systems.
\end{IEEEkeywords}

\section{Introduction}\label{sec:introduction}
\IEEEPARstart{R}{obot} motion planning is a fundamental problem that has received considerable research attraction \cite{lavalle2006planning}. The basic motion planning problem consists of generating robot trajectories that reach a desired goal region starting from an initial configuration while avoiding obstacles. More recently, a new class of planning approaches have been developed that can handle a richer class of tasks than the classical point-to-point navigation, and can capture temporal and boolean specifications. Such tasks can be, e.g., sequencing or coverage \cite{fainekos2005temporal}, data gathering \cite{guo2017distributed}, intermittent communication \cite{kantaros2016distributedInterm}, or persistent surveillance \cite{leahy2016persistent}, and can be modeled using formal languages, such as Linear Temporal Logic (LTL) \cite{baier2008principles,clarke1999model}, that are developed in concurrency theory. Given a task described by a formal language, model checking algorithms can be employed to synthesize correct-by-construction controllers that satisfy the assigned tasks.

Control synthesis for mobile robots under complex tasks, captured by Linear Temporal Logic (LTL) formulas, build upon either bottom-up approaches when independent LTL expressions are assigned to robots \cite{kress2009temporal,kress2007s,bhatia2010sampling} or top-down approaches when a global LTL formula describing a collaborative task is assigned to a team of robots \cite{chen2011synthesis,chen2012formal}, as in this work. Common in the above works is that they rely on model checking theory \cite{baier2008principles,clarke1999model} to find paths that satisfy LTL-specified tasks, without optimizing task performance. Optimal control synthesis under local and global LTL specifications has been addressed in \cite{smith2011optimal,guo2015multi} and \cite{kloetzer2010automatic,ulusoy2013optimality,ulusoy2014optimal}, respectively. In top-down approaches \cite{kloetzer2010automatic,ulusoy2013optimality,ulusoy2014optimal}, optimal discrete plans are derived for every robot using the individual transition systems that capture robot mobility and a Non-deterministic B$\ddot{\text{u}}$chi Automaton (NBA) that represents the global LTL specification. Specifically, by taking the synchronous product among the transition systems and the NBA, a synchronous product automaton can be constructed. Then, representing the latter automaton as a graph and using graph-search techniques, optimal motion plans can be derived that satisfy the global LTL specification and optimize a cost function. As the number of robots or the size of the NBA increases, the state-space of the product automaton grows exponentially and, as a result, graph-search techniques become intractable. Consequently, these motion planning algorithms scale poorly with the number of robots and the complexity of the assigned task. 

To mitigate these issues, we propose an optimal control synthesis algorithm for multi-robot systems under global temporal specifications that avoids the explicit construction of the product among the transition systems and the NBA. Motivated by the $\text{RRT}^*$ algorithm \cite{karaman2011sampling}, we build incrementally through a B$\ddot{\text{u}}$chi-guided sampling-based algorithm directed trees that approximately represent the state-space and transitions among states of the synchronous product automaton. Specifically, first a tree is built  until a path from an initial to an \textit{accepting} state is constructed. This path corresponds to the prefix part of the motion plan and is executed once. Then, a new tree rooted at an accepting state is constructed in a similar way until a cycle-detection method discovers a loop around the root. This cyclic path corresponds to the suffix part of the motion plan and is executed indefinitely. The advantage of the proposed method is that approximating the product automaton by a tree rather than representing it explicitly by an arbitrary graph, as existing works do, results in significant savings in resources both in terms of memory to save the associated data structures and in terms of computational cost in applying graph search techniques. In this way, our proposed model-checking algorithm scales much better compared to existing top-down approaches. Also, we show that the proposed LTL-based planning algorithm is probabilistically complete and asymptotically optimal. We present numerical simulations that show that the proposed approach can synthesize optimal motion plans from product automata with billions of states, which is impossible using existing optimal control synthesis algorithms or the off-the-shelf symbolic model checkers PRISM \cite{kwiatkowska2002prism} and NuSMV \cite{cimatti2002nusmv}. 

To the best of our knowledge, the most relevant works are presented in \cite{karaman2012sampling,vasile2013sampling,kantaros15asilomar,kantaros2017sampling}. In \cite{karaman2012sampling}, a sampling-based algorithm is proposed which builds incrementally a Kripke structure until it is expressive enough to generate a motion plan that satisfies a task specification expressed in deterministic $\mu$-calculus. Specifically, in \cite{karaman2012sampling}, since control synthesis from $\omega$-regular languages requires cyclic patterns, an RRG-like algorithm is employed to construct a Kripke structure. However, building arbitrary graph structures to represent transition systems, compromises scalability of temporal  logic planning methods since, as the number of samples increases, so does the density of the constructed graph increasing in this way the required resources to save the associated structure and search for optimal plans using graph search methods. Therefore, the algorithm proposed in \cite{karaman2012sampling} can be typically used only for single-robot motion planning problems in simple environments and for LTL tasks associated with small NBA. Motivated by this limitation, in \cite{vasile2013sampling}, a sampling-based temporal logic path planning algorithm is proposed, that also builds upon the RRG algorithm, but constructs incrementally sparse graphs representing transition systems that are then used to construct a product automaton. Then, correct-by-construction discrete plans are synthesized applying graph search methods on the product automaton. However, similar to \cite{karaman2012sampling}, as the number of samples increases, the sparsity of the constructed graph is lost and as a result this method does not scale well to large planning problems either. Common in the works \cite{karaman2012sampling,vasile2013sampling} is that a discrete abstraction of the environment is built until it becomes expressive enough to generate a motion plan that satisfies the LTL specification. To the contrary, our proposed sampling-based approach, given a discrete abstraction of the environment \cite{conner2003composition,belta2004constructing,belta2005discrete,kloetzer2006reachability,boskos2015decentralized}, builds trees, instead of arbitrary graphs, to approximate the product automaton. Therefore, it is more economical in terms of memory requirements and does not require the application of expensive graph search techniques to find the optimal motion plan, but instead it tracks sequences of parent nodes starting from desired accepting states. In this way, we can handle more complex planning problems with more robots and LTL tasks that correspond to larger NBA, compared to the ones that can be solved using the approach in \cite{vasile2013sampling}. Moreover, we show that our proposed planning algorithm is asymptotically optimal which is not the case in \cite{vasile2013sampling}. 

In our previous work \cite{kantaros15asilomar,kantaros2017sampling}, we have proposed sampling-based planning algorithms for multi-robot systems under global temporal logic tasks. Specifically, \cite{kantaros15asilomar} transforms given transition systems that abstract robot mobility into trace-included transition systems with smaller state-spaces that are still rich enough to construct motion plans that satisfy the global LTL specification. However, this algorithm does not scale well with the number of robots, since it relies on the construction of a product automaton among all agents. A more tractable and memory-efficient approach is proposed in \cite{kantaros2017sampling} that builds trees incrementally, similar to the approach proposed here, that approximate the product automaton. Here, we extend the work in \cite{kantaros2017sampling} by improving the sampling process of the algorithm so that samples are drawn among the sets of nodes that are reachable from the current tree rather than drawn arbitrarily from the state-space of the product automaton, as in \cite{kantaros2017sampling}. Since the state-space of the product automaton can be arbitrarily large drawing samples randomly can result in very slow growth of the tree, since these samples are not necessarily reachable from the current tree, as we show through numerical experiments. As in \cite{kantaros2017sampling}, here too we show that the proposed algorithm is probabilistically complete and asymptotically optimal. Nevertheless, the completeness and optimality proofs are entirely different due to the different sampling process. To the best of our knowledge, this is the first optimal control synthesis algorithm for global temporal task specifications that is probabilistically complete, asymptotically optimal, and scalable, as it is resource efficient both in terms of memory requirements and computational time.

The rest of the paper is organized as follows. In Section \ref{sec:prel}, we provide a brief overview of LTL and in Section \ref{sec:problem} we present the problem formulation. In Section \ref{sec:solution} we describe our proposed sampling-based planning algorithm and we examine its correctness and optimality in Section \ref{sec:corr}. Numerical experiments are presented in Section \ref{sec:sim}.


\section{Preliminaries}\label{sec:prel} 
In this section we formally describe Linear Temporal Logic (LTL) by presenting its syntax and semantics. Also, we briefly review preliminaries of automata-based LTL model checking. A detailed overview of this theory can be found in \cite{baier2008principles}.

Linear temporal logic is a type of formal logic whose basic ingredients are a set of atomic propositions $\mathcal{AP}$, the boolean operators, i.e., conjunction $\wedge$, and negation $\neg$, and two temporal operators, next $\bigcirc$ and until $\mathcal{U}$. LTL formulas over a set $\mathcal{AP}$ can be constructed based on the following grammar: $\phi::=\text{true}~|~\pi~|~\phi_1\wedge\phi_2~|~\neg\phi~|~\bigcirc\phi~|~\phi_1~\mathcal{U}~\phi_2$, where $\pi\in\mathcal{AP}$. For the sake of brevity we abstain from presenting the derivations of other Boolean and temporal operators, e.g., \textit{always} $\square$, \textit{eventually} $\lozenge$, \textit{implication} $\Rightarrow$, which can be found in \cite{baier2008principles}. 

An infinite \textit{word} $\sigma$ over the alphabet $2^{\mathcal{AP}}$ is defined as an infinite sequence  $\sigma=\pi_0\pi_1\pi_2\dots\in (2^{\mathcal{AP}})^{\omega}$, where $\omega$ denotes an infinite repetition and $\pi_k\in2^{\mathcal{AP}}$, $\forall k\in\mathbb{N}$. The language $\texttt{Words}(\phi)=\left\{\sigma\in (2^{\mathcal{AP}})^{\omega}|\sigma\models\phi\right\}$ is defined as the set of words that satisfy the LTL formula $\phi$, where $\models\subseteq (2^{\mathcal{AP}})^{\omega}\times\phi$ is the satisfaction relation.

Any LTL formula $\phi$ can be translated into a Nondeterministic B$\ddot{\text{u}}$chi Automaton (NBA) over $(2^{\mathcal{AP}})^{\omega}$ denoted by $B$ \cite{vardi1986automata}, which is defined as follows:
\begin{definition}
A \textit{Nondeterministic B$\ddot{\text{u}}$chi Automaton} (NBA) $B$ over $2^{\mathcal{AP}}$ is defined as a tuple $B=\left(\ccalQ_{B}, \ccalQ_{B}^0,\Sigma,\rightarrow_B,\mathcal{Q}_B^F\right)$, where $\ccalQ_{B}$ is the set of states, $\ccalQ_{B}^0\subseteq\ccalQ_{B}$ is a set of initial states, $\Sigma=2^{\mathcal{AP}}$ is an alphabet, $\rightarrow_{B}\subseteq\ccalQ_{B}\times \Sigma\times\ccalQ_{B}$ is the transition relation, and $\ccalQ_B^F\subseteq\ccalQ_{B}$ is a set of accepting/final states. 
\end{definition}

An \textit{infinite run} $\rho_B$ of $B$ over an infinite word $\sigma=\pi_0\pi_1\pi_2\dots$, where $\pi_k\in\Sigma$, for all $ k\in\mathbb{N}$, is a sequence $\rho_B=q_B^0q_B^1q_B^2\dots$ such that $q_B^0\in\ccalQ_B^0$ and $(q_B^{k},\pi_k,q_B^{k+1})\in\rightarrow_{B}$, $\forall k\in\mathbb{N}$. An infinite run $\rho_B$ is called \textit{accepting} if $\texttt{Inf}(\rho_B)\cap\ccalQ_B^F\neq\varnothing$, where $\texttt{Inf}(\rho_B)$ represents the set of states that appear in $\rho_B$ infinitely often. The words $\sigma$ that result in an accepting run of $B$ constitute the accepted language of $B$, denoted by $\ccalL_B$. It is proven \cite{baier2008principles} that the accepted language of $B$ is equivalent to the words of $\phi$, i.e., $\ccalL_B=\texttt{Words}(\phi)$. 

\section{Problem Formulation}\label{sec:problem}

Consider $N$ mobile robots that evolve in a complex workspace $\mathcal{W}\subset\mathbb{R}^d$ according to the following dynamics: $\dot{\bf{x}}_i(t)=f_i({\bf{x}}_i(t),{\bf{u}}_i(t))$,
where ${\bf{x}}_i(t)$ and ${\bf{u}}_i(t)$ are the position and the control input associated with robot $i$, $i\in\{1,\dots,N\}$. We assume that there are $W$ disjoint regions of interest in $\mathcal{W}$ that are worth investigation or surveillance. The $j$-th region is denoted by $\ell_j$ and it can be of any arbitrary shape. Given the robot dynamics, robot mobility in the workspace $\ccalW$ can be represented by a weighted transition system (wTS) obtained through an abstraction process \cite{conner2003composition,belta2004constructing,belta2005discrete,kloetzer2006reachability,boskos2015decentralized}.The wTS for robot $i$ is defined as follows
\begin{definition}[wTS]
A \textit{weighted Transition System} (wTS) for robot $i$, denoted by $\text{wTS}_{i}$ is a tuple $\text{wTS}_{i}=\left(\mathcal{Q}_{i}, q_{i}^0,\rightarrow_{i}, w_i, \mathcal{AP}_i,L_{i}\right)$ where: 
(a) $\mathcal{Q}_{i}=\bigcup_{j=1}^W\{q_{i}^{\ell_j}\}$ is the set of states, where a state $q_{i}^{\ell_j}$ indicates that robot $i$ is at location $\ell_j$; (b) $q_{i}^0\in\mathcal{Q}_{i}$ is the initial state of robot $i$; $\rightarrow_{i}\subseteq\mathcal{Q}_{i}\times\mathcal{Q}_{i}$ is the transition relation for robot $i$. Given the robot dynamics, if there is a control input ${\bf{u}}_i$ that can drive robot $i$ from location $\ell_j$ to $\ell_e$, then there is a transition from state $q_i^{\ell_j}$ to $q_i^{\ell_e}$ denoted by $(q_i^{\ell_j}, q_i^{\ell_e})\in\rightarrow_i$; (c) $w_{i}:\mathcal{Q}_{i}\times\mathcal{Q}_{i}\rightarrow \mathbb{R}_+$\footnote{$\mathbb{R}_+$ and $\mathbb{N}_+$ stand for the positive real and natural numbers, respectively.} is a cost function that assigns weights/cost to each possible transition in wTS. For example, such costs can be associated with the distance that needs to be traveled by robot $i$ in order to move from state $q_i^{\ell_j}$ to state $q_i^{\ell_k}$; 
(d) $\mathcal{AP}_i=\bigcup_{j=1}^W\{\pi_{i}^{\ell_j}\}$ is the set of atomic propositions, where $\pi_{i}^{\ell_j}$ is true if robot $i$ is inside region $\ell_j$ and false otherwise; and (e) $L_{i}:\mathcal{Q}_{i}\rightarrow 2^{\mathcal{AP}_i}$ is an observation/output relation giving the set of atomic propositions that are satisfied in a state, i.e., $L_i(q_i^{\ell_j})=\pi_i^{\ell_j}$. 
\label{defn:wTS}
\end{definition} 


Given the definition of the wTS, we can define the synchronous \textit{Product Transition System} (PTS), which captures all the possible combinations of robots' states in their respective $\text{wTS}_{i}$, as follows \cite{baier2008principles}:

\begin{definition}[PTS]
Given $N$ transition systems $\text{wTS}_i=(\mathcal{Q}_{i}, q_{i}^0, \rightarrow_{i},w_i,\mathcal{AP}_i, L_{i})$, the \textit{product transition system} $\text{PTS}=\text{wTS}_{1}\otimes\text{wTS}_{2}\otimes\dots\otimes\text{wTS}_{N}$ is a tuple $\text{PTS}=(\mathcal{Q}_{\text{PTS}}, q_{\text{PTS}}^0,\longrightarrow_{\text{PTS}},w_{\text{PTS}},\mathcal{AP},L_{\text{PTS}})$ where
(a) $\mathcal{Q}_{\text{PTS}}=\mathcal{Q}_{1}\times\mathcal{Q}_{2}\times\dots\times\mathcal{Q}_{N}$ is the set of states;
(b) $q_{\text{PTS}}^0=(q_{1}^0,q_{2}^0,\dots,q_{N}^0)\in\mathcal{Q}_{\text{PTS}}$ is the initial state,
(c) $\longrightarrow_{\text{PTS}}\subseteq\mathcal{Q}_{\text{PTS}}\times\mathcal{Q}_{\text{PTS}}$ is the transition relation defined by the rule\footnote{The notation of this rule is along the lines of the notation used in \cite{baier2008principles}. In particular, it means that if the proposition above the solid line is true, then so does the proposition below the solid line.} $\frac{\bigwedge _{\forall i}\left(q_{i}\rightarrow_{i}q_{i}'\right)}{q_{\text{PTS}}\rightarrow_{\text{PTS}}q_{\text{PTS}}'}$, where with slight abuse of notation $q_{\text{PTS}}=(q_{1},\dots,q_{N})\in\mathcal{Q}_{\text{PTS}}$, $q_{i}\in\mathcal{Q}_{i}$. The state $q_{\text{PTS}}'$ is defined accordingly. In words, this transition rule says that there exists a transition from $q_{\text{PTS}}$ to $q_{\text{PTS}}'$ if there exists a transition from $q_i$ to $q_i'$ for all $i\in\left\{1,\dots,N\right\}$; (d) $w_{\text{PTS}}:\mathcal{Q}_{\text{PTS}}\times\mathcal{Q}_{\text{PTS}}\rightarrow \mathbb{R}_+$ is a cost function that assigns weights/cost to each possible transition in PTS, defined as $	w_{\text{PTS}}(q_{\text{PTS}},q_{\text{PTS}}')=\sum_{i=1}^N w_i(\Pi|_{\text{wTS}_{i}}q_{\text{PTS}},\Pi|_{\text{wTS}_{i}}q_{\text{PTS}}^{'})\geq0$, 
where $q_{\text{PTS}}',q_{\text{PTS}}\in\mathcal{Q}_{\text{PTS}}$, and $\Pi|_{\text{wTS}_i}q_{\text{PTS}}$ stands for the projection of state $q_{\text{PTS}}$ onto the state space of $\text{wTS}_i$. The state $\Pi|_{\text{wTS}_i}q_{\text{PTS}}\in\mathcal{Q}_i$ is obtained by removing all states in $q_{\text{PTS}}$ that do not belong to $\mathcal{Q}_i$; (e) $\mathcal{AP}=\bigcup_{i=1}^N\mathcal{AP}_i$ is the set of atomic propositions; and, (f) $L_{\text{PTS}}=\bigcup_{\forall i}L_{i}: \mathcal{Q}_{\text{PTS}}\rightarrow2^{\mathcal{AP}}$ is an observation/output relation giving the set of atomic propositions that are satisfied at a state $q_{\text{PTS}}\in\mathcal{Q}_{\text{PTS}}$. 
\label{def:pts}
\end{definition} 

In what follows, we give definitions related to the $\text{PTS}$, that we will use throughout the rest of the paper. An \textit{infinite path} $\tau$ of a $\text{PTS}$ is an infinite sequence of states, $\tau=\tau(1)\tau(2)\tau(3)\dots$ such that $\tau(1)=q_{\text{PTS}}^0$, $\tau(k)\in\mathcal{Q}_{\text{PTS}}$, and $(\tau(k),\tau(k+1))\in\rightarrow_{\text{PTS}}$, $\forall k\in\mathbb{N}_+$, where $k$ is an index that points to the $k$-th entry of $\tau$ denoted by $\tau(k)$.
A \textit{finite path} of a $\text{PTS}$ can be defined accordingly. The only difference with the infinite path is that a finite path is defined as a finite sequence of states of a $\text{PTS}$. Given the definition of the weights $w_{\text{PTS}}$ in Definition \ref{def:pts}, the \textit{cost} of a finite path $\tau$, denoted by $\hat{J}(\tau)$, can be defined as
\begin{equation}\label{eq:cost}
\hat{J}(\tau)=\sum_{k=1}^{|\tau|-1}w_{\text{PTS}}(\tau(k),\tau(k+1)).
\end{equation} 
In \eqref{eq:cost}, $|\tau|$ stands for the number of states in $\tau$. In words, the cost \eqref{eq:cost} captures the total cost incurred by all robots during the execution of the finite path $\tau$. 
%
%
Notice that the cost function $\hat{J}(\cdot)$ is additive, i.e., $\hat{J}(\tau_1|\tau_2)=\hat{J}(\tau_1)+\hat{J}(\tau_2)$, where $\tau_1$ and $\tau_2$ are two finite paths of the PTS so that there is a feasible transition from the last state in $\tau_1$ to the first state in $\tau_2$, according to $\rightarrow_{\text{PTS}}$, and $|$ stands for concatenation. Also, since $\hat{J}(\cdot)$ is additive and $w_{\text{PTS}}(q_{\text{PTS}},q_{\text{PTS}}')\geq0$ by Definition \ref{def:pts}, we get that $\hat{J}(\cdot)$ is monotone i.e., $\hat{J}(\tau_1)\leq \hat{J}(\tau_1|\tau_2)$.


The \textit{trace} of an infinite path $\tau=\tau(1)\tau(2)\tau(3)\dots$ of a PTS, denoted by $\texttt{trace}(\tau)\in\left(2^{\mathcal{AP}}\right)^{\omega}$, where $\omega$ denotes infinite repetition, is an infinite word that is determined by the sequence of atomic propositions that are true in the states along $\tau$, i.e., $\texttt{trace}(\tau)=L(\tau(1))L(\tau(2))\dots$. 

Given the $\text{PTS}$ and the NBA $B$ that corresponds to the LTL $\phi$, we can now define the \textit{Product B$\ddot{\text{u}}$chi Automaton} (PBA) $P=\text{PTS}\otimes B$ \cite{baier2008principles}, as follows:

\begin{definition}[PBA]\label{defn:pba}
Given the product transition system $\text{PTS}=(\mathcal{Q}_{\text{PTS}},\allowbreak q_{\text{PTS}}^0,\longrightarrow_{\text{PTS}},w_{\text{PTS}},\mathcal{AP},L_{\text{PTS}})$ and the NBA $B=(\mathcal{Q}_{B}, \mathcal{Q}_{B}^0,\Sigma,\rightarrow_B,\mathcal{Q}_{B}^{F})$, we can define the \textit{Product B$\ddot{\text{u}}$chi Automaton} $P=\text{PTS}\otimes B$ as a tuple $P=(\mathcal{Q}_P, \mathcal{Q}_P^0,\longrightarrow_{P},w_P,\mathcal{Q}_P^F)$ where
(a) $\mathcal{Q}_P=\mathcal{Q}_{\text{PTS}}\times\mathcal{Q}_{B}$ is the set of states; (b) $\mathcal{Q}_P^0=q_{\text{PTS}}^0\times\mathcal{Q}_B^0$ is a set of initial states;
(c) $\longrightarrow_{P}\subseteq\mathcal{Q}_P\times 2^{\mathcal{AP}}\times\mathcal{Q}_P$ is the transition relation defined by the rule: $\frac{\left(q_{\text{PTS}}\rightarrow_{\text{PTS}} q_{\text{PTS}}'\right)\wedge\left( q_{B}\xrightarrow{L_{\text{PTS}}\left(q_{\text{PTS}}\right)}q_{B}'\right)}{q_{P}=\left(q_{\text{PTS}},q_{B}\right)\longrightarrow_P q_{P}'=\left(q_{\text{PTS}}',q_{B}'\right)}$. Transition from state $q_P\in\mathcal{Q}_P$ to $q_P'\in\mathcal{Q}_P$, is denoted by $(q_P,q_P')\in\longrightarrow_P$, or $q_P\longrightarrow_P q_P'$; (d) $w_P(q_{\text{P}},q_{\text{P}}')=w_{\text{PTS}}(q_{\text{PTS}},q_{\text{PTS}}')\geq0$,  where $q_{\text{P}}=(q_{\text{PTS}},q_B)$ and $q_{\text{P}}'=(q_{\text{PTS}}',q_B')$; and
(e) $\mathcal{Q}_P^F=\mathcal{Q}_{\text{PTS}}\times\mathcal{Q}_B^F$ is a set of accepting/final states. 
\end{definition} 

In what follows, we assume that the robots have to accomplish a complex collaborative task captured by a global LTL statement $\phi$ defined over the set of atomic propositions $\mathcal{AP}=\bigcup_{i=1}^N\mathcal{AP}_i$. Given such an LTL formula $\phi$, we define the \textit{language} $\texttt{Words}(\phi)=\left\{\sigma\in (2^{\mathcal{AP}})^{\omega}|\sigma\models\phi\right\}$, where $\models\subseteq (2^{\mathcal{AP}})^{\omega}\times\phi_i$ is the satisfaction relation, as the set of infinite words $\sigma\in (2^{\mathcal{AP}})^{\omega}$ that satisfy the LTL formula $\phi$. Given such a global LTL formula $\phi$ and the PBA an infinite path $\tau$ of a $\text{PTS}$ satisfies $\phi$ if and only if $\texttt{trace}(\tau)\in\texttt{Words}(\phi)$, which is equivalently denoted by $\tau\models\phi$. 

Given an LTL formula $\phi$, if there is a path satisfying $\phi$, then there exists a path  $\tau\models\phi$ that can be written in a finite representation, called prefix-suffix structure, i.e., $\tau=\tau^{\text{pre}}[\tau^{\text{suf}}]^{\omega}$, where the prefix part $\tau^{\text{pre}}$ is executed only once followed by the indefinite execution of the suffix part $\tau^{\text{suf}}$ \cite{guo2013revising,guo2015multi}. The prefix part $\tau^{\text{pre}}$ is the projection of a finite path of the PBA, i.e., a finite sequence of states of the PBA, denoted by $p^{\text{pre}}$, onto $\ccalQ_{\text{PTS}}$, which has the following structure $p^{\text{pre}}=(q_{\text{PTS}}^0,q_B^0)(q_{\text{PTS}}^1,q_B^1)\dots (q_{\text{PTS}}^K,q_B^K)$ with $(q_{\text{PTS}}^K,q_B^K)\in\ccalQ_B^F$. The suffix part $\tau^{\text{suf}}$ is the projection of a finite path of the PBA, denoted by $p^{\text{suf}}$, onto $\ccalQ_{\text{PTS}}$, which has the following structure $p^{\text{suf}}=(q_{\text{PTS}}^{K},q_B^K)(q_{\text{PTS}}^{K+1},q_B^{K+1})\dots (q_{\text{PTS}}^{K+S},q_B^{K+S})(q_{\text{PTS}}^{K+S+1},q_B^{K+S+1})$, where $(q_{\text{PTS}}^{K+S+1},q_B^{K+S+1})=(q_{\text{PTS}}^{K},q_B^{K})$. Then our goal is to compute a plan $\tau=\tau^{\text{pre}}[\tau^{\text{suf}}]^{\omega}=\Pi|_{\text{PTS}}p^{\text{pre}}[\Pi|_{\text{PTS}}p^{\text{suf}}]^{\omega}$, where $\Pi|_{\text{PTS}}$ stands for the projection on the state-space $\ccalQ_{\text{PTS}}$, so that the following objective function is minimized
\begin{align}\label{eq:cost2}
J(\tau)=\hat{J}(\tau^{\text{pre}})+\hat{J}(\tau^{\text{suf}}),
\end{align}
which captures the total cost incurred by all robots during the execution of the prefix and a single execution of the suffix part. In \eqref{eq:cost2}, $\hat{J}(\tau^{\text{pre}})$ and $\hat{J}(\tau^{\text{suf}})$ stands for the cost of the prefix and suffix part, where the cost function $\hat{J}(\cdot)$ is defined in \eqref{eq:cost}, i.e., $\hat{J}(\tau^{\text{pre}})=\sum_{k=1}^{K-1}w_{\text{PTS}}(\Pi|_{\text{PTS}}p^{\text{pre}}(k),\Pi|_{\text{PTS}}p^{\text{pre}}(k+1)),~
\hat{J}(\tau^{\text{suf}})=\sum_{k=K}^{K+S}w_{\text{PTS}}(\Pi|_{\text{PTS}}p^{\text{suf}}(k)\Pi|_{\text{PTS}}p^{\text{suf}}(k+1))$. Specifically, in this paper we address the following problem.

\begin{problem}\label{pr:problem}
Given a global LTL specification $\phi$, and transition systems $\text{wTS}_i$, for all robots $i$, determine a discrete team plan $\tau$ that satisfies $\phi$, i.e., $\tau\models\phi$, and minimizes the cost function \eqref{eq:cost2}.
\end{problem}

\subsection{A Solution to Problem \ref{pr:problem}}\label{sec:prelim}
Problem \ref{pr:problem} is typically solved by applying graph-search methods to the PBA, see e.g., \cite{guo2013revising,guo2015multi}. Specifically, to generate a motion plan $\tau$ that satisfies $\phi$, the PBA is viewed as a weighted directed graph $\mathcal{G}_P=\{\mathcal{V}_P, \mathcal{E}_P, w_P\}$, where the set of nodes $\mathcal{V}_P$ is indexed by the set of states $\mathcal{Q}_P$, the set of edges $\mathcal{E}_P$ is determined by the transition relation $\longrightarrow_P$, and the weights assigned to each edge are determined by the function $w_P$. Then we find the shortest paths from the initial states to all reachable final/accepting states $q_P\in\mathcal{Q}_P^F$ and projecting these paths onto the $\text{PTS}$ results in the prefix parts $\tau^{\text{pre},a}$, where $a\in\{1,\dots,|\mathcal{Q}_P^F|\}$. The respective suffix parts $\tau^{\text{suf},a}$ are constructed similarly by computing the shortest cycle around the $a$-th accepting state. All the resulting motion plans $\tau^a=\tau^{\text{pre},a}[\tau^{\text{suf},a}]^{\omega}$ satisfy the LTL specification $\phi$. Among all these plans, we can easily compute the optimal plan that minimizes the cost function defined in \eqref{eq:cost2} by computing the cost $J(\tau^a)$ for all plans and selecting the one with the smallest cost; see e.g. \cite{guo2015multi,ulusoy2013optimality,ulusoy2014optimal}. 

\section{Sampling-based Optimal Control Synthesis}\label{sec:solution}
Since the size of the PBA can grow arbitrarily large with the number of robots and the complexity of the task, constructing this PBA and applying graph-search techniques to find optimal plans, as discussed in Section \ref{sec:prelim}, is resource demanding and computationally expensive. In this section, we propose a sampling-based planning algorithm that is scalable and constructs a discrete motion plan $\tau$ in prefix-suffix structure, i.e., $\tau=\tau^{\text{pre}}[\tau^{\text{suf}}]^{\omega}$, that satisfies a given global LTL specification $\phi$. The procedure is based on the incremental construction of a directed tree that approximately represents the state-space $\mathcal{Q}_P$ and the transition relation $\rightarrow_P$ of the PBA defined in Definition \ref{defn:pba}. The construction of the prefix and the suffix part is described in Algorithm \ref{alg:plans}. In Algorithm \ref{alg:plans}, first the LTL formula is translated to a NBA $B=\{\ccalQ_B,\ccalQ_B^0,\rightarrow_B,\ccalQ_B^F\}$ [line \ref{alg1:line1}, Alg. \ref{alg:plans}]. Then, in lines \ref{alg1:line2}-\ref{alg1:line6}, the prefix parts $\tau^{\text{pre},a}$ are constructed, followed by the construction of their respective suffix parts $\tau^{\text{suf},a}$ in lines \ref{alg1:line7}-\ref{alg1:line15}. Finally, using the constructed prefix and suffix parts, the optimal plan $\tau=\tau^{\text{pre},a^{*}}[\tau^{\text{suf},a^{*}}]^{\omega}\models\phi$ is synthesized in lines \ref{alg1:line15a}-\ref{alg1:line16}.

\begin{algorithm}[t]
\caption{Construction of Optimal plans $\tau\models\phi$}
\LinesNumbered
\label{alg:plans}
\KwIn {Logic formula $\phi$, Transition systems $\text{wTS}_1,\dots,\text{wTS}_N$, Initial location $q_{\text{PTS}}^0\in\ccalQ_{PTS}$, maximum numbers of iterations $n_{\text{max}}^{\text{pre}}$, $n_{\text{max}}^{\text{suf}}$}
\KwOut {Optimal plans $\tau\models\phi$}
Convert $\phi$ to a NBA $B=\left(\ccalQ_B,\ccalQ_B^0,\rightarrow_B,\ccalQ_B^F\right)$\;\label{alg1:line1} 
Define goal set: $\mathcal{X}_{\text{goal}}^{\text{pre}}$\;\label{alg1:line2} 
\For{$b_0=1:|\ccalQ_B^0|$}{\label{alg1:line2a} 
Initial NBA state: $q_B^0=\ccalQ_B^0(b_0)$\;\label{alg1:initNBA}
Root of the tree: $q_P^r=(q_\text{PTS}^0,q_B^0)$\;\label{alg1:line3}
\footnotesize{ 
$\left[\ccalG_T,\ccalP\right]=\texttt{ConstructTree}(\mathcal{X}_{\text{goal}}^{\text{pre}},\text{wTS}_1,\dots,\text{wTS}_N,B,q_P^r,n_{\text{max}}^{\text{pre}})$\;\label{alg1:line4}}
\normalsize
\For {$a=1:|\ccalP|$}{\label{alg1:line5} 
$\tau^{\text{pre},a}=\texttt{FindPath}(\mathcal{G}_T,q_P^r,\mathcal{P}(a))$\;}\label{alg1:line6} 
%
\For {$a=1:|\ccalP|$ }{\label{alg1:line7}
Root of the tree: $q_P^r=\ccalP(a)$\;\label{alg1:line8} 
Define goal set: $\mathcal{X}_{\text{goal}}^{\text{suf}}(q_P^r)$\;\label{alg1:line9}
\If{$(q_P^r\in\ccalX_{\text{goal}}^{\text{suf}}) ~\wedge~(w_P(q_P^r,q_P^r)=0)$}{\label{alg1:line9a}
$\ccalG_T=(\{q_P^r\},\{q_P^r,q_P^r\},0)$\;\label{alg1:line9b}
$\ccalS_a=\{q_P^r\}$\;}\label{alg1:line9c}
\Else{\label{alg1:line9d} 
\footnotesize{
$\left[\ccalG_T,\ccalS_a\right]=\texttt{ConstructTree}(\mathcal{X}_{\text{goal}}^{\text{suf}},\text{wTS}_1,\dots,\text{wTS}_N,B,q_P^r,n_{\text{max}}^{\text{suf}})$\;}\label{alg1:line10}}
\normalsize 
\For {$e=1:|\ccalS_a|$}{\label{alg1:line11} 
$\tilde{\tau}^{\text{suf},e}=\texttt{FindPath}(\mathcal{G}_T,q_P^r,\mathcal{S}_f(e))$\;\label{alg1:line12} }
$e^{*}=\argmin_e(\hat{J}(\tilde{\tau}^{\text{suf},e}))$\;\label{alg1:line13} 
$\tau^{\text{suf},a}=\tilde{\tau}^{\text{suf},e^{*}}$}\label{alg1:line14}
$a_{q_B^0}=\argmin_a(\hat{J}(\tau^{\text{pre},a})+\hat{J}(\tau^{\text{suf},a}))$\;\label{alg1:line15}} 
$a^*=\argmin_{a_{q_B^0}}(\hat{J}(\tau^{\text{pre}}_{q_B^0})+\hat{J}(\tau^{\text{suf}}_{{q_B^0}}))$\;\label{alg1:line15a} 
Optimal Plan: $\tau=\tau^{\text{pre},a^{*}}[\tau^{\text{suf},a^{*}}]^{\omega}$\;\label{alg1:line16} 
\end{algorithm}

\begin{algorithm}[t]
\caption{\texttt{Function} $[\mathcal{G}_T,~\ccalZ]=\texttt{ConstructTree}(\ccalX_{\text{goal}},~\text{wTS}_1,\dots,\text{wTS}_N,~B,~q_P^r,~n_{\text{max}})$}
\LinesNumbered
\label{alg:tree}
$\mathcal{V}_T=\{q_P^r\}$\;\label{tree:line1} 
$\mathcal{E}_T=\emptyset$\; \label{tree:line2} 
$\texttt{Cost}(q_P^r)=0$\;\label{tree:line3}
\For {$n=1:n_{\text{max}}$}{\label{tree:line4} 
$q^{\text{new}}_{\text{PTS}}=\texttt{Sample}(\ccalV_T,\text{wTS}_1,\dots,\text{wTS}_N)$\;\label{tree:line5} 
	\For {$b=1:|\mathcal{Q}_B|$}{\label{tree:line6} 
			 $q_B^{\text{new}}=\mathcal{Q}_B(b)$\;\label{tree:line7} 
			 $q_{P}^{\text{new}}=(q^{\text{new}}_{\text{PTS}},q_B^{\text{new}})$\;\label{tree:line8} 
				\If {$q_{P}^{\text{new}}\notin\mathcal{V}_T$}{\label{tree:line9} 
						$[\mathcal{V}_T,~\mathcal{E}_T,\texttt{Cost}]=\texttt{Extend}(q_{P}^{\text{new}},\rightarrow_P)$\;\label{tree:line10}}
				\If{$q_{P}^{\text{new}}\in\mathcal{V}_T$}{\label{tree:line11} 				
						$[\mathcal{E}_T,\texttt{Cost}]=\texttt{Rewire}(q_P^{\text{new}},\mathcal{V}_T,\mathcal{E}_T,\texttt{Cost})$\;} }}\label{tree:line12} 
$\ccalZ=\ccalV_T\cap\ccalX_{\text{goal}}$\;\label{tree:line15} 
\end{algorithm}

In what follows, we denote by $\mathcal{G}_T=\{\mathcal{V}_T,\mathcal{E}_T,\texttt{Cost}\}$ the tree that approximately represents the PBA $P$. Also, we denote by $q_P^r$ the root of $\ccalG_T$. The set of nodes $\mathcal{V}_T$ contains the states of $\mathcal{Q}_P$ that have already been sampled and added to the tree structure. The set of edges $\mathcal{E}_T$ captures transitions between nodes in $\mathcal{V}_T$, i.e., $(q_P,q_P')\in\mathcal{E}_T$, if there is a transition from state $q_P\in\mathcal{V}_T$ to state $q_P'\in\mathcal{V}_T$. The function $\texttt{Cost}:\ccalV_T:\rightarrow\mathbb{R}_+$ assigns the cost of reaching node $q_P\in\mathcal{V}_T$ from the root $q_P^r$ of the tree. In other words, 
\begin{equation}\label{eq:relCost}
\texttt{Cost}(q_P)=\hat{J}(\tau_T), 
\end{equation}
where $q_P\in\ccalV_T$ and $\tau_T$ is the path in the tree $\ccalG_T$ that connects the root to $q_P$.

\subsection{Construction of Prefix Parts}\label{sec:prefix}
In this Section, we describe how to construct the tree $\mathcal{G}_T=\{\mathcal{V}_T,\mathcal{E}_T,\texttt{Cost}\}$ that will be used for the synthesis of the prefix part [lines \ref{alg1:line2}-\ref{alg1:line6}, Alg. \ref{alg:plans}]. Since the prefix part connects an initial state $q_P^0=(q_\text{PTS}^0,q_B^0)\in\ccalQ_P^0$ to an \textit{accepting} state $q_P=(q_{\text{PTS}},~q_B)\in\ccalQ_P^F$, with $q_B\in\mathcal{Q}_B^{F}$, we can define the goal region for the tree $\ccalG_T$, as [line \ref{alg1:line2}, Alg. \ref{alg:plans}]
\begin{equation}\label{eq:goalPre}
\mathcal{X}_{\text{goal}}^{\text{pre}}=\{q_P=(q_{\text{PTS}},~q_B)\in\ccalQ_P~|~q_B\in\mathcal{Q}_B^{F}\}.
\end{equation}
The root $q_P^r$ of the tree is an initial state $q_P^0=(q_\text{PTS}^0,q_B^0)$ of the PBA and the following process is repeated for each initial state $q_B^0\in\ccalQ_B^0$ [line \ref{alg1:line2a}-\ref{alg1:line3}, Alg. \ref{alg:plans}]. The construction of the tree is described in Algorithm \ref{alg:tree} [line \ref{alg1:line4}, Alg. \ref{alg:plans}]. In line \ref{alg1:initNBA} of Algorithm \ref{alg:plans}, $\ccalQ_B^0(b_0)$ stands for the $b_0$-th initial state assuming an arbitrary enumeration of the elements of the set $\ccalQ_B^0$. The set $\mathcal{V}_T$ initially contains only the root $q_P^r$, i.e., an initial state of the PBA [line \ref{tree:line1} , Alg. \ref{alg:tree}] and, therefore, the set of edges is initialized as $\mathcal{E}_T=\emptyset$ [line \ref{tree:line2}, Alg. \ref{alg:tree}]. By convention, we assume that the cost of $q_P^r$ is zero [line \ref{tree:line3}, Alg. \ref{alg:tree}]. 

\subsubsection{Sampling a state $q_P^{\text{new}}\in\mathcal{Q}_P$}
The first step in the construction of the graph $\mathcal{G}_T$ is to sample a state from the state-space of the PBA. This is achieved by a sampling function $\texttt{Sample}$; see Algorithm \ref{alg:sample}. Specifically, we first create a state $q_{\text{PTS}}^{\text{rand}}=\Pi|_{\text{PTS}}q_P^{\text{rand}}$, where $q_P^{\text{rand}}$ is sampled from a given discrete distribution $f_{\text{rand}}(q_P|\ccalV_T):\ccalV_T\rightarrow[0,1]$ [lines \ref{s:line2}-\ref{s:line3}, Alg. \ref{alg:sample}]. The probability density function $f_{\text{rand}}(q_P|\ccalV_T)$ defines the probability of selecting the state $q_P\in\ccalV_T$ as the state $q_P^{\text{rand}}$ at iteration $n$ of Algorithm \ref{alg:tree} given the set $\ccalV_T$. We make the following assumption for $f_{\text{rand}}(q_P|\ccalV_T)$.

\begin{as}[Probability density function $f_{\text{rand}}$]\label{frand}
(i) The probability density function $f_{\text{rand}}(q_P|\ccalV_T):\ccalV_T\rightarrow[0,1]$ is bounded away from zero on $\ccalV_T$. (ii) The probability density function $f_{\text{rand}}(q_P|\ccalV_T):\ccalV_T\rightarrow[0,1]$ remains the same for all iterations $n$ and for a given state $q_P\in\ccalV_T$ is monotonically decreasing with respect to the size of $|\ccalV_T|$. This also implies that for a given $q_P\in\ccalV_T$, the probability $f_{\text{rand}}(q_P|\ccalV_T)$ remains the same for all iterations $n$ if the set $\ccalV_T$ does not change. 
(iii) Independent samples $q_P^{\text{rand}}$ can be drawn from $f_{\text{rand}}$.
\end{as}

By definition of $f_{\text{rand}}(q_P|\ccalV_T)$ we have that $q_P^{\text{rand}}$ always belongs to $\ccalV_T$. Also, by assumption \ref{frand}(i), we have that any node $q_P\in\ccalV_T$ has a non-zero probability to be selected as the node $q_P^{\text{rand}}$. Also, notice that assumption \ref{frand}(ii) is reasonable, as it requires that the probability of selecting a given state $q_P\in\ccalV_T$ decreases as the cardinality of $\ccalV_T$ increases. An example of a distribution that satisfies Assumption \ref{frand} is the discrete uniform distribution
\begin{align}\label{eq:example}
f_{\text{rand}}(q_P|\ccalV_T)=
\begin{cases}
\frac{1}{|\ccalV_T|},~&\mbox{if}~ q_P\in\ccalV_T,\\
0,~&\mbox{otherwise}.
\end{cases}
\end{align}
Notice that other sampling methods for $q_P^{\text{rand}}$ can be employed that do not require the more strict conditions of Assumption \ref{frand}(ii); see Remark \ref{rem:lem1} in Appendix \ref{sec:lemmas1}.
%


Given a state $q_{\text{PTS}}^{\text{rand}}$, we define its \textit{reachable set} in the PTS
\begin{align}\label{reachable1} &\ccalR_{\text{PTS}}(q_{\text{PTS}}^{\text{rand}})=\{q_{\text{PTS}}\in\ccalQ_{\text{PTS}}~|~q_{\text{PTS}}^{\text{rand}}\rightarrow_{\text{PTS}}q_{\text{PTS}}\}
\end{align}
i.e., $\ccalR_{\text{PTS}}(q_{\text{PTS}}^{\text{rand}})\subseteq\ccalQ_{\text{PTS}}$ collects all the states $q_{\text{PTS}}\in\ccalQ_{\text{PTS}}$ that can be reached from $q_{\text{PTS}}^{\text{rand}}$ in one hop. Then, we sample a state $q_{\text{PTS}}^{\text{new}}$ from a discrete distribution $f_{\text{new}}(q_{\text{PTS}}|q_{\text{PTS}}^{\text{rand}}):\ccalR_{\text{PTS}}(q_{\text{PTS}}^{\text{rand}})\rightarrow[0,1]$ [line \ref{s:line5}, Alg. \ref{alg:sample}] that satisfies the following assumption.

\begin{as}[Probability density function $f_{\text{new}}$]\label{fnew}
(i) The probability density function $f_{\text{new}}(q_{\text{PTS}}|q_{\text{PTS}}^{\text{rand}}):\ccalR_{\text{PTS}}(q_{\text{PTS}}^{\text{rand}})\rightarrow[0,1]$ is bounded away from zero on $\ccalR_{\text{PTS}}(q_{\text{PTS}}^{\text{rand}})$. (ii) 
For a given $q_{\text{PTS}}^{\text{rand}}$, the distribution $f_{\text{new}}(q_{\text{PTS}}|q_{\text{PTS}}^{\text{rand}})$ remains the same for all iterations $n$. (iii) Given a state $q_{\text{PTS}}^{\text{rand}}$, independent samples $q_{\text{PTS}}^{\text{new}}$ can be drawn from the probability density function $f_{\text{new}}$. 
\end{as}

By definition of $f_{\text{new}}(q_{\text{PTS}}|q_{\text{PTS}}^{\text{rand}})$, we have that $q_{\text{PTS}}^{\text{rand}}$ always belongs to $\ccalR_{\text{PTS}}(q_{\text{PTS}}^{\text{rand}})$. Also, observe that by Assumption \ref{fnew}(i), we have that any node $q_P\in\ccalR_{\text{PTS}}(q_{\text{PTS}}^{\text{rand}})$ has a non-zero probability to be $q_{\text{PTS}}^{\text{new}}$. Moreover, notice that the state $q_P^{\text{rand}}$ can change at every iteration $n$ and, therefore, clearly, so does the reachable set $\ccalR_{\text{PTS}}(q_{\text{PTS}}^{\text{rand}})$. Nevertheless, observe that the set $\ccalR_{\text{PTS}}(q_{\text{PTS}}^{\text{rand}})$ remains the same for all iterations $n$ for a given $q_{\text{PTS}}^{\text{rand}}$. Finally, note that other sampling methods for $q_{\text{PTS}}^{\text{new}}$ can be employed that do not require the more strict conditions of Assumption \ref{fnew}(ii); see Remark \ref{rem:lem2} in Appendix \ref{sec:lemmas1}.

\begin{rem}[Reachable set $\ccalR_{\text{PTS}}(q_{\text{PTS}}^{\text{rand}})$]\label{rem:reachable}
In practice in order to obtain the state $q_{\text{PTS}}^{\text{new}}$ we do not need to construct the reachable set $\ccalR_{\text{PTS}}(q_{\text{PTS}}^{\text{rand}})$, which is a computationally expensive step. In fact, we only need the \textit{reachable} sets $\ccalR_{\text{TS}_i}(q_i^{\text{rand}})$, for all robots $i$, that collect all states that are reachable from the state $q_i^{\text{rand}}=\Pi|_{\text{TS}_i} q_{\text{PTS}}^{\text{rand}}\in\ccalQ_i$ in one hop, defined as $\ccalR_{\text{TS}_i}(q_i^{\text{rand}})=\{q_{i}\in\ccalQ_{i}|q_i^{\text{rand}}\rightarrow_i q_i\}$. Then, we can define the probability density functions $f_{\text{new},i}(q_i|q_i^{\text{rand}}):\ccalR_{\text{TS}_i}(q_i^{\text{rand}})\rightarrow[0,1]$ that are bounded away from zero on $\ccalR_{\text{TS}_i}(q_i^{\text{rand}})$ and use them to draw a sample $q_{i}^{\text{new}}$ that is reachable from $q_i^{\text{rand}}$, for all $i\in\set{1,\dots,N}$. Stacking these samples in a vector we can define $q_{\text{PTS}}^{\text{new}}=[q_{1}^{\text{new}}, q_{2}^{\text{new}}, \dots, q_{N}^{\text{new}} ]$ and the probability density function $f_{\text{new}}$ becomes $f_{\text{new}}=f_{\text{new},1}\cdot f_{\text{new},2}\cdot \dots\cdot f_{\text{new},N}$ that has to satisfy Assumption \ref{fnew}. Clearly, the resulting state $q_{\text{PTS}}^{\text{new}}$ belongs to the reachable set $\ccalR_{\text{PTS}}(q_{\text{PTS}}^{\text{rand}})$. Throughout the paper, for simplicity, our analysis is based on $f_{\text{new}}$ and not on the probability density functions $f_{\text{new},i}$.
\end{rem}


In order to build incrementally a graph whose set of nodes approximates the state-space $\mathcal{Q}_P$ we need to append to $q_{\text{PTS}}^{\text{new}}$ a state from the state-space $\mathcal{Q}_B$ of the NBA $B$. Let $q_B^{\text{new}}=\mathcal{Q}_B(b)$ [line \ref{tree:line7}, Alg. \ref{alg:tree}] be the candidate B$\ddot{\text{u}}$chi state that will be attached to $q_{\text{PTS}}^{\text{new}}$, where $\mathcal{Q}_B(b)$ stands for the $b$-th state in the set $\ccalQ_B$ assuming an arbitrary enumeration of the elements of the set $\ccalQ_B$. The following procedure is repeated for all $q_B^{\text{new}}=\ccalQ_B(b)$ with $b\in\{1,\dots,|\mathcal{Q}_B|\}$. First, we construct the state $q_{P}^{\text{new}}=(q^{\text{new}}_{\text{PTS}},q_B^{\text{new}})\in\mathcal{Q}_P$ [line \ref{tree:line8}, Alg. \ref{alg:tree}] and then we check if this state can be added to the tree $\mathcal{G}_T$ [lines \ref{tree:line9}-\ref{tree:line10}, Alg. \ref{alg:tree}]. If the state $q_{P}^{\text{new}}$ does not already belong to the tree from a previous iteration of Algorithm \ref{alg:tree}, i.e, if $q_{P}^{\text{new}}\notin\mathcal{V}_T$ [line \ref{alg1:line9}, Alg. \ref{alg:tree}], we check which node in $\mathcal{V}_T$ (if there is any) can be the parent of $q_{P}^{\text{new}}$ in the tree $\mathcal{G}_T$. This is achieved by the function $\texttt{Extend}$ described in Algorithm \ref{alg:extend} [line \ref{tree:line10}, Alg. \ref{alg:tree}] and in Section \ref{sec:extend}. If $q_P^{\text{new}}\in\ccalV_T$, then the \textit{rewiring} step follows described in Algorithm \ref{alg:rewire} [lines \ref{tree:line11}-\ref{tree:line12}, Alg. \ref{alg:tree}] and in Section \ref{sec:rewire} that aims to reduce the cost of nodes $q_P\in\ccalV_T$.
\begin{figure}[t]
  \centering
  \includegraphics[width=0.6\linewidth]{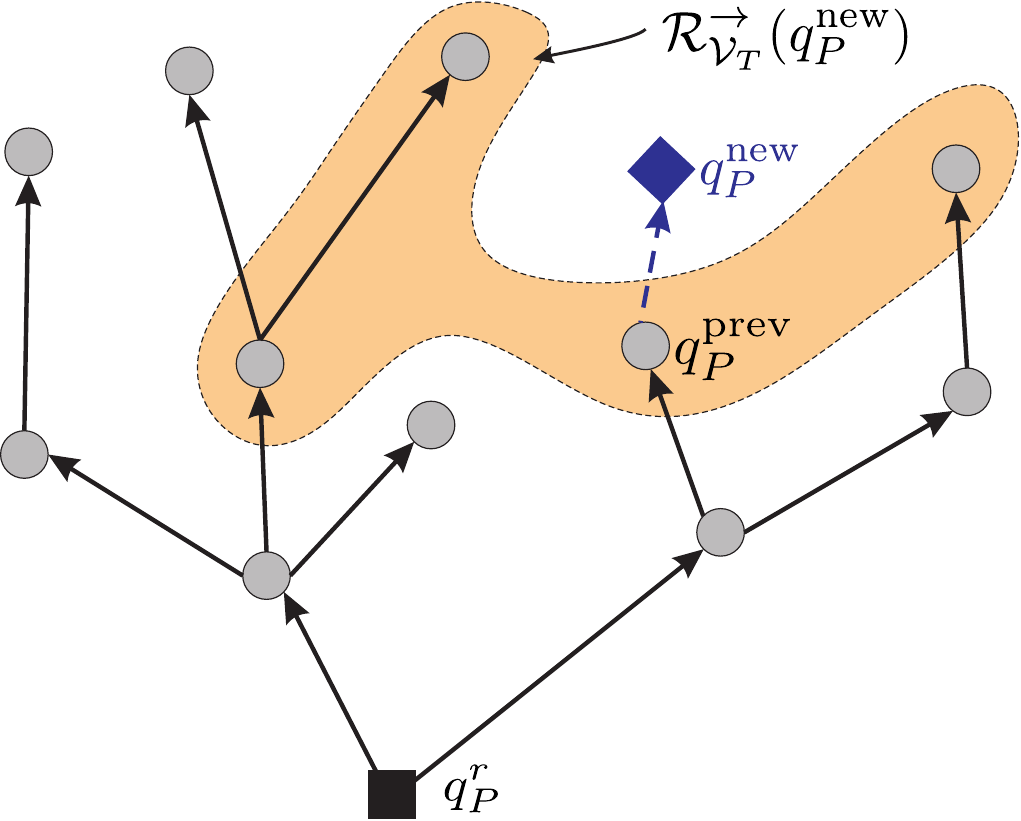}
    \caption{Graphical depiction of Algorithm \ref{alg:extend}. The black square stands for the root of the tree and the gray disks represent nodes in the set $\mathcal{V}_T$. Black arrows represent transitions captured by $\mathcal{E}_T$. The blue diamond stands for the state $q_P^{\text{new}}$ and the dashed blue arrow represents the new edge that will be added to the set $\mathcal{E}_T$ after the execution of Algorithm \ref{alg:extend} (line \ref{alg3:line5}, Alg. \ref{alg:extend}).}
  \label{fig:ext}
\end{figure}

\begin{algorithm}[t]
\caption{Function $\texttt{Sample}(\ccalV_T, \text{wTS}_1,\dots,\text{wTS}_N)$}
\label{alg:sample}
Pick a state $q_{P}^{\text{rand}}\in\ccalV_T$ from a given distribution $f_{\text{rand}}(q_P|\ccalV_T):\ccalV_T\rightarrow[0,1]$\;\label{s:line2}
$q_{\text{PTS}}^{\text{rand}}=\Pi|_{\text{PTS}}q_{P}^{\text{rand}}$\;\label{s:line3}
Sample a state $q_{\text{PTS}}^{\text{new}}$ from probability distribution $f_{\text{new}}:\ccalR_{\text{PTS}}(q_{\text{PTS}}^{\text{rand}})\rightarrow[0,1]$\;\label{s:line5}
\Return {$q_{\text{PTS}}^{new}$}\;
\end{algorithm}
\begin{algorithm}[t]
\caption{Function $\texttt{Extend}(q_{P}^{\text{new}})$}
\label{alg:extend}
Collect in set $\mathcal{R}^{\rightarrow}_{\ccalV_T}(q_P^{\text{new}})$ all states $q_P\in\mathcal{V}_T$ that satisfy the following transition rule: $(q_{P},q_{P}^{\text{new}})\in\rightarrow_{P}$\;\label{alg3:line1}
\If {$\mathcal{R}^{\rightarrow}_{\ccalV_T}(q_P^{\text{new}})\neq\emptyset$}{\label{alg3:line2}
$q_P^{\text{prev}}=\text{argmin}_{q_P\in\mathcal{R}^{\rightarrow}_{\ccalV_T}(q_P^{\text{new}})}[\texttt{Cost}(q_P)+w_{\text{PTS}}(\Pi|_{\text{PTS}}q_P,\Pi|_{\text{PTS}}q_P^{\texttt{new}})]$\;\label{alg3:line3}
$\mathcal{V}_T=\mathcal{V}_T\cup\{q_{P}^{\text{new}}\}$\;\label{alg3:line4}
$\mathcal{E}_T=\mathcal{E}_T\cup\{(q_P^{\text{prev}},q_P^{\text{new}})\}$\;\label{alg3:line5}
$\texttt{Cost}(q_P^{\texttt{new}})=\texttt{Cost}(q_P^{\text{prev}})+w_{\text{PTS}}(\Pi|_{\text{PTS}}q_P^{\text{prev}},\Pi|_{\text{PTS}}q_P^{\texttt{new}})$\;}\label{alg3:line6}
\Return {$\mathcal{V}_T$, $\mathcal{E}_T$, $\texttt{Cost}$}\;
\end{algorithm}

\subsubsection{Adding a new edge to $\mathcal{E}_T$}\label{sec:extend}
Assume that $q_P^{\text{new}}\notin\ccalV_T$ [lines \ref{tree:line9}, Alg. \ref{alg:tree}]. Then the function $\texttt{Extend}$ described in Algorithm \ref{alg:extend} [line \ref{tree:line10}, Alg. \ref{alg:tree}] is used to check if the tree can be extended towards $q_P^{\text{new}}$. The first step in Algorithm \ref{alg:extend} is to construct the set $\mathcal{R}^{\rightarrow}_{\ccalV_T}(q_P^{\text{new}}) \subseteq\mathcal{V}_T$ defined as 
\begin{equation}\label{eq:Stonew}
\mathcal{R}^{\rightarrow}_{\ccalV_T}(q_P^{\text{new}})=\{q_P\in\ccalV_T|q_P\rightarrow_P q_P^{\text{new}}\},
\end{equation}
that collects all states $q_P\in\mathcal{V}_T$ that satisfy the transition rule $(q_{P},q_{P}^{\text{new}})\in\rightarrow_{P}$, i.e., all states that can directly reach $q_P^{\text{new}}$ [line \ref{alg3:line1}, Alg. \ref{alg:extend}]. If the resulting set $\mathcal{R}^{\rightarrow}_{\ccalV_T}(q_P^{\text{new}})$ is empty then the sample $q_{P}^{\text{new}}$ is not added to the tree. On the other hand, if $\mathcal{R}^{\rightarrow}_{\ccalV_T}(q_P^{\text{new}})\neq \emptyset$, then the state $q_{P}^{\text{new}}$ is added to the tree [lines \ref{alg3:line3}-\ref{alg3:line6}, Alg. \ref{alg:extend}]. The parent of $q_{P}^{\text{new}}$ is selected as
\begin{align}
&q_P^{\text{prev}}=\nonumber\\&\text{argmin}_{q_P\in\mathcal{R}^{\rightarrow}_{\ccalV_T}(q_P^{\text{new}})}[\texttt{Cost}(q_P)+w_{\text{PTS}}(\Pi|_{\text{PTS}}q_P,\Pi|_{\text{PTS}}q_P^{\texttt{new}})],\nonumber
\end{align} 
where $\texttt{Cost}(q_P)+w_{\text{PTS}}(\Pi|_{\text{PTS}}q_P,\Pi|_{\text{PTS}}q_P^{\texttt{new}})$ captures the cost of the node $q_P^{\text{new}}$ if it gets connected to the root through the node $q_P$. In other words, the parent $q_P^{\text{prev}}$ of node $q_P^{\text{new}}$ is selected among all states in $\mathcal{R}^{\rightarrow}_{\ccalV_T}(q_P^{\text{new}})$ so that the cost of $q_{P}^{\text{new}}$ is minimized [line \ref{alg3:line3}, Alg. \ref{alg:extend}]. The set of nodes and edges is updated as $\mathcal{V}_T=\mathcal{V}_T\cup\{q_{P}^{\text{new}}\}$ and $\mathcal{E}_T=\mathcal{E}_T\cup\{(q_P^{\text{prev}},q_P^{\text{new}})\}$ [lines \ref{alg3:line4} and \ref{alg3:line5}, Alg. \ref{alg:extend}]. Given the parent $q_P^{\text{prev}}$ of the node $q_P^{\text{new}}$, the cost of $q_P^{\text{new}}$ is [line \ref{alg3:line6}, Alg. \ref{alg:extend}]:
\begin{align} \texttt{Cost}(q_P^{\texttt{new}})=&\underbrace{\texttt{Cost}(q_P^{\texttt{prev}})}_{\text{Cost of reaching $q_P^{\texttt{prev}}$ from the root of the tree $\mathcal{G}_T$}}\nonumber\\&+\underbrace{w_{\text{PTS}}(\Pi|_{\text{PTS}}q_P^{\texttt{prev}},\Pi|_{\text{PTS}}q_P^{\texttt{new}})}_{\text{cost of reaching $q_P^{\texttt{new}}$ from $q_P^{\texttt{prev}}$}},
\end{align}
due to \eqref{eq:cost} and \eqref{eq:relCost}. Algorithm \ref{alg:extend} is illustrated in Figure \ref{fig:ext}, as well.

\subsubsection{Rewiring}\label{sec:rewire}
Once a new state $q_{P}^{\text{new}}=(q^{\text{new}}_{\text{PTS}},q_B^{\text{new}})$ has been added to the tree or if the new sample $q_{P}^{\text{new}}$ already belongs to the tree [line \ref{tree:line11}, Alg. \ref{alg:tree}], the rewiring step follows [line \ref{tree:line12}, Alg. \ref{alg:tree}]. Specifically, we \textit{rewire} the nodes in $q_P\in\mathcal{V}_T$ that can get connected to the root $q_P^r$ through the node $q_{P}^{\text{new}}$ if this rewiring can decrease their cost $\texttt{Cost}(q_P)$. The rewiring process is described in Algorithm \ref{alg:rewire} and is illustrated in Figure \ref{fig:rew}.

In Algorithm \ref{alg:rewire} we first construct the reachable set $\mathcal{R}_{\ccalV_T}^{\leftarrow}(q_P^{\text{new}})\subseteq\mathcal{V}_T$ defined as
\begin{equation}
\mathcal{R}_{\ccalV_T}^{\leftarrow}(q_P^{\text{new}})=\{q_P\in\ccalV_T|q_P^{\text{new}}\rightarrow_P q_P\},
\end{equation}
that collects all states of $q_P\in\mathcal{V}_T$ that satisfy the transition rule $(q_{P}^{\text{new}},q_{P})\in\rightarrow_{P}$, i.e., all states that can be directly reached by $q_P^{\text{new}}$ [line \ref{alg4:line1}, Alg. \ref{alg:rewire}]. Then, for all states $q_P\in\mathcal{R}_{\ccalV_T}^{\leftarrow}(q_P^{\text{new}})$ we check if their current cost $\texttt{Cost}(q_P)$ is greater than their cost if they were connected to the root through $q_P^{\text{new}}$ [line \ref{alg4:line3}, Alg. \ref{alg:rewire}]. If this is the case for a node $q_P\in\mathcal{R}_{\ccalV_T}^{\leftarrow}(q_P^{\text{new}})$, then the new parent of $q_P$ becomes $q_P^{\text{new}}$, i.e, a directed edge is drawn from  $q_P^{\text{new}}$ to $q_P$, and the edge that was connecting  $q_P$ to its previous parent is deleted [lines \ref{alg4:line4}-\ref{alg4:line5}, Alg. \ref{alg:rewire}]. The cost of node $q_P$ is updated as $\texttt{Cost}(q_P)=\texttt{Cost}(q_P^{\text{new}})+w_{\text{PTS}}(\Pi|_{\text{PTS}}q_P^{\texttt{new}},\Pi|_{\text{PTS}}q_P)$ to take into account the new path through which it gets connected to the root [line \ref{alg4:line6}, Alg. \ref{alg:rewire}]. Once a state $q_P$ gets rewired, the cost of all its \textit{successor} nodes in $\ccalG_T$, collected in the set 
\begin{align}\label{eq:successor}
\ccalS(q_P)=&\{q_P'\in\ccalV_T|q_P'~\text{is connected to}~ q_P~\text{through}\nonumber\\& \text{a multi hop path in}~\ccalG_T\},
\end{align} 
is updated to account for the change in the cost of $q_P$ [line \ref{alg4:line6}, Alg. \ref{alg:rewire}]. 

\begin{figure}[t]
\centering
  \includegraphics[width=0.6\linewidth]{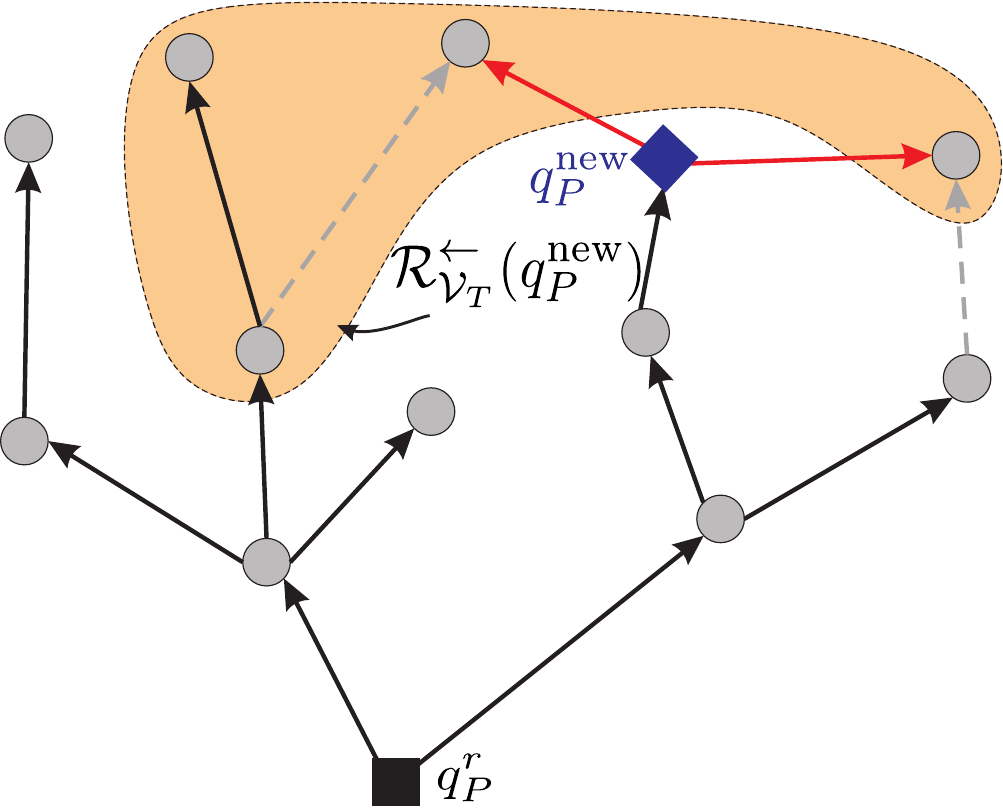}
  \caption{Graphical depiction of Algorithm \ref{alg:rewire}. The black square stands for the root of the tree and the gray disks and the blue diamond represent nodes in the set $\mathcal{V}_T$. Black arrows represent transitions captured by $\mathcal{E}_T$. The blue diamond stands for the state $q_P^{\text{new}}$. Dashed gray arrows stand for the edges that will be deleted from the set $\mathcal{E}_T$ during the execution of Algorithm \ref{alg:rewire} (line \ref{alg4:line4}, Alg. \ref{alg:rewire}). Red arrows stand for the new edges that will be added to $\mathcal{E}_T$ during the execution of Algorithm \ref{alg:rewire} (line \ref{alg4:line5}, Alg. \ref{alg:rewire}).}
  \label{fig:rew}
\end{figure}

\subsubsection{Construction of Paths}
The construction of the tree $\ccalG_T$ ends after $n_{\text{max}}^{\text{pre}}$ iterations, where $n_{\text{max}}^{\text{pre}}$ is user specified [line \ref{tree:line4}, Alg. \ref{alg:tree}]. Then, we construct the set $\ccalP=\ccalV_T\cap\ccalX_{\text{goal}}^{\text{pre}}$ [line \ref{tree:line15}, Alg. \ref{alg:tree}] that collects all the states $q_P\in\ccalV_T$ that belong to the goal region $\ccalX_{\text{goal}}^{\text{pre}}$. Given the tree $\ccalG_T$ and the set $\ccalP$ [line \ref{alg1:line4}, Alg. \ref{alg:plans}] that collects all states $q_P\in\ccalX_{\text{goal}}^{\text{pre}}\cap\ccalV_T$, we can compute the prefix plans [lines \ref{alg1:line5}-\ref{alg1:line6}, Alg. \ref{alg:plans}]. In particular, the path that connects the $a$-th state in the set $\mathcal{P}$, denoted by $\ccalP(a)$, to the root $q_P^r$ constitutes the $\alpha$-th prefix plan and is denoted by $\tau^{\text{pre},a}$ [line \ref{alg1:line6}, Algorithm \ref{alg:plans}]. Its computation is described in Algorithm \ref{alg:findpath}. Specifically, the prefix part $\tau^{\text{pre},a}$ is constructed by tracing the sequence of parents of nodes starting from the node that represents the accepting state $\mathcal{P}(a)$ and ending at the root of the tree [lines \ref{alg6:line1}-\ref{alg6:line7}, Alg. \ref{alg:findpath}]. The parent of each node is computed by the function $\texttt{parent}:\mathcal{V}_T\rightarrow\mathcal{V}_{T}$ that maps a node $q_P\in\mathcal{V}_{T}$ to a unique vertex $q_P'\in\mathcal{V}_{T}$ if $(q_P',q_P)\in\mathcal{E}_{T}$, i.e., $\texttt{parent}(q_P)=q_P'$ if $(q_P',q_P)\in\mathcal{E}_{T}$. By convention, we assume that $\texttt{parent}(q_P^r)=q_P^r$. In line \ref{alg6:line7}, $\Pi|_{\text{PTS}}p_T$ stands for the projection of the path $p_T$ onto the state-space of the $\text{PTS}$. In line \ref{alg6:line4} of Algorithm \ref{alg:findpath}, $|$ stands for the concatenation of paths. Thus, for the resulting prefix plan $\tau^{\text{pre},a}$, it holds that $\tau^{\text{pre},a}(1)=\Pi|_{\text{PTS}}q_P^r$ and $\tau^{\text{pre},a}(|\tau^{\text{pre},a}|)=\Pi|_{\text{PTS}}\mathcal{P}(a)$. 

\begin{algorithm}[t]
\caption{Function $\texttt{Rewire}(q_P^{\text{new}},\mathcal{V}_T,\mathcal{E}_T,\texttt{Cost})$}
\label{alg:rewire}
Collect in set $\mathcal{R}_{\ccalV_T}^{\leftarrow}(q_P^{\text{new}})$ all states of $q_P\in\mathcal{V}_T$ that abide by the following transition rule: $(q_{P}^{\text{new}},q_{P})\in\rightarrow_{P}$\;\label{alg4:line1}
	\For {$q_P\in\mathcal{R}_{\ccalV_T}^{\leftarrow}(q_P^{\text{new}})$}{\label{alg4:line2}
		\If {$\texttt{Cost}(q_P)>\texttt{Cost}(q_P^{\text{new}})+w_{\text{PTS}}(\Pi|_{\text{PTS}}q_P^{\texttt{new}},\Pi|_{\text{PTS}}q_P)$}{\label{alg4:line3}
			$\mathcal{E}_T=\mathcal{E}_T\setminus \{(\texttt{Parent}(q_P),q_P)\}$\;\label{alg4:line4}
			$\mathcal{E}_T=\mathcal{E}_T\cup\{(q_P^{\text{new}},q_P)\}$\;\label{alg4:line5} 			$\texttt{Cost}(q_P)=\texttt{Cost}(q_P^{\text{new}})+w_{\text{PTS}}(\Pi|_{\text{PTS}}q_P^{\texttt{new}},\Pi|_{\text{PTS}}q_P)$\;\label{alg4:line6}
			Update the cost of all successor nodes of $q_P\in\ccalV_T$\;\label{alg4:line7}}}
\Return {$\mathcal{E}_T$, $\texttt{Cost}$}\;
\end{algorithm}


\begin{algorithm}[t]
\caption{Function $\texttt{FindPath}(\mathcal{G}_T,q_P^{\text{initial}},q_P^{\text{goal}})$}
\label{alg:findpath}
$p_T=\{q_P^{\text{goal}}\}$\;\label{alg6:line1}
$q_P^{\text{prev}}=\texttt{Parent}(q_P^{\text{goal}})$\;\label{alg6:line2}
\While {$q_P^{\text{prev}}\neq q_P^{\text{initial}}$}{\label{alg6:line3}
$p_T=p_T|\{q_P^{\text{prev}}\}$\;\label{alg6:line4}
$q_P^{\text{prev}}=\texttt{Parent}(q_P^{\text{prev}})$\;}\label{alg6:line5}
$p_T=p_T|\{q_P^{\text{initial}}\}$\;\label{alg6:line6}
$p_T=\Pi|_{\text{PTS}}p_T$\;\label{alg6:line7}
\Return {$p_T$}\;
\end{algorithm}

\subsection{Construction of Suffix Parts}\label{sec:suffix}
\begin{figure}[t]
  \centering
    \label{fig:wait}
  \includegraphics[width=0.45\linewidth]{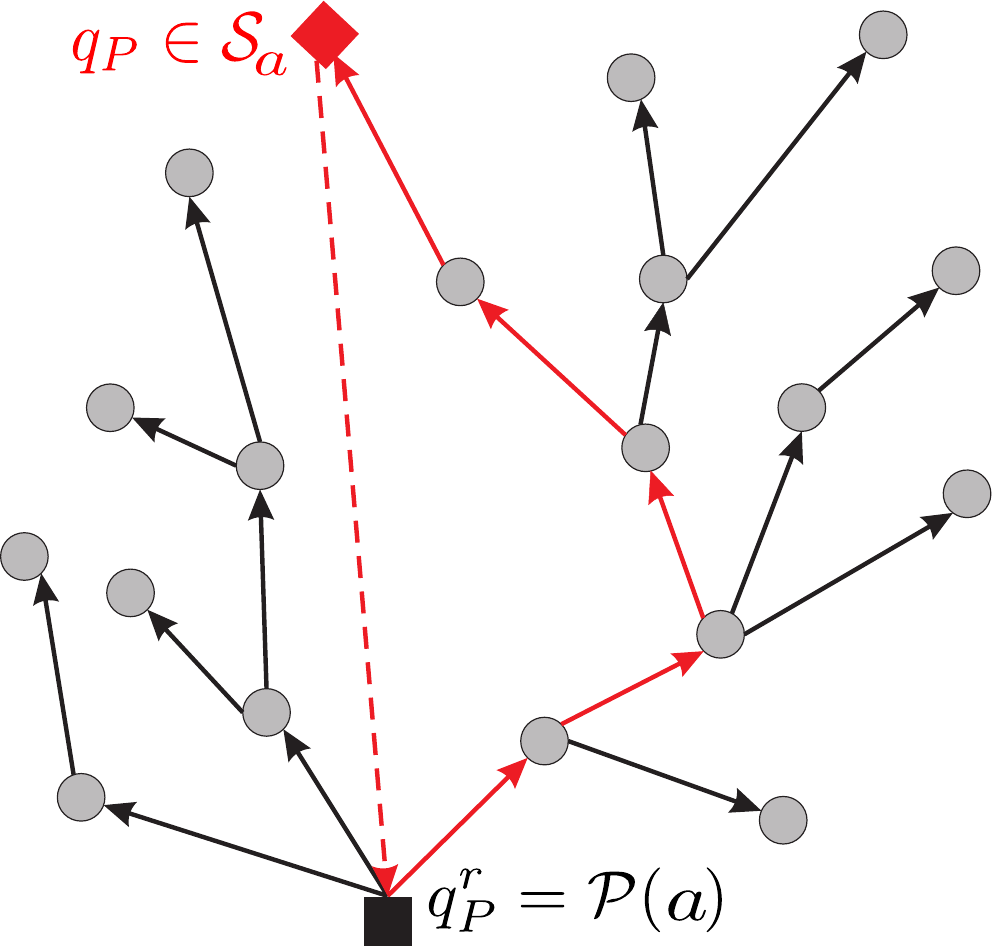}
  \caption{Graphical depiction of detecting cycles around a final/accepting state $\mathcal{P}(a)$ (black square) which acts as the root of the tree. The red diamond stands for a state $q_P\in\mathcal{S}_a$. Solid red arrows stand for the path that connects the state $q_P\in\mathcal{S}_a$ to the root $\mathcal{P}(a)$. The dashed red arrow implies that a transition from $q_P$ to $\mathcal{P}(a)$ is feasible according to the transition rule $\longrightarrow_P$; however, such a transition is not included in the set $\mathcal{E}_T$. The cycle around the accepting state $\mathcal{P}(a)$ is illustrated by solid and dashed red arrows.}
  \label{fig:findSuffix}
\end{figure}
Once the prefix plans $\tau^{\text{pre},a}$ for all $a\in\{1,\dots,|\ccalP|\}$ are constructed, the corresponding suffix plans $\tau^{\text{suf},a}$ are constructed [lines \ref{alg1:line7}-\ref{alg1:line14}, Alg. \ref{alg:plans}]. Specifically, every suffix part $\tau^{\text{suf},a}$ is a sequence of states in $\ccalQ_P$ that starts from the state $\mathcal{P}(a)$ and ends at the same state $\mathcal{P}(a)$, i.e., a cycle around state $\mathcal{P}(a)$ where any two consecutive states in $\tau_i^{\text{suf},a}$ respect the transition rule $\rightarrow_P$. 
To construct the suffix plan $\tau_i^{\text{suf},a}$ we build a tree $\mathcal{G}_T=\{\mathcal{V}_T, \mathcal{E}_T, \texttt{Cost}\}$ that approximates the PBA $P$, in a similar way as in Section \ref{sec:prefix}, and implement a cycle-detection mechanism to identify cycles around the state $P(a)$. The only differences are that: (i) the root of the tree is now $q_P^{r}=\ccalP(a)$, i.e., it is an \textit{accepting/final} state [line \ref{alg1:line8}, Alg. \ref{alg:plans}] detected during the construction of the prefix plans, (ii) the goal region corresponding to the root $q_P^{r}=\ccalP(a)$, is defined as
\begin{align}\label{eq:goalSuf}
\mathcal{X}_{\text{goal}}^{\text{suf}}(q_P^r)=&\{q_P=(q_{\text{PTS}},~q_B)\in\ccalQ_P~|\nonumber\\&(q_P,L(q_{\text{PTS}}),q_P^r)\in\rightarrow_P\},
\end{align}
and, (iii) we first check if $q_P^r\in\ccalX_{\text{goal}}^{\text{suf}}$, i.e., if $(\Pi|_{B}q_P^r,L(\Pi|_{\text{PTS}}q_P^r),\Pi|_{B}q_P^r)$ and if the cost of such a self loop has zero cost, i.e., if $w_P(q_P^r,q_P^r)=0$ [line \ref{alg1:line9a}, Alg. \ref{alg:plans}]. If so, the construction of the tree is trivial, as it consists of only the root, and a loop around it with zero cost [line \ref{alg1:line9b}, Alg. \ref{alg:plans}].\footnote{Clearly, any other suffix part will have non-zero cost and, therefore, it will not be optimal and it will be discarded by Algorithm 1 [lines \ref{alg1:line13}-\ref{alg1:line14}, Alg. \ref{alg:plans}]. For this reason, the construction of the tree $\ccalG_T$ is terminated if a self-loop around $q_P^r$ is detected.}  If $q_P^r\notin\ccalX_{\text{goal}}^{\text{suf}}$, then the tree $\ccalG_T$ is constructed by Algorithm \ref{alg:tree} [line \ref{alg1:line10}, Alg. \ref{alg:plans}].
Once a tree rooted at $q_P^r=\ccalP(a)$ is constructed, a set $\ccalS_a\subseteq\ccalV_T$ is formed that collects all states $q_P\in\ccalV_T\cap\ccalX_{\text{goal}}^{\text{suf}}(q_P^r)$ [lines \ref{alg1:line9c}, \ref{alg1:line10}, Alg. \ref{alg:plans}]. 
Then for each state $q_P\in\ccalS_a$, we compute the cost $\hat{J}(\tilde{\tau}^{\text{suf},e})$ of each possible suffix plan $\tilde{\tau}^{\text{suf},e}$, for all $e\in\{1,\dots,|\ccalS_a|\}$, associated with the root $q_P^r$. By construction of the cost functions $\texttt{Cost}$ and $\hat{J}(\cdot)$, it holds that $\hat{J}(\tilde{\tau}^{\text{suf},e})=\texttt{Cost}(\ccalS_a(e))+w_{\text{PTS}}(\Pi|_{\text{PTS}}\ccalS_a(e),\Pi|_{\text{PTS}}q_P^r)$, where $\ccalS_a(e)$ stands for the $e$-th state in the set $\ccalS_a$. Among all detected suffix plans $\tilde{\tau}^{\text{suf},e}$ associated with the \textit{accepting} state $\mathcal{P}(a)$, we select the suffix plan with the minimum cost, which constitutes the suffix plan $\tau^{\text{suf},a}$ [lines \ref{alg1:line13}-\ref{alg1:line14}, Alg. \ref{alg:plans}]. This process is repeated for all $a\in\{1,\dots,|\ccalP|\}$ [line \ref{alg1:line7}, Alg. \ref{alg:plans}]. In this way, for each prefix plan $\tau^{\text{pre},a}$ we construct its corresponding suffix plan $\tau^{\text{suf},a}$, if it exists.
 
\subsection{Construction of Optimal Discrete Plans}\label{sec:plan}
By construction, any motion plan $\tau^a=\tau^{\text{pre},a}[\tau^{\text{suf},a}]^{\omega}$, with $\mathcal{S}_a\neq\emptyset$, and $a\in\{1,\dots,|\mathcal{P}|\}$ satisfies the global LTL specification $\phi$. The cost $J(\tau^a)$ of each plan $\tau^a$  is defined in \eqref{eq:cost2}. Given an initial state $q_B^0\in\ccalQ_B^0$, among all the motion plans $\tau^a\models\phi$, we select the one with the smallest cost $J(\tau^a)$ [line \ref{alg1:line15}, Alg. \ref{alg:plans}]. The plan with the smallest cost given an initial state $q_B^0$ is denoted by $\tau_{q_B^0}$. Then, among all plans $\tau_{q_B^0}$, we select again the one with smallest cost $J(\tau_{q_B^0})$, i.e., $\tau=\tau^{a_*}$, where $a_*=\text{argmin}_{a_{q_B^0}}{J(\tau_{q_B^0})}$ [lines \ref{alg1:line15a}-\ref{alg1:line16}, Alg. \ref{alg:plans}].

\begin{rem}[Execution of plan $\tau$]
The motion plan $\tau$ generated by Algorithm \ref{alg:plans} satisfies the global LTL formula, if all robots pick their next states either synchronously or asynchronously as in \cite{ulusoy2013optimality}. For asynchronous execution of the plan $\tau$, we only need to add `traveling states' to the transition systems that capture cases where robots $i$ are traveling from states $q_i^{\ell_j}\in\ccalQ_i$ to $q_i^{\ell_e}\in\ccalQ_i$ that satisfy $(q_i^{\ell_j},q_i^{\ell_e})\in\rightarrow_i$. More details about the traveling states can be found in  \cite{ulusoy2013optimality}. Note that adding traveling states to the trees increases the computational cost of synthesizing plans that can be executed asynchronously and satisfy the assigned LTL formula.
\end{rem}



\subsection{Complexity Analysis}
The memory resources needed to store the PBA as a graph structure $\ccalG_P=\set{\ccalV_P, \ccalE_P, w_P}$, defined in Section \ref{sec:prelim}, using its adjacency list is $O(|\ccalV_P|+|\ccalE_P|)$ \cite{sedgewick2011algorithms}. On the other hand, the memory needed to store a tree, constructed by Algorithm \ref{alg:tree}, that approximates the PBA is $O(|\ccalV_T|)$, since $|\ccalE_T|=|\ccalV_T|-1$. Due to the incremental construction of the tree we get that $|\ccalV_T|\leq|\ccalV_P|<|\ccalV_P|+|\ccalE_P|$ which shows that our proposed algorithm requires fewer memory resources compared to existing optimal control synthesis algorithms that rely on the construction of the PBA \cite{ulusoy2013optimality,ulusoy2014optimal}.

Next, observe that the time complexity of sampling the state $q_\text{PTS}^{\text{new}}$ in Algorithm \ref{alg:sample} is $O(\sum_i|\ccalQ_i|)$; see also Remark \ref{rem:reachable}. Moreover, the time complexity of extending the graph towards $q_P^{\text{new}}$ is $O(|\ccalV_T|(N+1))$; see Algorithm \ref{alg:extend}. The reason is that Algorithm \ref{alg:extend} can be equivalently written as a for-loop over the set $\ccalV_T$ where we first examine if $q_P\in\ccalV_T$ can reach $q_P^{\text{new}}$, based on the transition rule $\rightarrow_P$, and then we compute the cost of such a transition while keeping track of the node $q_P\in\ccalV_T$ that incurs the minimum cost. These calculations have cost $O(N+1)$ and the time complexity of the for-loop over  $\ccalV_T$ is $O(|\ccalV_T|)$. With this implementation of Algorithm \ref{alg:extend}, we do not need to construct the set $\ccalR_{\ccalV_T}^{\rightarrow}(q_P^{\text{new}})$.\footnote{Definition of $\ccalR_{\ccalV_T}^{\rightarrow}(q_P^{\text{new}})$ is only used in Section \ref{sec:corr} to simplify the proof of completeness and optimality of the proposed algorithm.} For the same reason, the time complexity of the rewiring step is $O(|\ccalV_T|(N+1))$; see Algorithm \ref{alg:rewire}. Finally the time complexity of Algorithm \ref{alg:findpath} that finds a path in the tree $\ccalG_T$ is $O(|\ccalV_T|)$. On the other hand, using the Dijkstra algorithm to find the shortest path from an initial to a final state or a cycle around a final state of a PBA is $O(|\mathcal{E}_P|+|\mathcal{V}_P|\log(|\mathcal{V}_p|))$; clearly, it holds that $|\mathcal{E}_P|+|\mathcal{V}_P|\log(|\mathcal{V}_P|)>|\mathcal{V}_T|$. If the PBA is represented as an implicit graph using its transition rule $\rightarrow_P$, then we can apply the uniform cost search algorithm \cite{korf2004best,russell1995modern} to find the optimal prefix and suffix paths with time and space complexity $O(b^{1+C^*/\epsilon})$, where $b$ is the branching factor of $\ccalG_P$, $C^*$ is the optimal cost of either the prefix or suffix path, and $\epsilon>0$ is the minimum increase in the cost of the path as we move from one node of $\ccalG_P$ to another. Note that this approach is also memory efficient since it does not require the explicit construction of the PBA but it can become computationally intractable (i) for dense graphs, i.e., as $b$ increases or (ii) for long paths, i.e.,  as $C^*/\epsilon$ increases. In comparison, the computational cost per iteration of our algorithm depends linearly only on $|\ccalV_T|$ and the size of the network and not on the structure of $\ccalG_P$.

\section{Correctness and Optimality} \label{sec:corr}
In this section, we first characterize the rate at which the constructed trees grow and then we provide the main results pertaining to the probabilistic completeness and optimality of the proposed Algorithm \ref{alg:plans}. In what follows, we denote by $\mathcal{G}_T^n=\{\mathcal{V}_T^{n}, \mathcal{E}_T^{n}, \texttt{Cost}\}$ the tree that has been built by Algorithm \ref{alg:tree} at the $n$-th iteration for the construction of either a prefix or suffix part. Also, we denote the nodes $q_P^{\text{rand}}$ and $q_P^{\text{new}}$ at iteration $n$ by $q_P^{\text{rand},n}$ and $q_P^{\text{new},n}$, respectively. Moreover, in the following results, we denote the reachable set of $q_P\in\ccalQ_P$ in the state-space of the PBA by:
\begin{equation}\label{eq:reachableP}
\ccalR_{P}(q_P)=\{q_{P}'\in\ccalQ_P~|~q_P\rightarrow_{P}q_{P}'\},
\end{equation}
that collects all states $q_P'\in\ccalQ_P$ that can be reached from $q_P\in\ccalQ_P$ in one hop.


\begin{prop}[Growth Rate of $\ccalG_T^n$]\label{prop:growth}
Assume that the reachable set of $q_P^{\text{rand},n}=(q_{\text{PTS}}^{\text{rand},n},q_B^{\text{rand},n})$ in the PBA is non-empty, i.e., that $\ccalR_P(q_P^{\text{rand},n})=\{q_P'\in\ccalQ_P~|~q_P^{\text{rand},n}\rightarrow q_P'\}\neq\emptyset$. Then, there is at least one $b\in\{1,\dots,|\ccalQ_B|\}$ so that either the state $q_P^{\text{new},n}=(q_{\text{PTS}}^{\text{new}},\ccalQ_B(b))$ will be added to $\ccalV_T^n$ at iteration $n$ if $q_P^{\text{new},n}\notin\ccalV_T^n$, or rewiring to $q_P^{\text{new},n}$ will occur if $q_P^{\text{new},n}\in\ccalV_T^n$ and $\ccalR_{\ccalV_T^n}^{\leftarrow}(q_P^{\text{new},n})\neq\emptyset$. If $\ccalR_P(q_P^{\text{rand},n})=\emptyset$, then the tree may remain unaltered at iteration $n$.
\end{prop}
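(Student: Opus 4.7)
The plan is to exploit the decomposition of the PBA transition rule, which splits into a PTS transition and a Büchi transition that depends only on the label of the source PTS state. This lets us transfer the reachability witness provided by the hypothesis $\ccalR_P(q_P^{\text{rand},n})\neq\emptyset$ from some reachable state to the actually-drawn sample $q_{\text{PTS}}^{\text{new}}$.

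First I would fix the drawn states. By Assumption \ref{frand} and line \ref{s:line2} of Algorithm \ref{alg:sample}, we have $q_P^{\text{rand},n}=(q_{\text{PTS}}^{\text{rand},n},q_B^{\text{rand},n})\in\ccalV_T^n$, and by Assumption \ref{fnew} together with line \ref{s:line5}, $q_{\text{PTS}}^{\text{new}}\in\ccalR_{\text{PTS}}(q_{\text{PTS}}^{\text{rand},n})$, i.e., $q_{\text{PTS}}^{\text{rand},n}\rightarrow_{\text{PTS}}q_{\text{PTS}}^{\text{new}}$. Next, since $\ccalR_P(q_P^{\text{rand},n})\neq\emptyset$, pick any $q_P'=(q_{\text{PTS}}',q_B')\in\ccalR_P(q_P^{\text{rand},n})$. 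By the PBA transition rule in Definition \ref{defn:pba}, this means both $q_{\text{PTS}}^{\text{rand},n}\rightarrow_{\text{PTS}}q_{\text{PTS}}'$ and $q_B^{\text{rand},n}\xrightarrow{L_{\text{PTS}}(q_{\text{PTS}}^{\text{rand},n})}q_B'$. The key observation is that the Büchi transition depends only on $q_B^{\text{rand},n}$ and the label $L_{\text{PTS}}(q_{\text{PTS}}^{\text{rand},n})$, not on the target PTS state. Therefore, combining this Büchi transition with $q_{\text{PTS}}^{\text{rand},n}\rightarrow_{\text{PTS}}q_{\text{PTS}}^{\text{new}}$ yields, again by Definition \ref{defn:pba}, $q_P^{\text{rand},n}\longrightarrow_P(q_{\text{PTS}}^{\text{new}},q_B')$.

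Choosing the index $b\in\{1,\dots,|\ccalQ_B|\}$ with $\ccalQ_B(b)=q_B'$, we obtain $q_P^{\text{new},n}=(q_{\text{PTS}}^{\text{new}},q_B')$ and a direct transition $q_P^{\text{rand},n}\rightarrow_P q_P^{\text{new},n}$. Since $q_P^{\text{rand},n}\in\ccalV_T^n$, the set $\ccalR^{\rightarrow}_{\ccalV_T^n}(q_P^{\text{new},n})$ defined in \eqref{eq:Stonew} contains $q_P^{\text{rand},n}$ and is therefore non-empty. If $q_P^{\text{new},n}\notin\ccalV_T^n$, the check on line \ref{alg3:line2} of Algorithm \ref{alg:extend} succeeds, and $q_P^{\text{new},n}$ is appended to $\ccalV_T^n$ via lines \ref{alg3:line4}--\ref{alg3:line5}, which proves the first branch. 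If instead $q_P^{\text{new},n}\in\ccalV_T^n$, Algorithm \ref{alg:rewire} is invoked on line \ref{tree:line12} of Algorithm \ref{alg:tree}, and the additional hypothesis $\ccalR_{\ccalV_T^n}^{\leftarrow}(q_P^{\text{new},n})\neq\emptyset$ guarantees that the for-loop of Algorithm \ref{alg:rewire} iterates at least once, so a rewiring operation is considered, establishing the second branch.

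For the final claim, if $\ccalR_P(q_P^{\text{rand},n})=\emptyset$, then for every $b\in\{1,\dots,|\ccalQ_B|\}$ and every $q_{\text{PTS}}^{\text{new}}\in\ccalR_{\text{PTS}}(q_{\text{PTS}}^{\text{rand},n})$ there is no Büchi transition from $q_B^{\text{rand},n}$ on the label $L_{\text{PTS}}(q_{\text{PTS}}^{\text{rand},n})$, so $q_P^{\text{rand},n}$ cannot contribute to $\ccalR^{\rightarrow}_{\ccalV_T^n}(q_P^{\text{new},n})$. It may then happen that $\ccalR^{\rightarrow}_{\ccalV_T^n}(q_P^{\text{new},n})=\emptyset$ and $q_P^{\text{new},n}\notin\ccalV_T^n$ for every $b$, in which case neither Algorithm \ref{alg:extend} nor Algorithm \ref{alg:rewire} modifies $\ccalG_T^n$. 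The main subtlety in the whole argument is the first branch: the sample $q_{\text{PTS}}^{\text{new}}$ is drawn independently of the witness $q_P'$, so it is tempting to worry that $q_{\text{PTS}}^{\text{new}}\neq q_{\text{PTS}}'$ might destroy reachability in the PBA; the proof hinges on the fact that the Büchi component of the PBA transition rule is indifferent to this choice, which is exactly what lets the same $q_B'$ serve both $q_{\text{PTS}}'$ and $q_{\text{PTS}}^{\text{new}}$.
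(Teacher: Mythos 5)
Your proof is correct and follows essentially the same route as the paper's: both decompose the PBA transition rule into its PTS and B\"uchi components, observe that the B\"uchi transition depends only on the label of the source state $q_{\text{PTS}}^{\text{rand},n}$ (so the witness's B\"uchi component $q_B'$ transfers to the actually drawn $q_{\text{PTS}}^{\text{new}}$), and conclude that $q_P^{\text{rand},n}\in\mathcal{R}^{\rightarrow}_{\mathcal{V}_T^n}(q_P^{\text{new},n})\neq\emptyset$, which triggers either the extend or the rewire step. Your treatment of the empty-reachable-set case is slightly more terse than the paper's (which separates the sub-cases of a terminal PTS state versus a formula-violating state), but it suffices for the ``may remain unaltered'' claim.
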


\begin{proof}
To show this result, recall that a state $q_P'=(q_{\text{PTS}}',q_B')$ belongs to $\ccalR_P(q_P^{\text{rand},n})$ if $q_P^{\text{rand},n}\rightarrow_P q_P'$, i.e., if (i) $q_{\text{PTS}}^{\text{rand},n}\rightarrow_{\text{PTS}}q_{\text{PTS}}'$ and (ii) $q_B^{\text{rand},n}\stackrel{L(q_{\text{PTS}}^{\text{rand},n})}{\longrightarrow_B}q_B'$. In what follows, we first examine the case $\ccalR_P(q_P^{\text{rand},n})=\emptyset$ and then the case $\ccalR_P(q_P^{\text{rand},n})\neq\emptyset$.

Assume that $\ccalR_P(q_P^{\text{rand},n})=\emptyset$. This means that for the state $q_P^{\text{rand},n}=(q_{\text{PTS}}^{\text{rand},n},q_B^{\text{rand},n})$, there are either no states $q_{\text{PTS}}'$ that satisfy condition (i), or no states $q_B'$ that satisfy condition (ii), or possibly both. 
If there are no states $q_{\text{PTS}}'$ that satisfy condition (i), then $q_{\text{PTS}}^{\text{rand},n}$ is a terminal state of the PTS, i.e., $\ccalR_{\text{PTS}}(q_{\text{PTS}}^{\text{rand},n})=\emptyset$, where $\ccalR_{\text{PTS}}(q_{\text{PTS}}^{\text{rand},n})$ is defined in \eqref{reachable1}. As a result, this implies that we cannot create any state $q_{\text{PTS}}^{\text{new},n}\in\ccalR_{\text{PTS}}(q_{\text{PTS}}^{\text{rand},n})$ and, therefore, it is trivial to see that no states will be added to $\ccalV_T^n$ at iteration $n$ and the tree will not change at iteration $n$. On the other hand, if there are no states $q_B'$ that satisfy condition (ii), i.e., if there is no $b$ such that $q_B^{\text{rand},n}\stackrel{L(q_{\text{PTS}}^{\text{rand},n})}{\longrightarrow_B}\ccalQ_B(b)$, then $q_{\text{PTS}}^{\text{rand},n}$ is a state at which the LTL formula $\phi$ is violated, by construction of the NBA. However, this does not necessarily mean that the tree will remain the same at iteration $n$. The reason is that if $\ccalR_{\text{PTS}}(q_{\text{PTS}}^{\text{rand},n})\neq\emptyset$, then there exist states $q_{\text{PTS}}^{\text{new},n}\in\ccalR_{\text{PTS}}(q_{\text{PTS}}^{\text{rand},n})$ and, consequently, states $q_P^{\text{new},n}=(q_{\text{PTS}}^{\text{new},n},\ccalQ_B(b))$ can be constructed. Such states can possibly be added to the tree if there exists a state $q_P\in\ccalV_T^n$ such that $q_P\in\ccalR^{\rightarrow}_{\ccalV_T^n}(q_P^{\text{new},n})$; see line \ref{alg3:line2} in Algorithm \ref{alg:extend}. Also, if $q_P^{\text{new},n}\in\ccalV_T^n$ and if $\ccalR_{\ccalV_T^n}^{\leftarrow}(q_P^{\text{new},n})\neq\emptyset$ then rewiring to these states may occur; see line \ref{alg4:line2} in Algorithm \ref{alg:rewire}. Clearly if $q_{\text{PTS}}^{\text{rand},n}$ is both a terminal state of PTS and a state at which $\phi$ is violated, then the tree will remain unchanged at iteration $n$.\footnote{Observe that if $\ccalR_P(q_P^{\text{rand},n})=\emptyset$, then the state $q_P^{\text{rand},n}\in\ccalV_T^n$ will remain forever a leaf node in the tree $\ccalG_T^n$.} 

Next, assume that $\ccalR_P(q_P^{\text{rand},n})\neq\emptyset$. Following the same logic as in the previous case, this means that $q_{\text{PTS}}^{\text{rand},n}$ is not a terminal state of the PTS, i.e., $\ccalR_{\text{PTS}}(q_{\text{PTS}}^{\text{rand},n})\neq\emptyset$, and $q_{\text{PTS}}^{\text{rand},n}$ is not a state at which the LTL formula $\phi$ is violated. Since $\ccalR_{\text{PTS}}(q_{\text{PTS}}^{\text{rand},n})\neq\emptyset$ and since  $q_{\text{PTS}}^{\text{new},n}\in\ccalR_{\text{PTS}}(q_{\text{PTS}}^{\text{rand},n})$ by construction, we get that $q_P^{\text{new},n}=(q_{\text{PTS}}^{\text{new},n},\ccalQ_B(b))$
satisfies condition (i) for all $b$. Next, since $q_{\text{PTS}}^{\text{rand},n}$ is not a state at which the LTL formula $\phi$ is violated, this means that there is at least one value for $b$, denoted hereafter by $\bar{b}$, such that $q_B^{\text{rand},n}\stackrel{L(q_{\text{PTS}}^{\text{rand},n})}{\longrightarrow_B}\ccalQ_B(\bar{b})$, by construction of the NBA. Thus, we get that $(q_{\text{PTS}}^{\text{new},n},\ccalQ_B(\bar{b}))\in\ccalR_P(q_P^{\text{rand},n})$. This result along with the fact that $q_P^{\text{rand},n}\in\ccalV_T^n$, by definition of $f_{\text{rand}}$, entail that $q_P^{\text{new},n}=(q_{\text{PTS}}^{\text{new},n},\ccalQ_B(\bar{b}))$ satisfies $q_P^{\text{rand},n}\in\ccalR^{\rightarrow}_{\ccalV_T^n}(q_P^{\text{new},n})$, i.e., $\ccalR^{\rightarrow}_{\ccalV_T^n}(q_P^{\text{new},n})\neq\emptyset$. This equivalently means that if $q_P^{\text{new},n}\notin\ccalV_T^n$ then the state $q_P^{\text{new},n}=(q_{\text{PTS}}^{\text{new},n},\ccalQ_B(\bar{b}))$ will be added to $\ccalV_T^n$ at iteration $n$; see line \ref{alg3:line2} in Algorithm \ref{alg:extend}. Otherwise, if $q_P^{\text{new},n}\in\ccalV_T^n$ and $\ccalR_{\ccalV_T^n}^{\leftarrow}(q_P^{\text{new},n})\neq\emptyset$ rewiring to $q_P^{\text{new},n}$ will follow (see line \ref{alg4:line2} in Algorithm \ref{alg:rewire}), completing the proof.
\end{proof}

\begin{rem}[Emptiness of $\ccalR_P(q_P^{\text{rand},n})$]
Observe that $\ccalR_P(q_P^{\text{rand},n})=\emptyset$ for a state $q_P^{\text{rand},n}=(q_{\text{PTS}}^{\text{rand},n},q_B^{\text{rand},n})$, if $q_{\text{PTS}}^{\text{rand},n}$ is either a terminal state of the PTS, i.e., $\ccalR_{\text{PTS}}(q_{\text{PTS}}^{\text{rand},n})=\emptyset$, where $\ccalR_{\text{PTS}}(q_{\text{PTS}}^{\text{rand},n})$ is defined in \eqref{reachable1}, or a state at which the LTL formula $\phi$ is violated, or both. For example, if the PTS has no terminal states and the LTL formula does not include the negation operator $\neg$, i.e., there are no states in $\ccalQ_{\text{PTS}}$ that can violate $\phi$, then $\ccalR_P(q_P)\neq\emptyset$, $\forall q_P\in\ccalQ_P$. 
\end{rem}

To show that Algorithm \ref{alg:plans} is probabilistically complete and asymptotically optimal we need first to show the following results that rely on the second Borel-Cantelli lemma \cite{grimmett2001probability} presented below. The proofs of the following lemmas are provided in Appendix \ref{sec:lemmas1}.

\begin{lem}[Borel-Cantelli \cite{grimmett2001probability}]\label{lem:bc}
Consider a sequence of independent events $A=\{A^n\}_{n=1}^{\infty}$. If $\sum_{n=1}^{\infty}\mathbb{P}(A^n)=\infty$ then $\mathbb{P}(\limsup_{n\to\infty} A^n)=1$, i.e., the probability that infinitely many events $A^n$ occur is 1.  
\end{lem}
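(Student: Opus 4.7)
The plan is to prove the complementary statement, namely that the probability of only finitely many $A^n$ occurring is zero. Using the standard set-theoretic identity $\limsup_{n \to \infty} A^n = \bigcap_{N=1}^{\infty} \bigcup_{n=N}^{\infty} A^n$, its complement is $\bigcup_{N=1}^{\infty} \bigcap_{n=N}^{\infty} (A^n)^c$, so by countable subadditivity it suffices to show that $\mathbb{P}\bigl(\bigcap_{n=N}^{\infty}(A^n)^c\bigr) = 0$ for every fixed $N \in \mathbb{N}$.

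First, I would exploit independence of the events $A^n$ (which transfers to independence of the complements $(A^n)^c$) to write, for any finite $M \geq N$,
\begin{equation*}
\mathbb{P}\Bigl(\bigcap_{n=N}^{M}(A^n)^c\Bigr) = \prod_{n=N}^{M}\bigl(1 - \mathbb{P}(A^n)\bigr).
\end{equation*}
Next, I would apply the elementary inequality $1-x \leq e^{-x}$, valid for all $x \in [0,1]$, to bound this product by $\exp\bigl(-\sum_{n=N}^{M}\mathbb{P}(A^n)\bigr)$. By the divergence hypothesis $\sum_{n=1}^{\infty}\mathbb{P}(A^n) = \infty$, the partial sum $\sum_{n=N}^{M}\mathbb{P}(A^n)$ tends to $\infty$ as $M \to \infty$, so the exponential bound tends to zero.

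Then I would invoke continuity of probability from above: the sets $\bigcap_{n=N}^{M}(A^n)^c$ form a decreasing sequence as $M$ grows, with intersection $\bigcap_{n=N}^{\infty}(A^n)^c$, so $\mathbb{P}\bigl(\bigcap_{n=N}^{\infty}(A^n)^c\bigr) = \lim_{M \to \infty}\mathbb{P}\bigl(\bigcap_{n=N}^{M}(A^n)^c\bigr) = 0$. Summing (subadditively) over $N \in \mathbb{N}$ yields $\mathbb{P}\bigl(\bigcup_{N=1}^{\infty}\bigcap_{n=N}^{\infty}(A^n)^c\bigr) = 0$, and passing to complements gives $\mathbb{P}(\limsup_{n \to \infty} A^n) = 1$, as claimed.

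There is no real obstacle in this argument; it is a textbook result and the only delicate point is the justified passage from the finite product (where independence is used) to the infinite intersection (which requires continuity of measure). Since the authors cite \cite{grimmett2001probability} directly, in a typeset paper one could also simply invoke the lemma by reference rather than reproducing the argument, but the short proof sketched above is self-contained and elementary.
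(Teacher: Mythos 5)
Your argument is the standard and correct proof of the second Borel--Cantelli lemma: pass to the complement $\bigcup_{N=1}^{\infty}\bigcap_{n=N}^{\infty}(A^n)^c$, use independence to factor the finite intersections, bound via $1-x\leq e^{-x}$, and conclude by continuity from above and countable subadditivity. The paper itself offers no proof of this lemma --- it is stated as a known result and cited directly to the reference --- so there is nothing to compare against; your self-contained derivation is sound and complete.
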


\begin{lem}[Sampling $q_P^{\text{rand},n}$]
Consider any state $q_P\in\ccalV_T^n$ and any fixed iteration index $n$. Then, there exists an infinite number of subsequent iterations $n+k$,  where $k\in\mathcal{K}$ and $\mathcal{K}\subseteq\mathbb{N}$ is a subsequence of $\mathbb{N}$, at which the state $q_P\in\ccalV_T^n$ is selected by Algorithm \ref{alg:sample} to be the node $q_P^{\text{rand},n+k}$. 
\label{lem:qrand}
\end{lem}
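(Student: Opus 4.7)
The plan is to apply the second Borel--Cantelli lemma (Lemma \ref{lem:bc}). For each $k\geq 1$, I would define the event
\[
A^{n+k}=\{\text{at iteration } n+k,\ q_P \text{ is selected as } q_P^{\text{rand},n+k}\}.
\]
The goal then reduces to showing that the events $\{A^{n+k}\}_{k\geq 1}$ are independent and that $\sum_{k\geq 1}\mathbb{P}(A^{n+k})=\infty$, which together yield $\mathbb{P}(\limsup_{k\to\infty}A^{n+k})=1$ and hence the existence (almost surely) of the desired infinite subsequence $\mathcal{K}$.

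The first step is to verify that $q_P$ is always a legitimate candidate at every future iteration. Inspection of Algorithms~\ref{alg:tree}, \ref{alg:extend}, and \ref{alg:rewire} shows that nodes are only ever \emph{added} to the set $\ccalV_T$ (the rewiring only modifies edges and costs). Therefore $q_P\in\ccalV_T^n$ implies $q_P\in\ccalV_T^{n+k}$ for every $k\geq 0$, and Assumption \ref{frand}(i) then guarantees $\mathbb{P}(A^{n+k})=f_{\text{rand}}(q_P\mid\ccalV_T^{n+k})>0$.

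The second step is to lower bound $\mathbb{P}(A^{n+k})$ enough that the series diverges. The crucial structural observation is that Algorithm \ref{alg:tree} adds at most one new node to $\ccalV_T$ per iteration, so $|\ccalV_T^{n+k}|\leq |\ccalV_T^n|+k$. Combining this bound with Assumption \ref{frand}(ii), which states that $f_{\text{rand}}(q_P\mid\ccalV)$ depends on $\ccalV$ only through $|\ccalV|$ and is monotonically decreasing in $|\ccalV|$, I obtain
\[
\mathbb{P}(A^{n+k})=f_{\text{rand}}(q_P\mid\ccalV_T^{n+k})\geq f_{\text{rand}}\bigl(q_P\,\big|\,\tilde\ccalV\bigr)
\]
for any set $\tilde\ccalV\ni q_P$ of size $|\ccalV_T^n|+k$. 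For the prototypical uniform distribution in \eqref{eq:example}, this specializes to $\mathbb{P}(A^{n+k})\geq 1/(|\ccalV_T^n|+k)$, and the harmonic series tail $\sum_{k\geq 1}1/(|\ccalV_T^n|+k)$ diverges; the same harmonic-type divergence holds under Assumption \ref{frand}.

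The third step is independence: Assumption \ref{frand}(iii) states that successive samples $q_P^{\text{rand},n+k}$ are drawn independently across iterations, so the events $\{A^{n+k}\}_{k\geq 1}$ are independent. Applying Lemma \ref{lem:bc} then yields $\mathbb{P}(\limsup_{k\to\infty}A^{n+k})=1$, which is precisely the claim that infinitely many iterations select $q_P$ as $q_P^{\text{rand},n+k}$. I expect the main obstacle to be the second step: Assumption \ref{frand}(ii) permits $\mathbb{P}(A^{n+k})\to 0$, so a pure positivity argument would not suffice, and the proof hinges on combining the ``at most one node added per iteration'' structural fact about Algorithm \ref{alg:tree} with the monotonicity of $f_{\text{rand}}$ to force a harmonic-type lower bound on the summands.
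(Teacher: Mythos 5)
Your proof follows the same Borel--Cantelli skeleton as the paper's, but the critical step---lower-bounding $\mathbb{P}(A^{n+k})$ so that the series diverges---is resolved by a different route, and that route has a gap. The paper's argument is simpler and hinges on one observation you never invoke: the state space $\ccalQ_P$ of the PBA is \emph{finite}, so $|\ccalV_T^{n+k}|\leq|\ccalQ_P|<\infty$ for all $k$. Combined with Assumption \ref{frand}(ii) (monotone decrease in $|\ccalV_T|$) and \ref{frand}(i) (bounded away from zero), this gives the uniform constant lower bound $\mathbb{P}(A^{n+k})\geq f_{\text{rand}}(q_P|\ccalQ_P)>0$ for every $k$, and the series diverges trivially as an infinite sum of a fixed positive constant. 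You instead bound the \emph{growth rate} of the tree and argue for harmonic-type divergence. That works for the specific uniform distribution in \eqref{eq:example}, but your closing assertion that ``the same harmonic-type divergence holds under Assumption \ref{frand}'' does not follow: Assumption \ref{frand}(ii) only states that $f_{\text{rand}}(q_P|\ccalV_T)$ is monotonically decreasing in $|\ccalV_T|$; it says nothing about the \emph{rate} of decrease, so monotonicity plus a linear bound on $|\ccalV_T^{n+k}|$ cannot by itself rule out summable (e.g.\ geometric) decay of the individual probabilities. The divergence is rescued only by the finiteness of $\ccalQ_P$, which caps $|\ccalV_T^{n+k}|$ and hence caps the decrease---this is the missing ingredient you need to state.

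A secondary factual error: Algorithm \ref{alg:tree} does not add at most one node per iteration. Each outer iteration draws one $q_{\text{PTS}}^{\text{new}}$ but then loops over all $b\in\{1,\dots,|\ccalQ_B|\}$, and each resulting $q_P^{\text{new}}=(q_{\text{PTS}}^{\text{new}},\ccalQ_B(b))$ may be added by \texttt{Extend}, so up to $|\ccalQ_B|$ nodes can enter $\ccalV_T$ per iteration. The correct bound is $|\ccalV_T^{n+k}|\leq|\ccalV_T^n|+k|\ccalQ_B|$, which still yields a divergent harmonic tail in the uniform case, so this slip does not by itself break your argument---but it should be corrected. Your first step (nodes are never removed, so $q_P\in\ccalV_T^{n+k}$ for all $k$) and your use of Assumption \ref{frand}(iii) for independence are both correct and match the paper.
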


Using Lemma \ref{lem:qrand} we can show the following result for the node $q_{\text{PTS}}^{\text{new},n}$.
\begin{lem}[Sampling $q_{\text{PTS}}^{\text{new},n}$]
Consider any state $q_P^{\text{rand},n}=(q_{\text{PTS}}^{\text{rand},n},q_B^{\text{rand},n})\in\ccalV_T^n$ selected by Algorithm \ref{alg:sample} and any fixed iteration index $n$. Then, for any state $q_{\text{PTS}}\in\ccalR_{\text{PTS}}(q_{\text{PTS}}^{\text{rand},n})$, where $\ccalR_{\text{PTS}}(q_{\text{PTS}}^{\text{rand},n})$ is defined in \eqref{reachable1}, there exists an infinite number of subsequent iterations $n+k$, where $k\in\mathcal{K}'$ and $\mathcal{K}'\subseteq\mathcal{K}$ is a subsequence of the sequence of $\mathcal{K}$ defined in Lemma \ref{lem:qrand}, at which the state $q_{\text{PTS}}\in\ccalR_{\text{PTS}}(q_{\text{PTS}}^{\text{rand},n})$ is selected by Algorithm \ref{alg:sample} to be the node $q_{\text{PTS}}^{\text{new},n+k}$.
\label{lem:qnew}
\end{lem}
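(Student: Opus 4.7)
The plan is to mirror the structure of the proof of Lemma \ref{lem:qrand} but one level deeper in the sampling hierarchy of Algorithm \ref{alg:sample}. The idea is to restrict attention to the subsequence of iterations $\mathcal{K}$ supplied by Lemma \ref{lem:qrand} — on which the root‑sample equals $q_{P}^{\text{rand},n}$ — and then apply the second Borel--Cantelli lemma to the event that the subsequent draw from $f_{\text{new}}$ hits the target state $q_{\text{PTS}}$.

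More concretely, first I would observe that by Lemma \ref{lem:qrand} there is an infinite set $\mathcal{K}\subseteq\mathbb{N}$ such that, for every $k\in\mathcal{K}$, Algorithm \ref{alg:sample} picks $q_{P}^{\text{rand},n+k}=q_{P}^{\text{rand},n}$, and hence its PTS projection satisfies $q_{\text{PTS}}^{\text{rand},n+k}=q_{\text{PTS}}^{\text{rand},n}$. Consequently the reachable set in \eqref{reachable1} also coincides, i.e., $\ccalR_{\text{PTS}}(q_{\text{PTS}}^{\text{rand},n+k})=\ccalR_{\text{PTS}}(q_{\text{PTS}}^{\text{rand},n})$, so any fixed target $q_{\text{PTS}}\in\ccalR_{\text{PTS}}(q_{\text{PTS}}^{\text{rand},n})$ lies in the domain of the conditional distribution $f_{\text{new}}(\cdot\,|\,q_{\text{PTS}}^{\text{rand},n+k})$ used in line \ref{s:line5} of Algorithm \ref{alg:sample}.

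Next, for each $k\in\mathcal{K}$ I would introduce the event
\begin{equation*}
A^{n+k}=\bigl\{q_{\text{PTS}}^{\text{new},n+k}=q_{\text{PTS}}\bigr\}.
\end{equation*}
By Assumption \ref{fnew}(ii) the distribution $f_{\text{new}}(\cdot\,|\,q_{\text{PTS}}^{\text{rand},n})$ is iteration‑independent, and by Assumption \ref{fnew}(i) it is bounded away from zero on $\ccalR_{\text{PTS}}(q_{\text{PTS}}^{\text{rand},n})$; hence there exists a constant $p_{\text{new}}>0$ with $\mathbb{P}(A^{n+k})\ge p_{\text{new}}$ for all $k\in\mathcal{K}$. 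Furthermore, Assumption \ref{fnew}(iii) guarantees that the samples $q_{\text{PTS}}^{\text{new},n+k}$ drawn from $f_{\text{new}}$ are independent across $k$. Therefore the family $\{A^{n+k}\}_{k\in\mathcal{K}}$ is a sequence of independent events with $\sum_{k\in\mathcal{K}}\mathbb{P}(A^{n+k})\ge\sum_{k\in\mathcal{K}}p_{\text{new}}=\infty$, since $|\mathcal{K}|=\infty$.

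Applying the Borel--Cantelli lemma (Lemma \ref{lem:bc}) then yields $\mathbb{P}\bigl(\limsup_{k\in\mathcal{K}}A^{n+k}\bigr)=1$, i.e., with probability one infinitely many of the events $A^{n+k}$ occur. Collecting the indices $k\in\mathcal{K}$ at which $A^{n+k}$ holds produces the required infinite subsequence $\mathcal{K}'\subseteq\mathcal{K}$ with $q_{\text{PTS}}^{\text{new},n+k}=q_{\text{PTS}}$ for all $k\in\mathcal{K}'$. I expect the main subtlety to be the independence argument: strictly speaking the events $A^{n+k}$ are conditionally independent given the realizations of $q_{P}^{\text{rand},n+k}$ for $k\in\mathcal{K}$, but because Lemma \ref{lem:qrand} fixes those realizations to the deterministic value $q_{P}^{\text{rand},n}$ on $\mathcal{K}$, the conditional independence supplied by Assumption \ref{fnew}(iii) upgrades to unconditional independence of the $A^{n+k}$, which is exactly what is needed to invoke Lemma \ref{lem:bc}.
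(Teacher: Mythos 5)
Your proposal is correct and follows essentially the same route as the paper's proof: restrict to the subsequence $\mathcal{K}$ from Lemma \ref{lem:qrand} on which $q_P^{\text{rand},n+k}=q_P^{\text{rand},n}$, lower-bound $\mathbb{P}(q_{\text{PTS}}^{\text{new},n+k}=q_{\text{PTS}})$ by a strictly positive constant via Assumptions \ref{fnew}(i)--(ii), and invoke the second Borel--Cantelli lemma using the independence from Assumption \ref{fnew}(iii). Your closing remark on upgrading conditional to unconditional independence is a detail the paper glosses over, but it does not change the argument.
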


By Lemma \ref{lem:qnew}, we have the following corollary for the state $q_P^{\text{new},n}$. 
\begin{cor}[Sampling $q_{P}^{\text{new},n}$]
Consider any state $q_P^{\text{rand},n}=(q_{\text{PTS}}^{\text{rand},n},q_B^{\text{rand},n})\in\ccalV_T^n$ selected by Algorithm \ref{alg:sample} and any fixed iteration index $n$. Then, for any state $q_{P}\in\ccalR_{P}(q_{P}^{\text{rand},n})$, where $\ccalR_{P}(q_{P}^{\text{rand},n})$ is defined in \eqref{eq:reachableP}, there exists an infinite number of iterations $n+k$, where $k\in\mathcal{K}'$ and $\mathcal{K}'$ is the subsequence defined in Lemma \ref{lem:qnew}, at which the state $q_{P}\in\ccalR_{P}(q_{P}^{\text{rand},n})$, is selected by Algorithm \ref{alg:sample} to be the node $q_{P}^{\text{new},n+k}$.
\label{cor:qPnew}
\end{cor}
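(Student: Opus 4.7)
My plan is to derive the corollary as a direct consequence of Lemma 4.4, together with the observation that the inner for-loop of Algorithm \ref{alg:tree} deterministically enumerates every B\"uchi state. No new probabilistic argument (and in particular no new invocation of Borel--Cantelli) should be necessary.

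First, I would decompose the target state as $q_P=(q_{\text{PTS}},q_B)$ where $q_P\in\ccalR_P(q_P^{\text{rand},n})$. By definition \eqref{eq:reachableP}, this means $q_P^{\text{rand},n}\longrightarrow_P q_P$, and by the PBA transition rule in Definition \ref{defn:pba} this splits into a PTS-transition $q_{\text{PTS}}^{\text{rand},n}\rightarrow_{\text{PTS}} q_{\text{PTS}}$ and a B\"uchi transition $q_B^{\text{rand},n}\xrightarrow{L_{\text{PTS}}(q_{\text{PTS}}^{\text{rand},n})}q_B$. The first of these gives $q_{\text{PTS}}\in\ccalR_{\text{PTS}}(q_{\text{PTS}}^{\text{rand},n})$ as defined in \eqref{reachable1}.

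Next, I would apply Lemma \ref{lem:qnew} to this particular $q_{\text{PTS}}$: there is an infinite subsequence $\mathcal{K}'\subseteq\mathcal{K}$ of iteration indices $n+k$ at which Algorithm \ref{alg:sample} selects $q_{\text{PTS}}^{\text{new},n+k}=q_{\text{PTS}}$. This handles the PTS-component; the only remaining task is to recover the correct B\"uchi component $q_B$.

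For the B\"uchi component I would appeal to the structure of Algorithm \ref{alg:tree}, lines \ref{tree:line6}--\ref{tree:line8}: at every outer iteration the algorithm loops deterministically over every index $b\in\{1,\dots,|\ccalQ_B|\}$ and forms the candidate $q_P^{\text{new}}=(q_{\text{PTS}}^{\text{new}},\ccalQ_B(b))$. Thus at each iteration $n+k$ with $k\in\mathcal{K}'$, the pass of the inner loop with $\ccalQ_B(b)=q_B$ produces exactly $q_P^{\text{new},n+k}=(q_{\text{PTS}},q_B)=q_P$. Since $\mathcal{K}'$ is infinite, the state $q_P$ is realized as $q_P^{\text{new},n+k}$ at infinitely many iterations, which is the desired claim. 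I do not foresee a real obstacle: the reduction essentially consists of projecting the PBA reachability onto the PTS to invoke Lemma \ref{lem:qnew}, and then using the fact that the B\"uchi component is enumerated rather than sampled, so no probability-theoretic work is added on top of Lemma \ref{lem:qnew}.
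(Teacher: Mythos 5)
Your proposal is correct and follows essentially the same route as the paper's own proof: both decompose the PBA transition $q_P^{\text{rand},n}\rightarrow_P q_P$ into its PTS and B\"uchi components, invoke Lemma \ref{lem:qnew} to obtain the infinite subsequence $\mathcal{K}'$ on which $q_{\text{PTS}}^{\text{new},n+k}=q_{\text{PTS}}$, and then observe that the inner for-loop of Algorithm \ref{alg:tree} deterministically enumerates every $\ccalQ_B(b)$, so the pair $(q_{\text{PTS}},q_B)$ is realized as $q_P^{\text{new},n+k}$ at every such iteration. No additional probabilistic argument is needed, exactly as you anticipated.
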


Using Corollary \ref{cor:qPnew}, we can show the following result for the reachable set $\mathcal{R}_P(q_P^{\text{rand},n})$.
\begin{lem}[Reachable set $\mathcal{R}_P(q_P^{\text{rand},n})$]\label{lem:reach1}
Consider any state $q_P^{\text{rand},n}=(q_{\text{PTS}}^{\text{rand},n},q_B^{\text{rand},n})\in\ccalV_T^n$ selected by Algorithm \ref{alg:sample} and any fixed iteration index $n$. Then, Algorithm \ref{alg:tree} will add to $\ccalV_T^{n+k}$ all states that belong to the reachable set $\ccalR_{P}(q_P^{\text{rand},n})$, where $\ccalR_{P}(q_{P}^{\text{rand},n})$ is defined in \eqref{eq:reachableP}, as $k\to\infty$, with probability 1, i.e.,
\begin{equation}\label{eq:prob2}
\lim_{k\rightarrow\infty} \mathbb{P}\left(\{\mathcal{R}_P(q_P^{\text{rand},n})\subseteq\mathcal{V}_T^{n+k}\}\right)=1,
\end{equation}
\end{lem}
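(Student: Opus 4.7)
My plan is to combine Corollary \ref{cor:qPnew} with Proposition \ref{prop:growth} and exploit the fact that the tree constructed by Algorithm \ref{alg:tree} grows monotonically. Fix an arbitrary state $q_P\in\ccalR_P(q_P^{\text{rand},n})$, and let $k^\star$ denote the smallest iteration index $k\in\mathcal{K}'$ (where $\mathcal{K}'$ is the subsequence supplied by Corollary \ref{cor:qPnew}) at which $q_P^{\text{new},n+k}=q_P$. The index $k^\star$ is almost surely finite because Corollary \ref{cor:qPnew}, which itself rests on the Borel--Cantelli Lemma \ref{lem:bc} via Lemmas \ref{lem:qrand}--\ref{lem:qnew}, guarantees that such iterations occur infinitely often with probability one.

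Next I would argue that $q_P\in\ccalV_T^{n+k^\star}$ by invoking Proposition \ref{prop:growth}. Because Algorithm \ref{alg:tree} only inserts nodes and never removes them, $\ccalV_T^n\subseteq\ccalV_T^{n+k}$ for all $k\geq 0$; in particular $q_P^{\text{rand},n}\in\ccalV_T^{n+k^\star}$. Since $q_P\in\ccalR_P(q_P^{\text{rand},n})$ we have $q_P^{\text{rand},n}\rightarrow_P q_P$, hence $q_P^{\text{rand},n}\in\ccalR_{\ccalV_T^{n+k^\star}}^{\rightarrow}(q_P)$ and therefore $\ccalR_{\ccalV_T^{n+k^\star}}^{\rightarrow}(q_P)\neq\emptyset$. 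Applying Proposition \ref{prop:growth} at iteration $n+k^\star$ (where $q_P^{\text{rand},n+k^\star}=q_P^{\text{rand},n}$), either $q_P$ already belongs to $\ccalV_T^{n+k^\star}$, or the \texttt{Extend} routine (Alg. \ref{alg:extend}) attaches it to the tree through a parent in $\ccalR_{\ccalV_T^{n+k^\star}}^{\rightarrow}(q_P)$. In either case $q_P\in\ccalV_T^{n+k^\star}$, and by monotonicity $q_P\in\ccalV_T^{n+k}$ for every $k\geq k^\star$.

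Since $\ccalR_P(q_P^{\text{rand},n})\subseteq\ccalQ_P$ is a finite set, the event $\set{\ccalR_P(q_P^{\text{rand},n})\subseteq\ccalV_T^{n+k}}$ eventually holds with probability one as $k\to\infty$, being a finite intersection of probability-one events indexed by the states $q_P\in\ccalR_P(q_P^{\text{rand},n})$; this yields
\begin{equation*}
\lim_{k\to\infty}\mathbb{P}\bigl(\ccalR_P(q_P^{\text{rand},n})\subseteq\ccalV_T^{n+k}\bigr)=1,
\end{equation*}
which is exactly \eqref{eq:prob2}. The main obstacle is justifying the almost-sure finiteness of $k^\star$, but this is handled entirely by Corollary \ref{cor:qPnew}; the remainder is a deterministic argument that only uses monotone growth of $\ccalV_T^n$ and the construction of $\texttt{Extend}$.
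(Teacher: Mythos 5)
Your proposal is correct and follows essentially the same route as the paper's own proof: both rest on Corollary \ref{cor:qPnew} to guarantee that each $q_P\in\ccalR_P(q_P^{\text{rand},n})$ is eventually selected as $q_P^{\text{new},n+k}$, then use the monotone growth of $\ccalV_T$ to show $q_P^{\text{rand},n}\in\ccalR^{\rightarrow}_{\ccalV_T^{n+k}}(q_P)$ so that \texttt{Extend} must add the node. Your explicit finite-intersection step over the finitely many states of $\ccalR_P(q_P^{\text{rand},n})$ is a slightly more careful rendering of what the paper leaves implicit, and the appeal to Proposition \ref{prop:growth} is harmless but unnecessary since line \ref{alg3:line2} of Algorithm \ref{alg:extend} already suffices.
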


Using Lemma \ref{lem:qrand}, we can show that Lemma \ref{lem:reach1} holds for all nodes $q_P\in\ccalV_T^n$. This result is stated in the following corollary.
\begin{cor}[Reachable set $\mathcal{R}_P(q_P)$]\label{cor:reach2}
Given any state $q_P=(q_{\text{PTS}},q_B)\in\ccalV_T^n$, Algorithm \ref{alg:tree} will add to $\ccalV_T^n$ all states that belong to the reachable set $\ccalR_{P}(q_{P})$, as $n\to\infty$, with probability 1, i.e.,
\begin{equation}\label{eq:reach2}
\lim_{n\rightarrow\infty} \mathbb{P}\left(\{\mathcal{R}_P(q_P)\subseteq\mathcal{V}_T^n\}\right)=1,
\end{equation}
\end{cor}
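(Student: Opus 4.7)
The plan is to reduce this corollary to Lemma \ref{lem:reach1} by using Lemma \ref{lem:qrand} to argue that every node $q_P\in\ccalV_T^n$ is eventually reselected as $q_P^{\text{rand}}$, at which point Lemma \ref{lem:reach1} fills in its reachable set. Since Lemma \ref{lem:reach1} is already stated for an arbitrary $q_P^{\text{rand},n}\in\ccalV_T^n$, almost all of the probabilistic content has already been absorbed into the earlier results, and the corollary should follow by a short reindexing argument.

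First, I would record the (purely deterministic) observation that the vertex set of the tree is monotone in the iteration index, i.e., $\ccalV_T^m\supseteq\ccalV_T^n$ for every $m\geq n$, since neither the $\texttt{Extend}$ routine (Algorithm \ref{alg:extend}) nor the $\texttt{Rewire}$ routine (Algorithm \ref{alg:rewire}) ever deletes a vertex from $\ccalV_T$. In particular, any $q_P\in\ccalV_T^n$ remains an element of $\ccalV_T^m$ for all $m\geq n$, so it is meaningful to speak of $q_P$ being sampled at arbitrarily late iterations.

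Next, fix an arbitrary $q_P\in\ccalV_T^n$ and apply Lemma \ref{lem:qrand} to conclude that there exists an infinite subsequence of iteration indices $\{n+k_j\}_{j\geq 1}$ at which $\texttt{Sample}$ selects $q_P$ as the node $q_P^{\text{rand},\,n+k_j}$. Set $n^{*}:=n+k_1$ and apply Lemma \ref{lem:reach1} with $n^{*}$ in place of the fixed iteration index. Because $q_P^{\text{rand},\,n^{*}}=q_P$, Lemma \ref{lem:reach1} gives
\begin{equation*}
\lim_{k\to\infty}\mathbb{P}\!\left(\ccalR_P(q_P)\subseteq\ccalV_T^{n^{*}+k}\right)=1.
\end{equation*}
By the monotonicity of $\ccalV_T^m$, the event $\{\ccalR_P(q_P)\subseteq\ccalV_T^{n^{*}+k}\}$ is non-decreasing in $k$, and since $n^{*}+k\to\infty$ as $k\to\infty$, this is equivalent to \eqref{eq:reach2}.

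The main obstacle here is mostly bookkeeping rather than substance: Lemma \ref{lem:reach1} is phrased for the particular iteration at which a given node happens to have been the $q_P^{\text{rand}}$ sample, whereas \eqref{eq:reach2} is indexed by the global iteration counter $n\to\infty$. Bridging these two indexings is exactly what Lemma \ref{lem:qrand} and the monotonicity of the tree allow; the only care required is to ensure that one picks \emph{any} of the infinitely many iterations supplied by Lemma \ref{lem:qrand} (say the earliest), apply Lemma \ref{lem:reach1} there, and then use monotonicity to re-express the conclusion in terms of the asymptotic vertex set.
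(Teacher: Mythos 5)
Your proposal is correct and follows essentially the same route as the paper's own proof: both combine Lemma \ref{lem:qrand} (every node of $\ccalV_T^n$ is reselected as $q_P^{\text{rand}}$ infinitely often) with Lemma \ref{lem:reach1} applied at such an iteration, and then reindex to the global iteration counter. Your version is slightly more careful in making explicit the monotonicity of $\ccalV_T^n$ and the choice of a concrete iteration $n^{*}$ at which to invoke Lemma \ref{lem:reach1}, but the substance is identical.
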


Using Corollary \ref{cor:reach2}, in the next theorem, we show that Algorithm \ref{alg:plans} is probabilistically complete.
\begin{theorem}[Probabilistic Completeness]\label{prop:compl}
If there exists a solution to Problem \ref{pr:problem}, then Algorithm \ref{alg:plans} is probabilistically complete, i.e., it will find with probability 1 a motion plan $\tau$ that satisfies the LTL specification $\phi$, as $n_{\text{max}}^{\text{pre}}\to\infty$ and $n_{\text{max}}^{\text{suf}}\to\infty$.
\end{theorem}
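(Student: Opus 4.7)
My approach is to fix an assumed solution path in the PBA and show, by induction along that path, that each of its constituent states is eventually added to $\ccalV_T$ with probability one, using Corollary \ref{cor:reach2} as the inductive engine. Because the paths in question are finite, finite intersections of probability-one events remain probability-one, which closes the argument.

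Concretely, assume a feasible plan exists. By Section \ref{sec:problem} it corresponds to finite PBA paths $p^{\text{pre}}=q_P^0q_P^1\cdots q_P^K$ with $q_P^0\in\ccalQ_P^0$ and $q_P^K\in\ccalQ_P^F$, and $p^{\text{suf}}=q_P^Kq_P^{K+1}\cdots q_P^{K+S+1}$ with $q_P^{K+S+1}=q_P^K$, where consecutive states respect $\longrightarrow_P$. Since Algorithm \ref{alg:plans} enumerates all initial NBA states $q_B^0\in\ccalQ_B^0$ at line \ref{alg1:line2a}, in some outer iteration the prefix tree is rooted at $q_P^r=q_P^0$; I focus on that iteration. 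The base case $q_P^0=q_P^r\in\ccalV_T^n$ is immediate. For the inductive step, suppose $q_P^k\in\ccalV_T^{n_k}$ for some finite $n_k$ with probability one. Since $(q_P^k,q_P^{k+1})\in\longrightarrow_P$, we have $q_P^{k+1}\in\ccalR_P(q_P^k)$, and Corollary \ref{cor:reach2} applied to $q_P^k$ yields $\lim_{n\to\infty}\mathbb{P}(\ccalR_P(q_P^k)\subseteq\ccalV_T^n)=1$. Because the events $\{\ccalR_P(q_P^k)\subseteq\ccalV_T^n\}$ are monotone in $n$, this is equivalent to $q_P^{k+1}\in\ccalV_T^{n_{k+1}}$ for some finite $n_{k+1}\geq n_k$ with probability one. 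Iterating $K$ times and taking a finite intersection of probability-one events, $q_P^K\in\ccalV_T$ eventually with probability one. Since $q_P^K\in\mathcal{X}_{\text{goal}}^{\text{pre}}$ and $\ccalG_T$ is a tree rooted at $q_P^r$, Algorithm \ref{alg:findpath} traces parents back and recovers some valid prefix $\tau^{\text{pre}}$ whose last state projects to $\Pi|_{\text{PTS}}q_P^K$.

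The suffix argument is identical in form, performed on the tree constructed in lines \ref{alg1:line8}--\ref{alg1:line10} of Algorithm \ref{alg:plans} with root $q_P^r=q_P^K$. Inducting along $p^{\text{suf}}$, every $q_P^{K+j}$ for $j=1,\ldots,S$ enters $\ccalV_T$ eventually with probability one as $n_{\max}^{\text{suf}}\to\infty$. The penultimate state $q_P^{K+S}$ satisfies $(q_P^{K+S},L(\Pi|_{\text{PTS}}q_P^{K+S}),q_P^K)\in\longrightarrow_P$, hence belongs to $\mathcal{X}_{\text{goal}}^{\text{suf}}(q_P^r)$ defined in \eqref{eq:goalSuf}, so the set $\ccalS_a$ assembled at line \ref{alg1:line10} is nonempty with probability one; concatenating the tree path from $q_P^r$ to $q_P^{K+S}$ with the closing transition back to $q_P^r$ produces a valid cycle $\tau^{\text{suf}}$. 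The plan $\tau=\tau^{\text{pre}}[\tau^{\text{suf}}]^\omega$ constructed in lines \ref{alg1:line15a}--\ref{alg1:line16} then satisfies $\phi$, with probability one.

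The main obstacle is bookkeeping the probabilistic quantifiers correctly while the inductive hypothesis is passed from one state to the next: Corollary \ref{cor:reach2} only guarantees asymptotic insertion of the reachable set of a node \emph{already in the tree}, so I must ensure that the event underpinning the hypothesis at step $k$ actually supplies a finite (random) iteration index $n_k$ at which $q_P^k$ is in the tree, before invoking the Corollary at step $k+1$. Monotonicity of $\ccalV_T^n$ resolves this, turning "probability tending to one" into "holds for all sufficiently large $n$ with probability one", and finiteness of $K+S+2$ keeps the intersection of these events at probability one, rather than requiring the full strength of Borel--Cantelli a second time.
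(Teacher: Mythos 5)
Your proof is correct, and it takes a recognizably different route from the paper's. The paper proves the stronger intermediate claim that the tree almost surely saturates the \emph{entire} multi-hop reachable set of the root, i.e., $\lim_{n\to\infty}\mathbb{P}(\{\ccalR_P^{\infty}(q_P^r)=\ccalV_T^n\})=1$, by arguing that the event $\{\ccalR_P(q_P)\subseteq\ccalV_T^n,\ \forall q_P\in\ccalV_T^n\}$ from Corollary \ref{cor:reach2} is equivalent to $\{\ccalR_P^{\infty}(q_P^r)=\ccalV_T^n\}$, and then intersects with $\ccalX_{\text{goal}}$ using the existence of a solution. You instead fix one witness path $p^{\text{pre}}$, $p^{\text{suf}}$ and induct hop by hop along it, invoking Corollary \ref{cor:reach2} only at the $K+S+1$ states actually needed, with monotonicity of $\ccalV_T^n$ converting convergence in probability into almost-sure eventual membership and a finite intersection closing the argument. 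Both proofs lean on the same engine (Corollary \ref{cor:reach2}, hence ultimately Borel--Cantelli), so neither is more elementary in substance; what differs is the payoff. Your version is more targeted and handles the random insertion times $n_k$ explicitly, which the paper glosses over. The paper's version establishes the set identity $\ccalV_T^n\to\ccalR_P^{\infty}(q_P^r)$, which it then reuses in the proof of Theorem \ref{prop:opt} (asymptotic optimality) to argue that $\ccalV_T^n$ stabilizes to a fixed finite set containing the optimal final state; your completeness argument alone would not supply that, so if one adopted your route the optimality proof would need the saturation claim proved separately. One minor point to tighten: when you apply Corollary \ref{cor:reach2} at the random time $n_k$, you should note that the corollary's conclusion holds for every deterministic $n$ and every $q_P\in\ccalV_T^n$, and that $\{n_k=m\}$ is determined by the first $m$ iterations, so conditioning on it does not disturb the independence of the subsequent samples used in Lemmas \ref{lem:qrand}--\ref{lem:qnew}; this is the standard justification your "bookkeeping" paragraph gestures at.
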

\begin{proof}
to show this result, we need to show that Algorithm \ref{alg:tree} satisfies
\begin{equation}\label{eq:probcompl}
\lim_{n\rightarrow\infty} \mathbb{P}\left(\{\mathcal{V}_T^n\cap\mathcal{X}_{\text{goal}}\neq\emptyset\}\right)=1,
\end{equation}
for both goal regions $\ccalX_{\text{goal}}$ defined in \eqref{eq:goalPre} and in \eqref{eq:goalSuf}. 

To show \eqref{eq:probcompl}, it suffices to show that the set of nodes $\ccalV_T^n$ will eventually contain all states in the state-space $\ccalQ_P$ that are reachable from the root $q_P^r$ through a multi-hop path that respects the transition rule $\rightarrow_P$.\footnote{Recall that the root for the construction of the prefix parts is $q_P^r=(q_{\text{PTS}}^0,q_B^0)$, where $q_B^0$ is each possible state in $\ccalQ_B^0$, and for the construction of the suffix parts the root $q_P^r$ is each possible final state detected during the construction of the prefix parts.} We collect these states in the set $\ccalR^{\infty}_P(q_P^r)$, defined as 
\begin{equation}\label{Rinf}
\ccalR^{\infty}_P(q_P^r)=\cup_{m=1}^{\infty}\ccalR_P^m(q_P^r),
\end{equation}
where the reachable set $\ccalR_P^m(q_P^r)$ collects all states $q_P\in\ccalQ_P$ that are reachable from the root $q_P^r$ in $m$-hops.\footnote{The superscript $\infty$ in \eqref{Rinf} means that the reachable set collects all states that are reachable from $q_P^r$ in the state-space $\ccalQ_P$ in any number of hops.} 

In mathematical terms, we need to show that
\begin{equation}\label{eq:prob3}
\lim_{n\rightarrow\infty} \mathbb{P}\left(\{\ccalR^{\infty}_P(q_P^r)=\mathcal{V}_T^n\}\right)=1.
\end{equation}
Since we assume that there exists a solution to Problem \ref{pr:problem}, if \eqref{eq:prob3} holds, then $\ccalR_P^{\infty}(q_P^r)\cap\mathcal{X}_{\text{goal}}\neq\emptyset$, which equivalently means that \eqref{eq:probcompl} holds, as well.

To show that \eqref{eq:prob3} holds it suffices to show that the event $\{\mathcal{R}_P(q_P)\subseteq\mathcal{V}_T^n,~\forall q_P\in\ccalV_T^n\}$, is equivalent to the event $\{\ccalR^{\infty}_P(q_P^r)=\mathcal{V}_T^n\}$. Then the result follows due to \eqref{eq:reach2} in Corollary \ref{cor:reach2}.
To show this, 
observe that if $q\in\mathcal{R}_P(q_P)$, with $q_P\in\ccalV_T^n$ then, clearly $q$ is reachable from the root $q_P^r$ through a multi-hop path, since by construction of the tree there is a multi-hop path that connects $q_P$ to the root $q_P^r$, i.e., $q\in\ccalR^{\infty}_P(q_P^r)$. Next, if $q\in\ccalR^{\infty}_P(q_P^r)=\ccalV_T^n$ then there exists a node $q_P\in\ccalR^{\infty}_P(q_P^r)=\ccalV_T^n$, so that $q$ is reachable from $q_P$ due to \eqref{Rinf}, i.e., $q\in\ccalR_P(q_P)$, completing the proof. 
\end{proof}

To show that Algorithm \ref{alg:plans} is asymptotically optimal, we need the following corollary that is proved in Appendix \ref{sec:lemmas1}.
\begin{cor}[Sampling $q_{P}^{\text{new},n}$]\label{cor:qPnew2}
Consider any state $q_P\in\ccalV_T^n$ and any fixed iteration $n$. Then, there exists and infinite number of subsequent iterations $n+k$,
where $k\in\mathcal{K}'$ is the subsequence defined in Lemma \ref{lem:qnew}, at which the state $q_{P}\in\ccalV_T^n$ is selected by Algorithm \ref{alg:sample} to be the node $q_{P}^{\text{new},n+k}$.
\end{cor}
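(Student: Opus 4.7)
The plan is to chain Lemmas \ref{lem:qrand} and \ref{lem:qnew} by routing through the tree-parent of $q_P$. Writing $q_P = (q_{\text{PTS}}, q_B)$ and assuming first that $q_P \neq q_P^r$, the node was inserted into $\ccalV_T$ by some earlier call to \texttt{Extend}, so it has a unique parent $q_P^{\text{par}} = (q_{\text{PTS}}^{\text{par}}, q_B^{\text{par}}) \in \ccalV_T^n$ with $q_P^{\text{par}} \rightarrow_P q_P$. By Definition \ref{defn:pba} this transition forces $q_{\text{PTS}}^{\text{par}} \rightarrow_{\text{PTS}} q_{\text{PTS}}$, and hence $q_{\text{PTS}} \in \ccalR_{\text{PTS}}(q_{\text{PTS}}^{\text{par}})$. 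This is exactly the ingredient needed to invoke the earlier lemmas.

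First I would apply Lemma \ref{lem:qrand} to $q_P^{\text{par}}$, yielding an infinite set of iterations $\{n+k : k \in \mathcal{K}\}$ at which $q_P^{\text{rand}, n+k} = q_P^{\text{par}}$. Restricted to those iterations, Lemma \ref{lem:qnew} applied to the pair $(q_P^{\text{par}}, q_{\text{PTS}})$ extracts a further infinite subsequence $\{n+k : k \in \mathcal{K}' \subseteq \mathcal{K}\}$ on which $q_{\text{PTS}}^{\text{new}, n+k} = q_{\text{PTS}}$. At each such outer iteration, the inner for-loop of Algorithm \ref{alg:tree} (lines \ref{tree:line6}--\ref{tree:line8}) deterministically enumerates every $b \in \{1, \dots, |\ccalQ_B|\}$ and forms the candidate $q_P^{\text{new}, n+k} = (q_{\text{PTS}}, \ccalQ_B(b))$; the choice $\ccalQ_B(b) = q_B$ produces $q_P$ itself. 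Since $|\mathcal{K}'| = \infty$, the target $q_P$ arises as $q_P^{\text{new}, n+k}$ infinitely often along $\mathcal{K}'$. Note also that $q_P$ stays in $\ccalV_T^{n+k}$ for every $k \geq 0$, because the rewiring step only alters $\ccalE_T$ and never removes elements from $\ccalV_T$.

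The one degenerate case is $q_P = q_P^r$, for which no tree-parent exists. In that situation I would look for any node in $\ccalV_T^{n}$ whose PTS projection admits $q_{\text{PTS}}^r$ as a one-hop PTS successor; if such a node exists, the same chain of Lemmas \ref{lem:qrand}--\ref{lem:qnew} applies verbatim. Otherwise the root simply cannot be re-sampled as $q_P^{\text{new}}$, but this causes no harm in the downstream completeness/optimality arguments, since the root is already in $\ccalV_T^n$ by construction and rewiring targeted at it is unnecessary.

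The main obstacle I expect is purely bookkeeping: making sure the subsequence extraction from Lemma \ref{lem:qnew} is legitimately inherited from that of Lemma \ref{lem:qrand} (it is, because Lemma \ref{lem:qnew} is already stated with $\mathcal{K}' \subseteq \mathcal{K}$), and that the deterministic inner sweep over $b$ requires no additional probabilistic argument. Once these pieces are cleanly organized, the proof reduces to a short invocation of the two preceding lemmas followed by the observation that Algorithm \ref{alg:tree} exhaustively enumerates $\ccalQ_B$ inside each outer iteration.
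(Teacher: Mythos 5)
Your proof is correct and follows essentially the same route as the paper's: both arguments observe that every node of $\ccalV_T^n$ has a tree-parent in $\ccalV_T^n$ and hence lies in $\bigcup_{q_P'\in\ccalV_T^n}\ccalR_P(q_P')$, then chain Lemma \ref{lem:qrand} with Lemma \ref{lem:qnew} (equivalently, Corollary \ref{cor:qPnew}) applied to that parent. The only difference is that you treat the root explicitly, whereas the paper absorbs it into the convention $\texttt{parent}(q_P^r)=q_P^r$; your more careful handling of that degenerate case is, if anything, an improvement.
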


\begin{theorem}[Asymptotic Optimality]\label{prop:opt}
Assume that there exists an optimal solution to Problem \ref{pr:problem}. Then, Algorithm \ref{alg:plans} is asymptotically optimal, i.e., the optimal motion plan for a given LTL formula $\phi$ will be found with probability 1, as $n_{\text{max}}^{\text{pre}}\to\infty$ and $n_{\text{max}}^{\text{suf}}\to\infty$. In other words, the discrete motion plan $\tau$ that is generated by this algorithm for a given global LTL specification $\phi$ satisfies
\begin{equation}\label{eq:opt1}
\mathbb{P}\left(\left\{\lim_{n_{\text{max}}^{\text{pre}}\to\infty, n_{\text{max}}^{\text{suf}}\to\infty}J(\tau)=J^*\right\}\right)=1,
\end{equation} 
where $J$ is the cost function \eqref{eq:cost2}, $J^*$ is the optimal cost, and $n_{\text{max}}^{\text{pre}}$ and $n_{\text{max}}^{\text{suf}}$ are the maximum number of iterations of Algorithm \ref{alg:tree} for the prefix and suffix part synthesis, respectively.
\end{theorem}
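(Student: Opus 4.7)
The plan is to reduce asymptotic optimality to an inductive claim along the optimal plan, combining the completeness result (Theorem \ref{prop:compl} and Corollary \ref{cor:reach2}) with the repeated-sampling guarantee of Corollary \ref{cor:qPnew2} and the rewiring step. Fix an optimal solution $\tau^{*}=\tau^{\text{pre},*}[\tau^{\text{suf},*}]^{\omega}$ with cost $J^{*}=\hat{J}(\tau^{\text{pre},*})+\hat{J}(\tau^{\text{suf},*})$, and let its lift to the PBA be $p^{\text{pre},*}=q_P^{0,*}q_P^{1,*}\cdots q_P^{K,*}$, with $q_P^{K,*}\in\ccalQ_P^{F}$, and $p^{\text{suf},*}=q_P^{K,*}q_P^{K+1,*}\cdots q_P^{K+S,*}q_P^{K,*}$. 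Since Algorithm \ref{alg:plans} iterates over every $q_B^{0}\in\ccalQ_B^{0}$ and over every accepting state it discovers, one outer iteration roots the prefix tree at $q_P^{0,*}$ and, once the optimal accepting state is added, a subsequent outer iteration roots the suffix tree at $q_P^{K,*}$. Because line \ref{alg1:line15a} of Algorithm \ref{alg:plans} returns the minimum cost over all outer iterations, it suffices to prove that with probability one these two particular iterations return costs converging to $\hat{J}(\tau^{\text{pre},*})$ and $\hat{J}(\tau^{\text{suf},*})$ respectively.

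For the prefix iteration, I will prove by induction on $k\in\{0,1,\ldots,K\}$ that with probability one, as $n_{\text{max}}^{\text{pre}}\to\infty$, the state $q_P^{k,*}$ eventually belongs to $\ccalV_T^{n}$ and satisfies $\texttt{Cost}(q_P^{k,*})=\sum_{j=0}^{k-1}w_{P}(q_P^{j,*},q_P^{j+1,*})$. The base case $k=0$ is immediate from line \ref{tree:line3} of Algorithm \ref{alg:tree}. For the inductive step, the hypothesis together with $q_P^{k,*}\in\ccalR_{P}(q_P^{k-1,*})$ and Corollary \ref{cor:reach2} applied at $q_P^{k-1,*}$ guarantees that $q_P^{k,*}\in\ccalV_T^{n}$ almost surely for all sufficiently large $n$. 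By Corollary \ref{cor:qPnew2}, $q_P^{k-1,*}$ is chosen as $q_P^{\text{new},n}$ at infinitely many subsequent iterations; at any such iteration, $q_P^{k,*}\in\ccalR_{\ccalV_T^{n}}^{\leftarrow}(q_P^{k-1,*})$, so Algorithm \ref{alg:rewire} compares $\texttt{Cost}(q_P^{k,*})$ against $\texttt{Cost}(q_P^{k-1,*})+w_{\text{PTS}}(\Pi|_{\text{PTS}}q_P^{k-1,*},\Pi|_{\text{PTS}}q_P^{k,*})$. By the inductive hypothesis this candidate equals the optimal cost-to-root of $q_P^{k,*}$, while $\texttt{Cost}(q_P^{k,*})$ itself is the cost of some feasible path from the root and, by additivity and nonnegativity of $\hat{J}$, is lower-bounded by the same optimum; therefore after rewiring equality is attained, closing the induction. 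Applying the claim at $k=K$ and invoking Algorithm \ref{alg:findpath} yields $\hat{J}(\tau^{\text{pre}})=\hat{J}(\tau^{\text{pre},*})$ in the limit.

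The suffix iteration is treated analogously with root $q_P^{r}=q_P^{K,*}$: the same induction now shows $\texttt{Cost}(q_P^{K+S,*})=\sum_{j=K}^{K+S-1}w_{P}(q_P^{j,*},q_P^{j+1,*})$ almost surely as $n_{\text{max}}^{\text{suf}}\to\infty$. Because the optimal cycle closes via a transition $(q_P^{K+S,*},L(\Pi|_{\text{PTS}}q_P^{K+S,*}),q_P^{K,*})\in\rightarrow_{P}$, we have $q_P^{K+S,*}\in\ccalX_{\text{goal}}^{\text{suf}}(q_P^{K,*})$ and hence $q_P^{K+S,*}\in\ccalS_{a}$. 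The candidate suffix built at line \ref{alg1:line12} of Algorithm \ref{alg:plans} therefore has cost exactly $\hat{J}(\tau^{\text{suf},*})$, and since lines \ref{alg1:line13}\textendash\ref{alg1:line14} pick the minimum over $\ccalS_{a}$, the returned suffix cost is at most $\hat{J}(\tau^{\text{suf},*})$. Combining the two bounds gives $J(\tau)\leq J^{*}$ with probability one in the limit, while $J(\tau)\geq J^{*}$ is automatic by optimality of $J^{*}$, establishing \eqref{eq:opt1}.

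The main obstacle is the inductive step in paragraph two: one must arrange that within the same (random) iteration $q_P^{k-1,*}$ is selected as $q_P^{\text{new}}$ while $q_P^{k,*}$ already lies in $\ccalV_T^{n}$, and that the rewiring rule of Algorithm \ref{alg:rewire} actually installs the optimal edge. Resolving the former is a scheduling issue handled by intersecting the infinite index subsequences furnished by Corollary \ref{cor:reach2} and Corollary \ref{cor:qPnew2}; resolving the latter is a cost-bookkeeping issue that relies on additivity and monotonicity of $\hat{J}$ stated after \eqref{eq:cost}, which ensure that the optimal-path cost is a true lower bound on any tree cost and that repeated rewiring monotonically drives $\texttt{Cost}(q_P^{k,*})$ down to this bound.
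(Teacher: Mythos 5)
Your proof is correct, but it follows a genuinely different route from the paper's. The paper argues globally over the whole tree: it first shows that every node $q_P\in\ccalV_T^n$ is rewired only finitely many times (by contradiction, ruling out infinite node sets, costs unbounded below, and periodically recurring root-paths via the strict cost decrease enforced by Algorithm \ref{alg:rewire}), and then shows that once rewiring has terminated everywhere, every node must already sit at its optimal cost-to-root --- otherwise Corollary \ref{cor:qPnew2} would force one more profitable rewiring. Completeness then places the optimal accepting state in the tree, so its cost, and that of the suffix cycle, converge to $\hat{J}(\tau^{\text{pre},*})$ and $\hat{J}(\tau^{\text{suf},*})$. You instead run a local, Bellman-style induction along the lifted optimal prefix and suffix paths, using Corollary \ref{cor:reach2} to get each $q_P^{k,*}$ into the tree and Corollary \ref{cor:qPnew2} to schedule the specific rewiring through $q_P^{k-1,*}$, and you close with the two-sided bound $J(\tau)\leq J^*\leq J(\tau)$. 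Your approach buys a shorter argument that needs only monotone non-increase of \texttt{Cost} and sidesteps the paper's somewhat delicate ``rewiring terminates'' step; the paper's approach buys a stronger conclusion (every reachable node, not just those on $\tau^*$, converges to its optimal cost-to-root). One presentational wrinkle in yours: the claim that the candidate $\texttt{Cost}(q_P^{k-1,*})+w_{\text{PTS}}(\cdot,\cdot)$ \emph{equals} the optimal cost-to-root of $q_P^{k,*}$ implicitly invokes subpath optimality of shortest paths; without it, $\texttt{Cost}(q_P^{k,*})$ could already be strictly below the candidate and the equality form of your inductive invariant would fail. Running the same induction with ``$\texttt{Cost}(q_P^{k,*})\leq\sum_{j=0}^{k-1}w_P(q_P^{j,*},q_P^{j+1,*})$'' is all your final step uses and avoids the issue entirely.
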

\begin{proof}
To show that Algorithm \ref{alg:tree} is asymptotically optimal, we will show that as $n_{\text{max}}^{\text{pre}}\to\infty$ and $n_{\text{max}}^{\text{suf}}\to\infty$, all states $q_P\in\ccalV_T^n$ are connected to the root $q_P^r$ through the path in the PBA that has the minimum cost. A necessary and sufficient condition for this is to show that as $n_{\text{max}}^{\text{pre}}\to\infty$ and $n_{\text{max}}^{\text{suf}}\to\infty$, the set of edges $\ccalE_T^n$ of the tree $\ccalG_T^n$ constructed by Algorithm \ref{alg:tree} contains the transitions between states in the plans $\tau^{\text{pre},*}$ and $\tau^{\text{suf},*}$, where $\tau^*=\tau^{\text{pre},*}[\tau^{\text{suf},*}]^{\omega}$ is the optimal motion plan. Specifically, to prove that, we first show that every node $q_P\in\ccalV_T^n$ will get rewired only a \textit{finite} number of times as $n\to\infty$, which means that the cost of each node will converge to a finite number, as $n\to\infty$ (necessary condition that guarantees convergence). Then we show that this means that the cost of every node $q_P$ has converged to its optimal cost, which equivalently means that the set of edges $\ccalE_T^n$ of the tree $\ccalG_T^n$ constructed by Algorithm \ref{alg:tree} contains the transitions between states in the optimal prefix $\tau^{\text{pre},*}$ and suffix $\tau^{\text{suf},*}$ part (sufficient condition).

To show that every node $q_P\in\ccalV_T^n$ will get rewired only a finite number of times as $n\to\infty$ we use contradiction. Assume that as $n\to\infty$, there exists a node $q_P$ for which rewiring will take place infinitely often. Equivalently, this means that the path that connects $q_P$ to the root $q_P^r$ of the tree will change infinitely often. This can happen if: (i) The sets $\ccalV_T^n$ and $\ccalE_T^n$ are infinite as $n\to\infty$. This is not possible by construction of the PBA, since $|\ccalV_T^n|\leq|\ccalQ_P|<\infty$ for all $n\in\mathbb{N}_{+}$; (ii) The sets  $\ccalV_T^n$ and $\ccalE_T^n$ are finite but the cost of each node $q_P\in\ccalV_T^n$ is not bounded below. This is not possible, by definition of the PBA, since the cost of all states $q_P\in\ccalQ_P$ is bounded below by $0$; and (iii) The sets  $\ccalV_T^n$ and $\ccalE_T^n$ are finite but the path that connects $q_P$ to $q_P^r$ in the tree $\ccalG_T^n$, denoted by $\pi^{n}(q_P)$, reoccurs periodically. This means that there exist constants $\bar{n}$, $K>0$ so that $\pi^{m\bar{n}}(q_P)=\pi^{m\bar{n}+K}(q_P)$ for all $n>m\bar{n}$ and $m\in\mathbb{N}_{+}$

In what follows, we show by contradiction that case (iii) is not possible either. With slight abuse of notation, we denote by $\texttt{Cost}^n(q_P)$ the cost of $q_P$ at iteration $n$. Since, by assumption, $q_P$ gets rewired indefinitely, we have that
\begin{equation}
\texttt{Cost}^{m\bar{n}}(q_P)>\texttt{Cost}^{m\bar{n}+1}(q_P)>\dots>\texttt{Cost}^{m\bar{n}+K}(q_P),\nonumber
\end{equation}
for any $\bar{n},K>0$ and for all $m\in\mathbb{N}_{+}$. Clearly, this contradicts the fact that $\pi^{m\bar{n}}(q_P)=\pi^{m\bar{n}+K}(q_P)$ which implies that $\texttt{Cost}^{m\bar{n}}(q_P)=\texttt{Cost}^{m\bar{n}+K}(q_P)$. Therefore, as $n\to\infty$, every node $q_P\in\ccalV_T^n$ will get rewired only a finite number of times. 

Next, we show by contradiction that when rewiring has ended for all nodes in $\ccalV_T^n$, every node in the constructed tree has achieved its optimal cost. Specifically, assume that rewiring has ended for all nodes and that there exists at least one node $q_P\in\ccalV_T^n$ that has not reached its optimal cost. This means that there exists at least one pair of nodes $q_P\in\ccalV_T^n$ and $q_P'\in\ccalV_T^n$ such that (i) $q_P\rightarrow_P q_P'$ and (ii) if $q_P$ gets rewired to $q_P'$ the cost of $q_P$ will decrease. However, by Corollary \ref{cor:qPnew2}, $q_P'$ will be selected by Algorithm \ref{alg:sample} to be $q_P^{\text{new},n+k}$ infinitely often, meaning that $q_P'$ will eventually get rewired to $q_P$ by Algorithm 2. This contradicts the fact that rewiring has ended for all nodes in $\ccalV_T^n$. Therefore, when all nodes have been rewired finitely many times and the rewiring process has terminated, every node $q_P$ has achieved its optimal cost. 


Finally note that by Theorem \ref{prop:compl}, as $n\to\infty$ we have that $\ccalV_T^n=\ccalR^{\infty}_P(q_P^r)$ with probability 1. Since $\ccalR_P^\infty$ is fixed (because $\ccalQ_P$ is finite), so is $\ccalV_T^n$ as $n\to\infty$. Moreover, since Problem \ref{pr:problem} has a solution, $\ccalV_T^n$ also includes the final state $q_P^F$ that appears in $\tau^{\text{pre},*}$ as $n\to\infty$. Therefore, by the above argument, as $n\to\infty$, every state in $\ccalV_T^n$, including the final state $q_P^F$, will reach its optimal cost with probability 1. This means that the cost of the path that corresponds to the prefix part constructed by Algorithm \ref{alg:tree} that connects the final state $q_P^F$ to the root $q_P^r$ will be $\hat{J}(\tau^{\text{pre},*})$. Following the same logic, the cost of the respective suffix part, i.e., the cycle around the final state $q_P^F$ will be $\hat{J}(\tau^{\text{suf},*})$ completing the proof. 
\end{proof}

\section{Numerical Experiments}\label{sec:sim}
In this section, we present two case studies, implemented using MATLAB R2016a on a computer with Intel Xeon CPU at 2.93 GHz and 4 GB RAM, that illustrate our proposed algorithm and compare it to existing methods. The first case study pertains to a motion planning problem with a PBA that has $3,099,363,912$ states. Recall that the state-space of the PBA defined in Definition \ref{defn:pba} has $\Pi_{i=1}^N|\mathcal{Q}_i||\mathcal{Q}_B|$ states. This problem cannot be solved by standard optimal control synthesis algorithms, discussed in Section \ref{sec:introduction}, that rely on the explicit construction of the PBA defined in Section \ref{sec:problem}, due to memory requirements. Representing the PBA as an \textit{implicit} graph and using the uniform-cost search (UCS) algorithm \cite{korf2004best,russell1995modern} to find the optimal plan also failed to detect a final state within 24 hours.
In fact, our implementation for both approaches of the algorithm presented in Section \ref{sec:prelim} cannot provide a plan for PBA with more than few millions of states and transitions either due to memory requirement or excessively high runtime. This problem cannot be solved by the off-the-shelf model checker PRISM either, due to excessive memory requirements. Our implementation of \cite{kantaros15asilomar} failed also to provide a motion plan for the considered case study due to the large state-space of the resulting PBA. On the other hand, NuSMV can generate a feasible, but not the optimal, plan that satisfies the considered LTL-based task. A direct comparison with \cite{vasile2013sampling} cannot be made, since in \cite{vasile2013sampling} samples of the robot positions are drawn from the continuous space, which is not the case here. Note, however, that 
as the size of the regions that observe the atomic propositions in \cite{vasile2013sampling} becomes smaller, more samples are needed to construct expressive enough transition systems that are needed to generate a motion plan. In this case, the state space of the PBA may become too large to store, let alone apply graph search methods. This issue becomes more pronounced, as the size of the NBA increases. Also, scalability in \cite{vasile2013sampling} relies on the construction of a sparse graph rather than a tree as in our proposed method. However,  sparsity of the graph is lost as the number of samples increases. Moreover, we also compare the proposed control synthesis algorithm to our previous work \cite{kantaros2017sampling} and we show a significant improvement in terms of scalability, due to the fast exploration of the state-space of the PBA as predicted by Proposition \ref{prop:growth}. In the second case study, we consider a motion planning problem with a PBA that has 6,144 states. This state-space is small enough to manipulate and construct an optimal plan using the standard method described in Section \ref{sec:prelim}. In this simulation study, we examine the performance of the proposed algorithm in terms of runtime and optimality. In what follows, we consider discrete uniform distributions for both $f_{\text{rand}}$ and $f_{\text{new}}$ for all iterations $n$ and also we assume that the weights $w_i$ defined in Definition \ref{defn:wTS} represent distance between locations.

\subsection{Case Study I}
In the first simulation study, we consider a team of $N=9$ robots residing in a workspace with $W=9$ regions of interest. The transition system describing the motion of each robot has $|\mathcal{Q}_i|=9$ states and 39 transitions, including self-loops around each state, as shown in Figure \ref{fig:workspace1}. The collaborative task that is assigned to the robots describes an intermittent connectivity problem, that was defined in our previous work \cite{kantaros2016distributedInterm}. Specifically, the robots move along the edges of a mobility graph and communicate only when they meet at the vertices of this graph. The communication network is intermittently connected if communication occurs at the vertices of the mobility graph infinitely often. This intermittent connectivity requirement can be captured by a global LTL formula, which for the case study at hand takes the form $\phi=[\Box\Diamond(\pi_1^{\ell_5}\wedge\pi_2^{\ell_5})]\wedge[\Box\Diamond(\pi_2^{\ell_1}\wedge\pi_3^{\ell_1}\wedge\pi_4^{\ell_1})]\wedge[\Box\Diamond(\pi_4^{\ell_7}\wedge\pi_5^{\ell_7}\wedge\pi_6^{\ell_7})]\wedge[\Box\Diamond(\pi_6^{\ell_8}\wedge\pi_7^{\ell_8})]\wedge[\Box\Diamond(\pi_7^{\ell_4}\wedge\pi_8^{\ell_4})]\wedge[\Box\Diamond(\pi_8^{\ell_3}\wedge\pi_9^{\ell_3})]\wedge [\neg (\pi_1^{\ell_5}\wedge\pi_2^{\ell_5})\mathcal{U}\pi_1^{\ell_7}]$.
In words, (a) robots 1 and 2 need to  meet at location $\ell_5$ infinitely often, (b) robots 2, 3 and 4 need to meet at location $\ell_1$, infinitely often, (c) robots 4, 5, and 6 need to meet at location $\ell_7$, infinitely often, (d) robots 6 and 7 need to meet at location $\ell_8$ infinitely often, (e) robots 7 and 8 need to meet at location $\ell_4$, infinitely often, (f) robots 8 and 9 need to meet at location $\ell_3$, infinitely often, and (g) robots 1 and 2 should never meet at location $\ell_5$ until robot $1$ visits location $\ell_7$ to collect some available information. This LTL formula corresponds to a NBA with $|\mathcal{Q}_B|=8$ states, $|\ccalQ_B^0|=1$, $|\ccalQ_B^F|=1$, and $36$ transitions.\footnote{The translation of the LTL formula to a NBA was made by the tool developed in \cite{gastin2001fast}.} 
\begin{figure}[t]
  \centering
     \subfigure[$\text{wTS}_i$ for Simulation Study I]{
    \label{fig:workspace1}
  \includegraphics[width=0.26\linewidth]{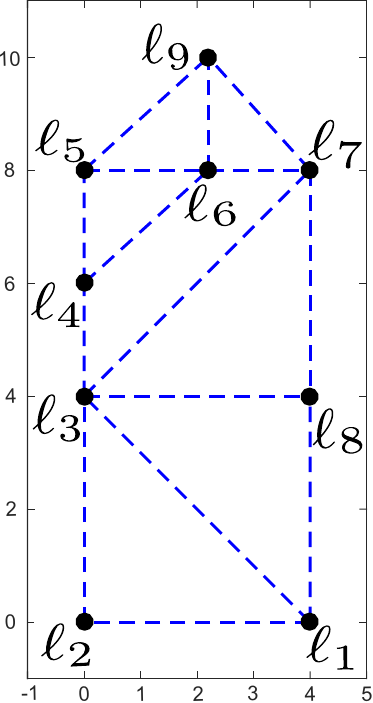}}
  \subfigure[$\text{wTS}_i$ for Simulation Study II]{
    \label{fig:workspace2}
  \includegraphics[width=0.5\linewidth]{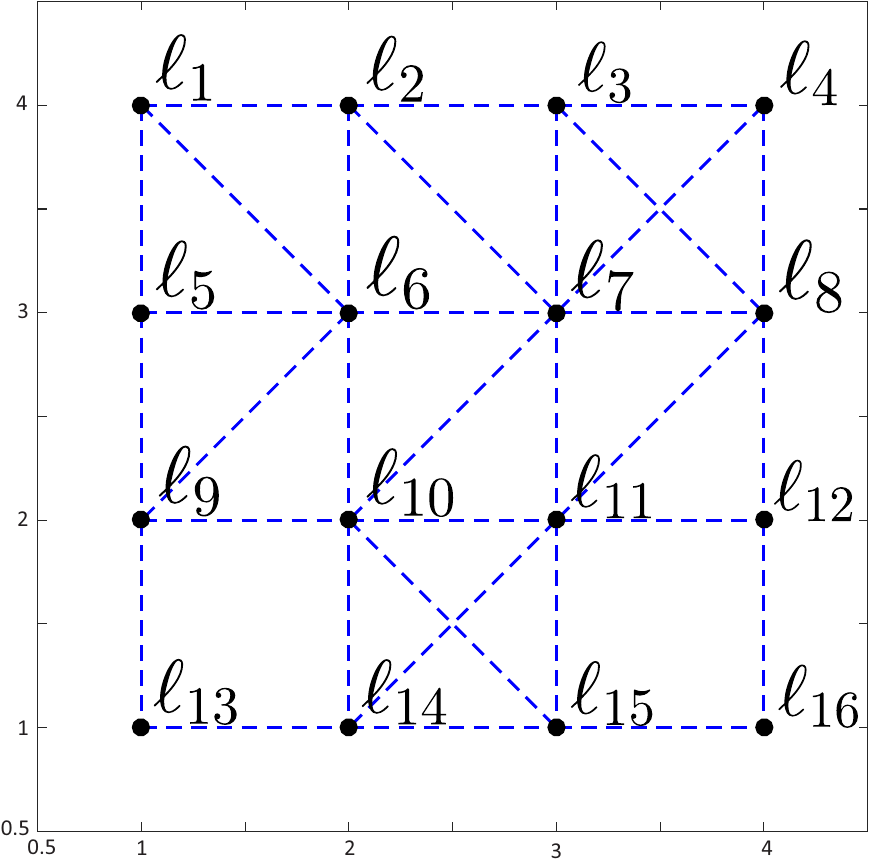}}
  \caption{Graphical depiction of the transition systems $\text{wTS}_i$, for all robots $i$ used in simulation study I (Figure \ref{fig:workspace1}) and II (Figure \ref{fig:workspace2}). Black disks represent the states of $\text{wTS}_i$ and red edges stand for feasible transitions among the states.}
\end{figure}

%

In Algorithm \ref{alg:plans}, we select $n_{\text{max}}^{\text{pre}}=n_{\text{max}}^{\text{suf}}=6500$. The first final state was detected in $13$ minutes. After 6500 iterations that took approximately 30 minutes, $|\ccalP|=11$ final states were detected and a tree $\mathcal{G}_T$ with $|\mathcal{V}_T|=23893$ nodes was constructed. 
%
Figure \ref{fig:rejectSim1} depicts the number of rejected states per iteration $n$ of Algorithm \ref{alg:tree}, i.e., samples $q_P^{\text{new},n}\notin\ccalV_T^n$ that cannot be added to $\ccalV_T^n$ at each iteration $n$, during the construction of the prefix part. Observe in Figure \ref{fig:rejectSim1}, that at every iteration $n$, there is at least one sample $q_P^{\text{new}}$ that either is added to the tree if $q_P^{\text{new},n}\notin\ccalV_T^n$ or enables the rewiring operation if $q_P^{\text{new},n}\in\ccalV_T^n$. Given the detected final states, the construction of the suffix part follows, where the average time to compute each suffix part $\tau_{\text{pre},a}$, $a=\{1,\dots,11\}$ was 17 minutes. Given the prefix and suffix parts, the resulting optimal motion plan that satisfies the considered LTL task was synthesized in less than 1 second and its cost is $J(\tau)=\hat{J}(\tau^{\text{pre}})+\hat{J}(\tau^{\text{suf}})=387.2293+387.2293=774.4586$ meters.
Notice also that storage of each of the constructed trees required approximately only 3MBs while the computation of paths over the trees associated with either the prefix or the suffix part required 0.02 seconds on average rendering our approach resource and computationally efficient.

\begin{figure}[t]
  \centering
     \subfigure[Simulation Study I]{
    \label{fig:rejectSim1}
  \includegraphics[width=0.42\linewidth]{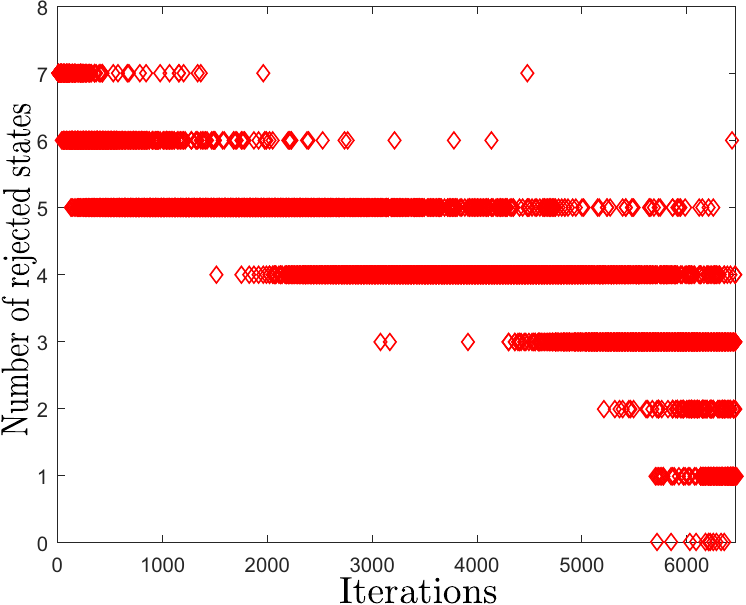}}
  \subfigure[Simulation Study II]{
    \label{fig:rejectSim2}
  \includegraphics[width=0.42\linewidth]{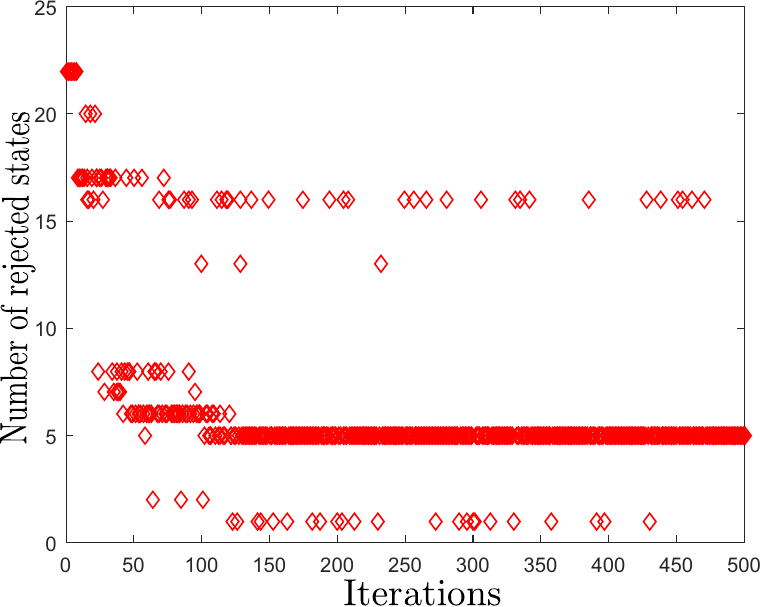}}
  \caption{Graphical depiction of the number of rejected states per iteration $n$ after running Algorithm \ref{alg:tree} for 6500 and 500 iterations for simulation studies I and II, for the synthesis of the prefix part. The resulting trees have 23893 and 3621 nodes, respectively. Red diamonds represent the number of rejected states at each iteration. At iteration $n$ of Algorithm \ref{alg:tree}, a state sampled from $\mathcal{Q}_{\text{PTS}}$ is taken. Given this state, $|\mathcal{Q}_B|$ states that belong to $\mathcal{Q}_P$ are created. Consequently, at iteration $n$ at most $|\mathcal{Q}_B|$ states can be rejected or accepted. Recall that $|\ccalQ_B|=8$ and $|\ccalQ_B|=24$, for simulation studies I and II, respectively.}
\end{figure}

\subsubsection{Comparison with \cite{kantaros2017sampling}}

\begin{figure}[t]
\centering
  \includegraphics[width=0.55\linewidth]{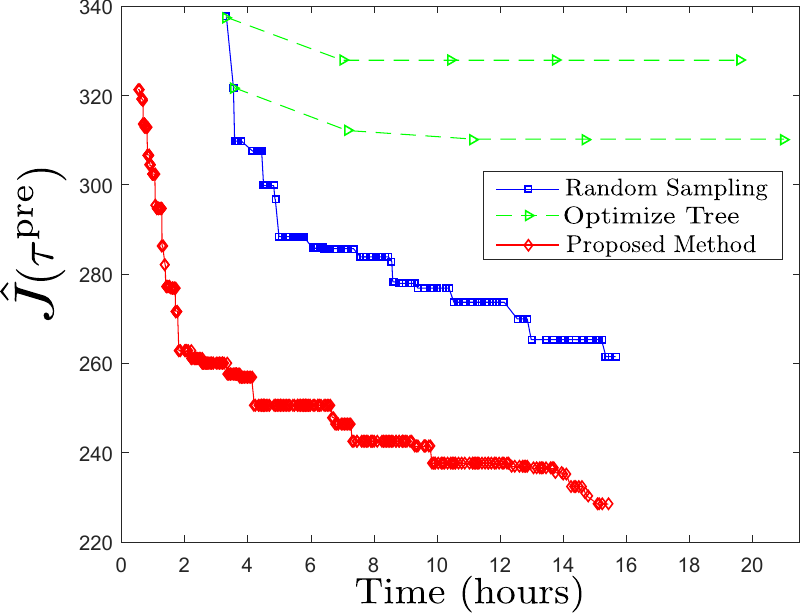}
  \caption{Simulation Study I: Comparison of the average cost of the best prefix part constructed by Algorithm \ref{alg:plans} (red line) and the first part of \cite{kantaros2017sampling} (blue line) with respect to time. The average cost of the best prefix part is reported every time a new final state is detected. Red diamonds and blue squares denote a new final state detected by Algorithm \ref{alg:plans} and \cite{kantaros2017sampling}, respectively. The green dashed lines represent the evolution of the average cost of the best prefix part, when the second part of \cite{kantaros2017sampling} is executed that stops extending the tree and instead it optimizes its structure.}
  \label{fig:compcost}
\end{figure}

\begin{figure}[t]
\centering
\subfigure[]{
  \label{fig:comptrees}
  \includegraphics[width=0.49\linewidth]{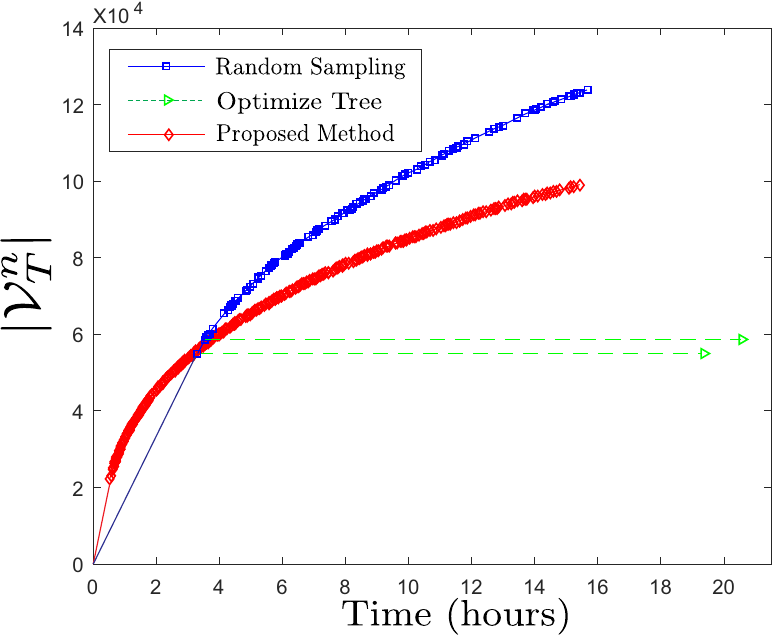}}
  \subfigure[]{
  \label{fig:compTreeIter}
   \includegraphics[width=0.49\linewidth]{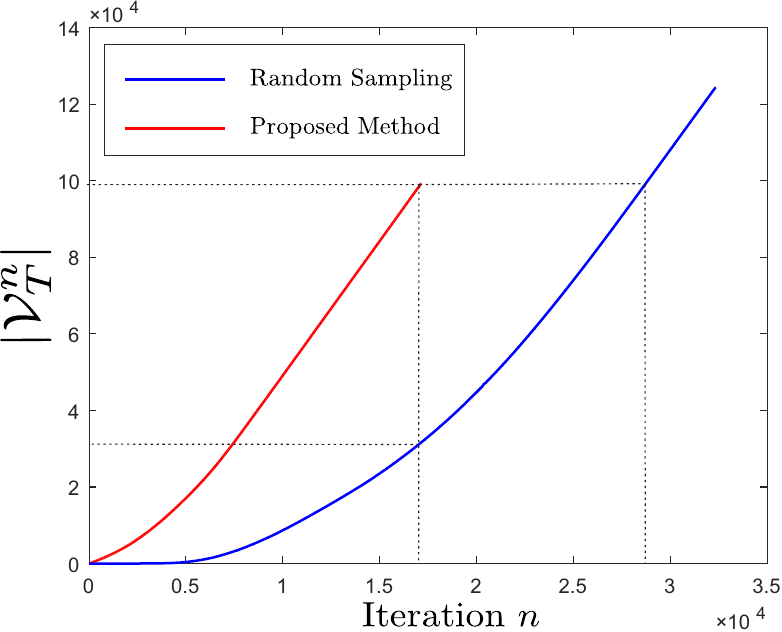}}
  \caption{Simulation Study I: Figure \ref{fig:comptrees} compares the average size of the tree built by Algorithm \ref{alg:plans} (red line) and \cite{kantaros2017sampling} (blue line) with respect to time, during the synthesis of the prefix part. The size of the tree is reported every time a new final state is detected. Red diamonds and blue squares denote a new final state detected by Algorithm \ref{alg:plans} and \cite{kantaros2017sampling}, respectively. The green dashed lines represent the average time that it takes for the second part of \cite{kantaros2017sampling} to optimize the tree structure when the first two final states are detected. Figure \ref{fig:compTreeIter} illustrates the effect of sampling in the growth rate of the trees during the synthesis of the prefix part. The red and the blue lines show the evolution of the size of the tree with respect to iterations $n$ when the proposed method and the first part of \cite{kantaros2017sampling} are executed, respectively.  }
\end{figure}


Next, we compare the proposed sampling approach with our previous work \cite{kantaros2017sampling} in terms of their ability to minimize cost and grow the tree as a function of runtime. Before presenting the comparative results, recall first that \cite{kantaros2017sampling} consists of two parts. In the \textit{first part}, samples are drawn \textit{randomly} from $\ccalQ_P$. If these samples do not already belong to the tree but they are reachable from the tree, then the tree is extended towards them and the rewiring step follows, as in our proposed algorithm. If they are not reachable from the tree, then they are rejected, as in our proposed algorithm too. On the other hand, if these samples already belong to the tree, then they are rejected in \cite{kantaros2017sampling} while in our proposed algorithm, rewiring to these samples follows. A more detailed description of this first part can be found in Algorithm 1 and 7 in \cite{kantaros2017sampling}. The \textit{second part} of \cite{kantaros2017sampling}, that does not exist in our sampling-based algorithm, pertains to the optimization of the tree structure. Specifically, once the first part of \cite{kantaros2017sampling} has been executed for a user-specified number of iterations, then a rewiring-based algorithm follows that minimizes the cost of each node; see Algorithm 5 in \cite{kantaros2017sampling}. This algorithm is terminated once the set of edges of the tree stops changing and it is necessary to obtain asymptotic optimality of the method in \cite{kantaros2017sampling}. 

To compare Algorithm \ref{alg:plans} and \cite{kantaros2017sampling}, we executed Algorithm \ref{alg:plans} and the \textit{first part} of \cite{kantaros2017sampling} three times for a duration of about 15 hours per experiment. Notice that Algorithm \ref{alg:plans} detected 338 final states in 15.42 hours in average while \cite{kantaros2017sampling} found 108 final states in 15.65 hours in average. In every experiment, every time a new final state was detected, we computed the cost of the best prefix part, i.e., the minimum cost among all detected final states, and the size of the constructed tree. The evolution of the average cost with time is illustrated in Figure \ref{fig:compcost} for both algorithms. Observe in Figure \ref{fig:compcost} that the proposed algorithm can find the first final state in 30 minutes in average, while \cite{kantaros2017sampling} can do so in 3.5 hours, approximately. Also, observe in Figure \ref{fig:compcost} that the prefix part generated by \cite{kantaros2017sampling} has always a higher cost than the one synthesized by Algorithm \ref{alg:plans}. The reason is that the first part of \cite{kantaros2017sampling} only \textit{partially} optimizes the tree, as rewiring occurs only for samples that do not already exist in the tree but they are reachable from it. As discussed before, optimization of the tree structure is accomplished by the \textit{second part} of \cite{kantaros2017sampling} which, however, requires additional computational time. Figure \ref{fig:compcost} also shows how the cost of the best prefix part changes with time during the execution of the second part of \cite{kantaros2017sampling}. Observe that when \cite{kantaros2017sampling} has optimized the tree that was built until the detection of the first final state (the last triangle in the top green dashed line in Figure \ref{fig:compcost}), our proposed method has already detected 338 final states and has also constructed a much better prefix part (the last red rhombus in Figure \ref{fig:compcost}), in terms of the cost function $\hat{J}(\tau^{\text{pre}})$ defined in \eqref{eq:cost}.

Figure \ref{fig:comptrees}, shows how the average size of the tree changes with time during the execution of our proposed algorithm and the first part of \cite{kantaros2017sampling}. Observe that \cite{kantaros2017sampling} explores the state-space of the PBA faster than our proposed method, since at every iteration $n$ the first part of \cite{kantaros2017sampling} executes fewer operations due to the way the rewiring step is triggered, as discussed before. The time required to execute the second part of \cite{kantaros2017sampling} that optimizes the tree structure is also depicted in Figure \ref{fig:comptrees}, that shows that \cite{kantaros2017sampling} is much slower than our proposed method in synthesizing optimal plans. 

Finally, in Figure \ref{fig:compTreeIter}, we present the average size of the constructed trees per iteration $n$ during the execution of the proposed algorithm and the first part of \cite{kantaros2017sampling}. Observe that at any given iteration $n$ the proposed method has built a much larger tree than \cite{kantaros2017sampling}. The reason is that in \cite{kantaros2017sampling} samples are taken arbitrarily from the state space $\ccalQ_{\text{PTS}}$ and, therefore, they are not necessarily reachable from the constructed tree and, as a result, they are rejected. On the other hand, here, samples are drawn from reachable sets accelerating the construction of the tree with respect to iterations $n$, as shown in Proposition \ref{prop:growth}.

\subsubsection{Comparison with off-the-shelf model checkers}
Notice that the off-the-shelf model checker PRISM could not verify the considered LTL specification due to memory requirements. Specifically, PRISM could verify only a smaller part of the considered LTL formula that involved 6 robots, which was $\bar{\phi}=[\Box\Diamond(\pi_1^{\ell_5}\wedge\pi_2^{\ell_5})]\wedge[\Box\Diamond(\pi_2^{\ell_1}\wedge\pi_3^{\ell_1}\wedge\pi_4^{\ell_1})]\wedge[\Box\Diamond(\pi_4^{\ell_7}\wedge\pi_5^{\ell_7}\wedge\pi_6^{\ell_7})]$. In this case, the size of the state-space of the PBA was $|\mathcal{Q}_P|=9,765,625$. PRISM finished the model-checking process in 1.5 minutes while our method found a plan within 17 minutes. We also applied NuSMV to this problem that was able to generate a feasible plan within few seconds with cost equal to  $J(\tau)=J(\tau^{\text{pre}})+J(\tau^{\text{suf}})=336.1216+336.1216=672.2431$ meters while our method found a plan with cost $J(\tau)=J(\tau^{\text{pre}})+J(\tau^{\text{suf}})=298.5286+269.6571=568.1857$ meters. Notice that NuSMV can only generate a feasible plan and not the optimal plan, as our proposed algorithm does. The optimal control synthesis method described in Section \ref{sec:prelim} failed to design a plan that satisfies the considered LTL formula and so did the algorithm presented in \cite{kantaros15asilomar} due to excessive memory requirements.
  
\subsection{Case Study II}

In the second simulation study, we consider a team of $N=2$ robots. The transition system describing the motion of each robot is shown in Figure \ref{fig:workspace2}, and has $|\mathcal{Q}_i|=16$ states and $70$ transitions, including self-loops around each state. The assigned task is expressed in the following temporal logic formula: $\phi= \square\Diamond(\pi_1^{\ell_6}\wedge\Diamond(\pi_2^{\ell_{14}}))\wedge\square(\neg\pi_1^{\ell_9})\wedge\square(\pi_2^{\ell_{14}}\rightarrow \bigcirc(\neg\pi_2^{\ell_{14}}\ccalU\pi_1^{\ell_{4}}))\wedge (\Diamond\pi_2^{\ell_{12}}) \wedge (\square\Diamond\pi_2^{\ell_{10}})$
where the respective NBA has $|\mathcal{Q}_B|=24$ states with $|\ccalQ_B^0|=1$, $|\ccalQ_B^F|=4$, and $163$ transitions. In words, this LTL-based task requires (a) robot 1 to visit location $\ell_6$, (b) once (a) is true robot 2 to visit location $\ell_{14}$, (c) conditions (a) and (b) to occur infinitely often, (d) robot 1 to always avoid location $\ell_9$, (e) once robot 2 visits location $\ell_{14}$, it should avoid this area until robot 1 visits location $\ell_{4}$, (f) robot 2 to visit location $\ell_{12}$ eventually, and (g) robot 2 to visit location $\ell_{10}$ infinitely often. In this simulation study, the state space of the PBA consists of $\Pi_{i=1}^N|\mathcal{Q}_i||\mathcal{Q}_B|=6,144$ states, which is small enough so that the method discussed in Section \ref{sec:prelim} can be used to find the optimal plan. The cost of the optimal plan is $J^*=14.6569$ meters. 

Algorithm \ref{alg:plans} was run for various values of the parameters $n_{\text{max}}^{\text{pre}}$ and $n_{\text{max}}^{\text{suf}}$. Observe in Figure \ref{fig:evolvCost} that as we increase $n_{\text{max}}^{\text{pre}}$ and $n_{\text{max}}^{\text{suf}}$, the cost of the resulting plans decreases and eventually the optimal plan is found, as expected due to Theorem \ref{prop:opt}. The number of detected final states and runtime for each case are also depicted in the same figure. PRISM verified that there exists a motion plan that satisfies the considered LTL formula in few seconds and NuSMV in less than 1 second. However, neither of them can synthesize the optimal motion plan that satisfies the considered LTL task. For instance, the cost of the plan generated by NuSMV is $30.8995$ meters while our algorithm can find the optimal plan with cost $J^*=14.6569$, as shown in Figure \ref{fig:evolvCost}. Figure \ref{fig:rejectSim2} depicts the number of rejected states with respect to the iterations $n$ of Algorithm \ref{alg:tree}. Notice that, as in the previous simulation study, at every iteration $n$ there is at least one state that is added to the tree.
\begin{figure}[t]
\centering
  \includegraphics[width=0.6\linewidth]{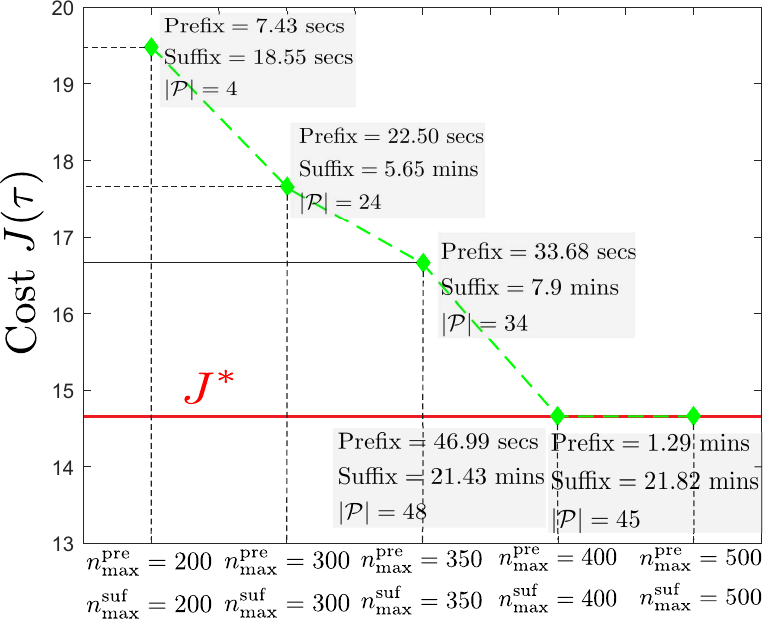}
  \caption{Evolution of the cost $J(\tau)$ of the optimal motion plan $\tau$ for various maximum numbers of iterations, $n_{\text{max}}^{\text{pre}}$ and $n_{\text{max}}^{\text{suf}}$, for Algorithm \ref{alg:tree}. The time required for the construction of the optimal prefix and suffix part along with the number of detected final states for each case are included in the gray colored box. The red line denotes the optimal cost $J^*=14.6569$.}
  \label{fig:evolvCost}
\end{figure} 
    
\section{Conclusion}\label{sec:concl}
In this paper we proposed a sampling-based control synthesis algorithm for multi-robot systems under global linear temporal logic (LTL) formulas. Existing planning approaches under global temporal goals rely on graph search techniques applied to a synchronous product automaton constructed among the robots. In this paper, we proposed a new sampling-based algorithm to build incrementally trees that approximated the state-space and transitions of the synchronous product automaton increasing in this way significantly scalability of our method compared to existing model-checking approaches. Moreover, we showed that the proposed algorithm is probabilistically complete and asymptotically optimal. Finally, we presented numerical experiments that show that it can be used to synthesize optimal plans from product automata with billions of states, which was not possible using standard optimal control synthesis algorithms or off-the-shelf model checkers.

\appendices
\section{Proofs of Lemmas}\label{sec:lemmas1}

\subsection{Proof of Lemma \ref{lem:qrand}}
The proof of this results relies on Lemma \ref{lem:bc}. Let $A^{\text{rand},n+k}(q_P)=\{q_P^{\text{rand},n+k}=q_P\}$, with $k\in\mathbb{N}$, denote the event that at iteration $n+k$ of Algorithm \ref{alg:tree} the state $q_P\in\ccalV_T^n$ is selected by the function \texttt{Sample} to be the node $q_P^{\text{rand},n+k}$ [line \ref{s:line2}, Alg. \ref{alg:sample}]. Also, let $\mathbb{P}(A^{\text{rand},n+k}(q_P))$ denote the probability of this event, i.e., $\mathbb{P}(A^{\text{rand},n+k}(q_P))=f_{\text{rand}}(q_P|\ccalV_T^{n+k})$. 

Next, define the infinite sequence of events $A^{\text{rand}}=\{A^{\text{rand},n+k}(q_P)\}_{k=0}^{\infty}$, for a given node $q_P\in\ccalV_T^n$. In what follows, we show that the series $\sum_{k=0}^{\infty}\mathbb{P}(A^{\text{rand},n+k}(q_P))$ diverges and then we complete the proof by applying Lemma \ref{lem:bc}. 
Recall first that the size of $\ccalV_T^{n+k}$ cannot grow arbitrarily large, since it holds that $|\ccalV_T^{n+k}|\leq|\ccalQ_P|<\infty$, for all $k\in\mathbb{N}$. Also, by Assumption \ref{frand}(ii), we have that for a given $q_P\in\ccalV_T^n$, the probability $f_{\text{rand}}(q_P|\ccalV_T^{n+k})$ decreases monotonically with respect to $|\ccalV_T^{n+k}|$. From these two observations we deduce that
\begin{equation}\label{ineq1}
\mathbb{P}(A^{\text{rand},n+k}(q_P))=f_{\text{rand}}(q_P|\ccalV_T^{n+k})\geq f_{\text{rand}}(q_P|\ccalQ_P), 
\end{equation} 
for all $k\in\mathbb{N}$, where $f_{\text{rand}}(q_P|\ccalQ_P)$ is the probability assigned to selecting the state $q_P\in\ccalV_T^n$ as $q_P^{\text{rand},n+k}$ when $\ccalV_T^{n+k}=\ccalQ_P$. Note that $f_{\text{rand}}(q_P|\ccalQ_P)$ is a strictly positive term due to Assumption \ref{frand}(i). Also, $f_{\text{rand}}(q_P|\ccalQ_P)$ is constant, by Assumption \ref{frand}(ii), since $\ccalQ_P$ is a fixed set.

Next, since \eqref{ineq1} holds for all $k\in\mathbb{N}$, we have that 
\begin{equation}\label{ineq2}
\sum_{k=0}^{\infty}\mathbb{P}(A^{\text{rand},n+k}(q_P))\geq\sum_{k=0}^{\infty}f_{\text{rand}}(q_P|\ccalQ_P).
\end{equation}
Since for any state $q_P\in\ccalQ_P$, we have that $f_{\text{rand}}(q_P|\ccalQ_P)$ is a strictly positive constant term, the infinite $\sum_{k=0}^{\infty}f_{\text{rand}}(q_P|\ccalQ_P)$ diverges. Then, 
we conclude that 
\begin{equation}\label{res1}
\sum_{k=0}^{\infty}\mathbb{P}(A^{\text{rand},n+k}(q_P))=\infty.
\end{equation}

Combining \eqref{res1} and the fact that the events $A^{\text{rand},n+k}(q_P)$ are independent by Assumption \ref{frand}(iii), we get that $\mathbb{P}(\limsup_{k\to\infty} A^{\text{rand},n+k}(q_P))=1,$ due to Lemma \ref{lem:bc}. In other words, the events $A^{\text{rand},n+k}(q_P)$ occur infinitely often, for all $q_P\in\ccalV_T^n$. This equivalently means that for every node $q_P\in\ccalV_T^n$, for all $n\in\mathbb{N}_{+}$, there exists an infinite subsequence $\mathcal{K}\subseteq \mathbb{N}$ so that for all $k\in\mathcal{K}$ it holds $q_P^{\text{rand},n+k}=q_P$, completing the proof.\footnote{Note that the subsequence $\ccalK$ is different across the nodes $q_P\in\ccalV_T^n$.}

\begin{rem}[Lemma \ref{lem:qrand}]
The result shown in Lemma \ref{lem:qrand} holds even if Assumption \ref{frand}(ii) does not hold, i.e., if the density function $f_{\text{rand}}(q_P|\ccalV_T^n)$ changes every iteration $n$ for a fixed set $\ccalV_T^n$ and a fixed node $q_P\in\ccalV_T^n$, and even if it does not decrease monotonically with the cardinality of $\ccalV_T^n$,
as long as this varying density function $f_{\text{rand}}^n(q_P|\ccalV_T^n)$ is bounded below by a sequence $g^{n}(q_P|\ccalV_T^n)$, such that $\sum_{n=1}^{\infty}g^n(q_P|\ccalV_T^n)=\infty$, for all $q_P\in\ccalV_T^n$. This will ensure that $\sum_{k=0}^{\infty}\mathbb{P}(A^{\text{rand},n+k}(q_P))=\sum_{k=0}^{\infty} f_{\text{rand}}^{n+k}(q_P|\ccalV_T^{n+k})\geq\sum_{k=0}^{\infty}g^{n+k}(q_P|\ccalV_T^{n+k})=\infty$, which replaces \eqref{ineq2} and still yields \eqref{res1}.
\label{rem:lem1}
\end{rem}

\subsection{Proof of Lemma \ref{lem:qnew}}
This proof relies on Lemma \ref{lem:bc} and resembles the proof of Lemma \ref{lem:qrand}. Let $q_P^{\text{rand},n}\in\ccalV_T^n$ and define the infinite sequence of events $A^{\text{new}}=\{A^{\text{new},n+k}(q_{\text{PTS}})\}_{k=0}^{\infty}$, for any given state $q_{\text{PTS}}\in\ccalR_{\text{PTS}}(q_{\text{PTS}}^{\text{rand},n})$, where $A^{\text{new},n+k}(q_{\text{PTS}})=\{q_{\text{PTS}}^{\text{new},n+k}=q_{\text{PTS}}\}$, for $k\in\mathbb{N}$, denotes the event that at iteration $n+k$ of Algorithm \ref{alg:tree} the state $q_{\text{PTS}}\in\ccalR_{\text{PTS}}(q_{\text{PTS}}^{\text{rand},n})$ is selected by the function \texttt{Sample} to be the node $q_{\text{PTS}}^{\text{new},n+k}$ [line \ref{s:line5}, Alg. \ref{alg:sample}], given a state $q_P^{\text{rand},n+k}=(q_{\text{PTS}}^{\text{rand},n+k},q_B^{\text{rand},n+k})\in\ccalV_T^{n+k}$.\footnote{Recall that the reachable set $\ccalR_{\text{PTS}}(q_{\text{PTS}}^{\text{rand},n})$ defined in \eqref{reachable1} remains the same for all iterations $n$, for a given state
$q_{\text{PTS}}^{\text{rand},n}$.} Moreover, let $\mathbb{P}(A^{\text{new},n+k}(q_{\text{PTS}}))$ denote the probability of this event, i.e., $\mathbb{P}(A^{\text{new},n+k}(q_{\text{PTS}}))=f_{\text{new}}(q_{\text{PTS}}|q_{\text{PTS}}^{\text{rand},n+k})$. 

Now, consider those iterations $n+k$ with $k\in\mathcal{K}$ such that $q_P^{\text{rand},n+k}=q_P^{\text{rand},n}$ by Lemma \ref{lem:qrand}. We will show that the series $\sum_{k\in\ccalK}\mathbb{P}(A^{\text{new},n+k}(q_{\text{PTS}}))$ diverges and then we will use Lemma \ref{lem:bc} to show that $q_{\text{PTS}}\in\ccalR_{\text{PTS}}(q_{\text{PTS}}^{\text{rand},n+k})$ will be selected infinitely often to be node $q_{\text{PTS}}^{\text{new},n+k}$.

Since $q_{\text{PTS}}\in\ccalR_{\text{PTS}}(q_{\text{PTS}}^{\text{rand},n+k})$ we have that $\mathbb{P}(A^{\text{new},n+k}(q_{\text{PTS}}))=f_{\text{new}}(q_{\text{PTS}}|q_{\text{PTS}}^{\text{rand},n+k})$ is a strictly positive constant for all $k\in\ccalK$, by Assumption \ref{fnew}(i) and \ref{fnew}(ii). 
Therefore, we have that $\sum_{k\in\ccalK}\mathbb{P}(A^{\text{new},n+k}(q_{\text{PTS}}))$ diverges, since it is an infinite sum of a strictly positive constant term. Using this result along with the fact that the events $A^{\text{new},n+k}(q_{\text{PTS}})$ are independent, by Assumption \ref{fnew}(iii), we get that $\mathbb{P}(\limsup_{k\to\infty} A^{\text{new},n+k}(q_{\text{PTS}}))=1,$ 
due to Lemma \ref{lem:bc}. In words, this means that the events $A^{\text{new},n+k}(q_{\text{PTS}})$ for $k\in\mathcal{K}$ occur infinitely often. Thus, for every node $q_{\text{PTS}}\in\ccalR_{\text{PTS}}(q_{\text{PTS}}^{\text{rand},n})$ for all $n\in\mathbb{N}_{+}$, there exists an infinite subsequence $\mathcal{K}' \subseteq \mathcal{K}$ so that for all $k\in\mathcal{K}'$ it holds $q_{\text{PTS}}^{\text{new},n+k}=q_{\text{PTS}}$, completing the proof.

\begin{rem}[Lemma \ref{lem:qnew}]
Lemma \ref{lem:qnew} holds even if Assumption \ref{fnew}(ii) does not hold, i.e., if for any given node $q_P^{\text{rand},n}\in\ccalV_T^n$, the density function $f_{\text{new}}$ changes with iterations $n+k$, where $k\in\ccalK$, for which $q_P^{\text{rand},n}=q_P^{\text{rand},n+k}$, as long as it is bounded below by a sequence $h^{n+k}(q_{\text{PTS}}|q_{\text{PTS}}^{\text{rand},n})$, for all $k\in\ccalK$, such that $\sum_{k\in\ccalK}h^{n+k}(q_{\text{PTS}}|q_{\text{PTS}}^{\text{rand},n})=\infty$, for all $q_{\text{PTS}}\in\ccalR_{\text{PTS}}(q_{\text{PTS}}^{\text{rand},n})$. 
\label{rem:lem2}
\end{rem}

\subsection{Proof of Corollary \ref{cor:qPnew}}
Recall that a state $q_P=(q_{\text{PTS}},q_B)$ belongs to $\ccalR_P(q_P^{\text{rand},n})$ if $q_P^{\text{rand},n}\rightarrow_P q_P$, i.e., if (i) $q_{\text{PTS}}^{\text{rand},n}\rightarrow_{\text{PTS}}q_{\text{PTS}}$ and (ii) $q_B^{\text{rand},n}\stackrel{L(q_{\text{PTS}}^{\text{rand},n})}{\longrightarrow_B}q_B$. Then, to prove this result, it suffices to show that all states $q_P$ that satisfy both conditions (i) and (ii) are sampled infinitely often, which is a direct result from Lemma \ref{lem:qnew}.

Specifically, observe first that, due to Lemma \ref{lem:qnew}, for all states $q_{\text{PTS}}\in\ccalR_{\text{PTS}}(q_{\text{PTS}}^{\text{rand},n})$, i.e., for all states that satisfy condition (i), there exists an infinite number of iterations $n+k$, at which they will selected to be the nodes $q_{\text{PTS}}^{\text{new},n+k}$ with probability 1, for all $n\in\mathbb{N}$. Second, given a state $q_{\text{PTS}}^{\text{new},n}\in\ccalR_{\text{PTS}}(q_{\text{PTS}}^{\text{rand},n})$, the states $q_P^{\text{new},n}=(q_{\text{PTS}}^{\text{new},n},\ccalQ_B(b))$, for all $b\in\{1,\dots,|\ccalQ_B|\}$ are created, by construction of Algorithm \ref{alg:tree}. Therefore, given a state $q_{\text{PTS}}^{\text{new},n}$, if there exists $b\in\{1,\dots,|\ccalQ_B|\}$ such that $q_B^{\text{rand},n}\stackrel{L(q_{\text{PTS}}^{\text{rand},n})}{\rightarrow_B}\ccalQ_B(b)$, then the state $q_P^{\text{new},n}=(q_{\text{PTS}}^{\text{new},n},\ccalQ_B(b))\in\ccalR_P(q_P^{\text{rand},n})$ that satisfies (ii) will be sampled/constructed infinitely often.\footnote{Recall that the reachable set $\ccalR_{P}(q_{P}^{\text{rand},n})$ defined in \eqref{reachable1} remains the same for all iterations $n$, for a given state $q_{P}^{\text{rand},n}$.} Thus, for all states $q_{P}\in\ccalR_{P}(q_{P}^{\text{rand},n})$ there exists an infinite number of iterations $n+k$, with $k\in\ccalK'\subseteq\ccalK$, at which they will be selected by Algorithm \ref{alg:sample} to be the node $q_{P}^{\text{new},n+k}$ completing the proof.

\subsection{Proof of Lemma \ref{lem:reach1}}
First note that \eqref{eq:prob2} trivially holds for all states $q_P^{\text{rand},n}$ that satisfy $\ccalR_P(q_P^{\text{rand},n})=\emptyset$. Hence, in what follows we consider only states $q_P^{\text{rand},n}\in\ccalV_T^n$ that satisfy $\ccalR_P(q_P^{\text{rand},n})\neq\emptyset$. The proof of this result relies on Corollary \ref{cor:qPnew}. Specifically, recall that, due to Corollary \ref{cor:qPnew}, for all states $q_P\in\ccalR_{P}(q_P^{\text{rand},n})$, there exists an infinite number of iterations $n+k$, $k\in\ccalK'\subseteq\ccalK$, such that $q_P^{\text{new},n+k}=q_P$. This means that with probability 1, there exists an iteration $n+k$ of Algorithm \ref{alg:tree} at which the state $q_P^{\text{new},n+k}=q_P$, will be sampled. 

Since this iteration $n+k$ satisfies $q_P^{\text{new},n+k}\in\mathcal{R}_P(q_P^{\text{rand},n})$, we get that $q_P^{\text{rand},n}\in\ccalR^{\rightarrow}_{\ccalV_T^{n+k}}(q_P^{\text{new},n+k})\subseteq\ccalV_T^{n+k}$. This follows from the definition of $\ccalR^{\rightarrow}_{\ccalV_T^{n+k}}(q_P^{\text{new},n+k})$ in \eqref{eq:Stonew} and the fact that since $q_P^{\text{rand},n}$ belongs to $\ccalV_T^n$ it also belongs to $\ccalV_T^{n+k}$. 
Therefore, $\ccalR^{\rightarrow}_{\ccalV_T^{n+k}}(q_P^{\text{new},n+k})\neq\emptyset$, which means that the state $q_P^{\text{new},n+k}=q_P\in\ccalR_P(q_P^{\text{rand},n})$ will be added to the tree at iteration $n+k$ by construction of Algorithm \ref{alg:extend}; see line \ref{alg3:line2} in Algorithm \ref{alg:extend}. We conclude, that for all states $q_P\in\ccalR_{P}(q_P^{\text{rand},n})$ there exists a subsequent iteration $n+k$ at which they will be added to the tree with probability 1.\footnote{If the states $q_P\in\ccalR_{P}(q_P^{\text{rand},n})$ already belong to the tree then the rewiring step follows.} In mathematical terms, this result can be written as $\lim_{k\rightarrow\infty} \mathbb{P}\left(\{\mathcal{R}_P(q_P^{\text{rand},n})\subseteq\mathcal{V}_T^{n+k}\}\right)=1$ completing the proof. 

\subsection{Proof of Corollary \ref{cor:reach2}}
This result is due to Lemmas \ref{lem:qrand} and \ref{lem:reach1}. Specifically, by Lemma \ref{lem:reach1}, we have that 
\begin{equation}\label{eq:temp1}
\lim_{k\rightarrow\infty} \mathbb{P}\left(\{\mathcal{R}_P(q_P^{\text{rand},n})\subseteq\mathcal{V}_T^{n+k}\}\right)=1,
\end{equation}
for a given state $q_P^{\text{rand},n}\in\ccalV_T^n\subseteq\ccalV_T^{n+k}$ and any iteration $n\in\mathbb{N}_{+}$. Also, by Lemma \ref{lem:qrand}, we have that every state $q_P\in\ccalV_T^n$ will be selected infinitely often to be the node $q_P^{\text{rand},n+k}$, as $k\to\infty$. Therefore, we get that \eqref{eq:temp1} holds for all states $q_P\in\ccalV_T^n$, i.e., 
%
\begin{equation}\label{eq:tempR}
\lim_{k\rightarrow\infty} \mathbb{P}\left(\{\mathcal{R}_P(q_P)\subseteq\mathcal{V}_T^{n+k}\}\right)=1,~\forall q_P\in\ccalV_T^n.
\end{equation}
Since \eqref{eq:tempR} holds for any iteration $n\in\mathbb{N}_{+}$, we can rewrite \eqref{eq:tempR} as $\lim_{n\rightarrow\infty} \mathbb{P}\left(\{\mathcal{R}_P(q_P)\subseteq\mathcal{V}_T^{n}\}\right)=1,~\forall q_P\in\ccalV_T^n$ completing the proof.
%

\subsection{Proof of Corollary \ref{cor:qPnew2}}
The proof relies on Corollary \ref{cor:qPnew} and Lemma \ref{lem:qrand}. From Lemma \ref{lem:qrand}, we have that for every state $q_P'\in\ccalV_T^n$ there exists an infinite number of iterations $n+k$, with $k\in\ccalK$, at which the state $q_{P}'$ is selected by Algorithm \ref{alg:sample} to be the node $q_{P}^{\text{rand},n+k}$. Also, for any iteration $n$ and for any state $q_P^{\text{rand},n}$, we know from Corollary \ref{cor:qPnew} that for every state $q_{P}\in\ccalR_{P}(q_{P}^{\text{rand},n})$ there exists an infinite number of subsequent iterations $n+k$, with $k\in\ccalK'\subseteq\ccalK$, at which the state $q_{P}$ is selected by Algorithm \ref{alg:sample} to be the node $q_{P}^{\text{new},n+k}$, given a node  $q_P^{\text{rand},n}$.  Combining these two results, we get that for every state $q_{P}\in\ccalR_{P}(q_{P}')$, and for all $q_P'\in\ccalV_T^n$, there exists an infinite number of subsequent iterations $n+k$, with $k\in\ccalK'$, at which the state $q_{P}$ is selected to be the node $q_{P}^{\text{new},n+k}$. Equivalently, this means that for every state $q_P\in\cup_{q_P'\in\ccalV_T^n}\ccalR_{P}(q_P')$ there exists an infinite number of iterations $n+k$, with $k\in\ccalK'$, at which the state $q_{P}$, is selected to be the node $q_{P}^{\text{new},n+k}$. Then, it suffices to show that the set $\ccalV_T^n$ is a subset of the set $\cup_{q_P'\in\ccalV_T^n}\ccalR_{P}(q_P')$. This would mean that for any state $q_P\in\ccalV_T^n$, there exists and infinite number of iterations $n+k$, with $k\in\ccalK'$, at which the state $q_{P}\in\ccalV_T^n$, is selected by Algorithm \ref{alg:sample} to be the node $q_{P}^{\text{new},n+k}$.


To show that $\ccalV_T^n\subseteq\cup_{q_P'\in\ccalV_T^n}\ccalR_{P}(q_P')$, we will show that if $q\in\ccalV_T^n$ then $q\in\cup_{q_P'\in\ccalV_T^n}\ccalR_{P}(q_P')$. Consider a node $q\in\ccalV_T^n$. Then, by construction of the tree, we have that there exists another node $q'\in\ccalV_T^n$, such that $\texttt{parent}(q)=q'\in\ccalV_T^n$, which means $q\in\ccalR_P(q')$. Observe that $\ccalR_P(q')\subseteq\cup_{q_P'\in\ccalV_T^n}\ccalR_{P}(q_P')$, since $q'\in\ccalV_T^n$ by assumption. Therefore, we have that $q\in\ccalR_P(q')\subseteq\cup_{q_P'\in\ccalV_T^n}\ccalR_{P}(q_P')$, i.e., $q\in\cup_{q_P'\in\ccalV_T^n}\ccalR_{P}(q_P')$ completing the proof.
%

\bibliographystyle{IEEEtran}
\bibliography{YK_bib}
\end{document}